\def\bzero{\mathbf{0}}
\def \0{\boldsymbol{0}}
\def \v{\boldsymbol{v}}
\def \b{\boldsymbol{b}}
\def \A{\mathbf{A}}
\def \B{\mathbf{B}}
\def \C{\mathbf{C}}
\def \E{\mathbf{E}}
\def \G{\mathbf{G}}
\def \H{\mathbf{H}}
\def \O{\mathbf{O}}
\def \Q{\mathbf{Q}}
\def \R{\mathbf{R}}
\def \T{\mathbf{T}}
\def \U{\mathbf{U}}
\def \V{\mathbf{V}}
\def \V{\mathbf{V}}
\def \F{\mathbf{F}}
\def \W{\mathbf{W}}
\def \X{\mathbf{X}}
\def \Y{\mathbf{Y}}
\def \Z{\mathbf{Z}}
\def \bSigma{\mathbf{\Sigma}}
\def \bOmega{\mathbf{\Omega}}
\def\RR{\mathbb{R}}
\def\scrO{{\mathscr  O}}
\def\calV{{\cal  V}}
\def\calS{{\cal S}}
\def \EE {\mathbb{E}}
\def \PP {\mathbb{P}}
\newtheorem{assump}{Assumption}
\newtheorem{theorem}{Theorem}
\newtheorem{lemma}[theorem]{Lemma} 
\newtheorem{proposition}[theorem]{Proposition} 
\newtheorem{remark}[theorem]{Remark}
\title{Multi-source Learning via Completion of Block-wise Overlapping Noisy Matrices}
\author{ Doudou Zhou\footnote{douzh@ucdavis.edu, Department of Statistics, University of California, Davis.},\quad Tianxi Cai\footnote{tcai@hsph.harvard.edu, Department of Biostatistics, Harvard T.H. Chan School of Public Health.},\quad and Junwei Lu\footnote{junweilu@hsph.harvard.edu, Department of Biostatistics, Harvard T.H. Chan School of Public Health.}}
\begin{document}
\maketitle

\begin{abstract}
Matrix completion has attracted attention in many fields, including statistics, applied mathematics, and electrical engineering. Most of the works focus on the independent sampling models under which the observed entries are sampled independently. Motivated by applications in the integration of knowledge graphs derived from multi-source biomedical data such as those from Electronic Health Records (EHR) and biomedical text, we propose the {\bf B}lock-wise {\bf O}verlapping {\bf N}oisy {\bf M}atrix {\bf I}ntegration (BONMI) to treat blockwise missingness of symmetric matrices representing relatedness between entity pairs. Our idea is to exploit the orthogonal Procrustes problem to align the eigenspace of the two sub-matrices, then complete the missing blocks by the inner product of the two low-rank components. Besides, we prove the statistical rate for the eigenspace of the underlying matrix, which is comparable to the rate under the independently missing assumption. Simulation studies show that the method performs well under a variety of configurations. In the real data analysis, the method is applied to two tasks: (i) the integrating of several point-wise mutual information matrices built by English EHR and Chinese medical text data, and (ii) the machine translation between English and Chinese medical concepts. Our method shows an advantage over existing methods.
\end{abstract}
\noindent {\bf Keywords:} Low-rank matrix, matrix completion, singular value decomposition, transfer learning.

\section{Introduction}

Matrix completion aims to recover a low-rank matrix given a subset of its entries which may be corrupted by noise \citep{keshavan2010matrix, candes2009exact}. It has received considerable attention due to the diverse applications such as collaborative filtering \citep{hu2008collaborative, rennie2005fast}, recommendation systems \citep{koren2009matrix}, phase retrieval \citep{candes2015phase}, localization in  internet of things networks \citep{pal2010localization, delamo2015designing, hackmann2013cyber}, principal component regression \citep{jolliffe1982note}, and computer vision \citep{chen2004recovering}. An interesting application of matrix completion is to enable integration of knowledge graphs from multiple data sources with overlapping but non-identical nodes. For example, neural word embeddings algorithms \citep{levy2014neural} have enabled generation of powerful word embeddings based on singular value decompositions (SVDs) of a pointwise mutual information (PMI) matrix. When there are multiple data sources corresponding to different corpus, the PMI matrices associated with different corpora (e.g., text from different languages) are overlapping for words that can be mapped across multiple corpus via existing dictionaries. Matrix completion methods can be used to recover the PMI of all words by combining information from these overlapping corpus. Word embeddings derived from the recovered PMI can subsequently be used to translate words from different corpora.

Much recent progress has been made to efficiently complete large scale low rank matrices, especially under uniform sampling when the observed entries are independently and uniformly sampled \citep[e.g.]{keshavan2010matrix,  chen2015fast, candes2010matrix, candes2010power, mazumder2010spectral,chen2015incoherence, keshavan2010matrix, chen2015fast, zheng2016convergence}. Under noiseless and uniform sampling settings, nuclear norm minimization algorithms  \citep{fazel2002matrix, candes2009exact} and singular value thresholding algorithms  \citep{cai2010singular,tanner2013normalized, combettes2011proximal, meka2009guaranteed} have been proposed. To complete a low-rank matrix given only partial and corrupted entries under uniform sampling, greedy algorithms have been shown as effective. As reviewed in \cite{nguyen2019low}, examples of such algorithms include Frobenius norm minimization \citep{lee2010admira}, alternative minimization \citep{haldar2009rank, tanner2016low, wen2012solving}, optimization over smooth Riemannian manifold \citep{vandereycken2013low} and stochastic gradient
descent \citep{koren2009matrix, takacs2007major, paterek2007improving,sun2016guaranteed, ge2016matrix,ge2017no, du2017gradient, ma2018implicit}.

When the observed entries are sampled independently but not uniformly, for example, in the Netflix problem,  \cite{salakhutdinov2010collaborative} proposed a weighted version that performs better than the aforementioned algorithms. The algorithm was further generalized to arbitrary unknown 
sampling distributions with rigorous theoretical guarantees by \cite{foygel2011learning}. \cite{cai2016matrix} proposed a max-norm constrained empirical risk minimization method, which was proved to be minimax rate-optimal with respect to the sampling distributions. Other methods were also proposed such as nuclear-norm penalized estimators \citep{klopp2014noisy}  and max-norm optimization \citep{fang2018max}.

The most important assumption required by most of the existing matrix completion literature is that the observed entries are sampled independently, whether uniformly or not. However, this assumption typically fails to hold for applications that arise from data integration where the missing patterns are often block-wise. Examples of block-wise missingness include integrating multiple genomic studies with different coverage of genomic features \citep{cai2016structured} and combining multiple PMI matrices from multiple corpora in machine translation as discussed above. An illustration of the general patterns of the entrywise missing and block-wise missing mechanisms is presented in Figure \ref{fig: random}. 
\begin{figure}[!ht]
\begin{subfigure}{0.45\textwidth}
\centering
\includegraphics[width=0.6\linewidth, height=0.6\linewidth]{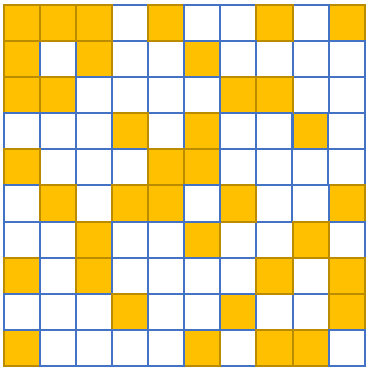} 
\caption{The entrywise missing}
\label{fig:11}
\end{subfigure}
\begin{subfigure}{0.45\textwidth}
\centering
\includegraphics[width=0.6\linewidth, height=0.6\linewidth]{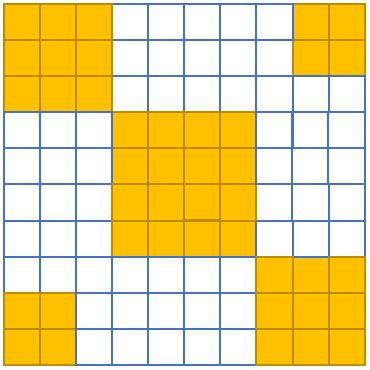}
\caption{The block-wise missing}
\label{fig:12}
\end{subfigure}

\caption{The entrywise missing and block-wise missing patterns in a $10 \times 10$ matrix, where the yellow entries are observed and the white entries are missing.}
\label{fig: random}
\end{figure}

Existing matrix completion methods that can incorporate block-wise missingness largely fall into two categories: (i) imputing the missing blocks for downstream analyses such as prediction \citep{xue2020integrating} and principal component analysis (PCA) \citep{cai2016structured, kxy052}; (ii) utilizing the missing structure for  downstream tasks such as classification \citep{yuan2012multi, xiang2014bi} and prediction \citep{yu2020optimal} without performing imputation. For example, \cite{cai2016structured} proposed a structured matrix completion (SMC) algorithm that leverages the approximate low rank structure to efficiently recover the missing off-diagonal sub-matrix. However, the SMC algorithm considers  a noiseless scenario and does not allow for multi-block missingness structure, which is ubiquitous in the integrative analysis of multi-source or multi-view data. \cite{kxy052} designed an iterative algorithm for the simultaneous dimension reduction and imputation of the data in one or more sources that may be completely unobserved for a sample. However, this approach requires data to be generated from exponential families and no theoretical justifications were provided. Other existing methods mainly focus on the downstream tasks such as classification and feature selection rather than the estimation of the missing blocks and some require additional information such as the class labels \citep{yuan2012multi, xiang2014bi} or some response variable \citep{yu2020optimal} 
of the observations, which is not available in some cases.

To overcome these challenges, we propose the {\bf B}lock-wise {\bf O}verlapping {\bf N}oisy {\bf M}atrix {\bf I}ntegration (BONMI) method under the assumption that the observed entries consist of multiple sub-matrices by sampling rows and columns independently from an underlying low-rank matrix. To be specific, let $\W^\ast$ be the underlying symmetric low-rank matrix. For each source, we observe a principal sub-matrix of $\W^\ast$ with noise, where each row (and the corresponding columns) of the principal sub-matrix is sampled independently with probability $p_0$ from $\W^\ast$. The goal is to estimate the eigenspace of $\W^\ast$. Our idea connects to the orthogonal Procrustes problem \citep{gower2004procrustes, schonemann1966varisim, gower1975generalized}, which has been widely used to align embeddings across languages in the machine translation \citep{kementchedjhieva-etal-2018-generalizing, smith2017offline, conneau2017word, sogaard-etal-2018-limitations, xing-etal-2015-normalized}. We use an orthogonal transformation to align the eigenspace of the two sub-matrices through their overlap, then complete the missing blocks by the inner products of the two low-rank components. 
Moreover, we generalize our method to the multiple sources scenario by applying the method to each pair of the sub-matrices. Since our algorithm operates on matrices from any two sources, it is suitable for parallel computing.

Two of the closest works to our paper are \cite{cai2016structured} and \cite{xue2020integrating}. However, BONMI is different from SMC \citep{cai2016structured} in at least three ways. First, BONMI can complete multiple missing blocks, while SMC only treats one off-diagonal missing block. Even in the case of two sources, BONMI shows the obvious difference from SMC \citep{cai2016structured} by using the $3 \times 3$ blocks structure where SMC uses a $2 \times 2$ blocks structure based on the Schur complement as illustrated in Figure \ref{fig:BONMI}.
\begin{figure}[t]
\centering
\begin{subfigure}{0.45\textwidth}
\includegraphics[width=0.9\linewidth, height=0.75\linewidth]{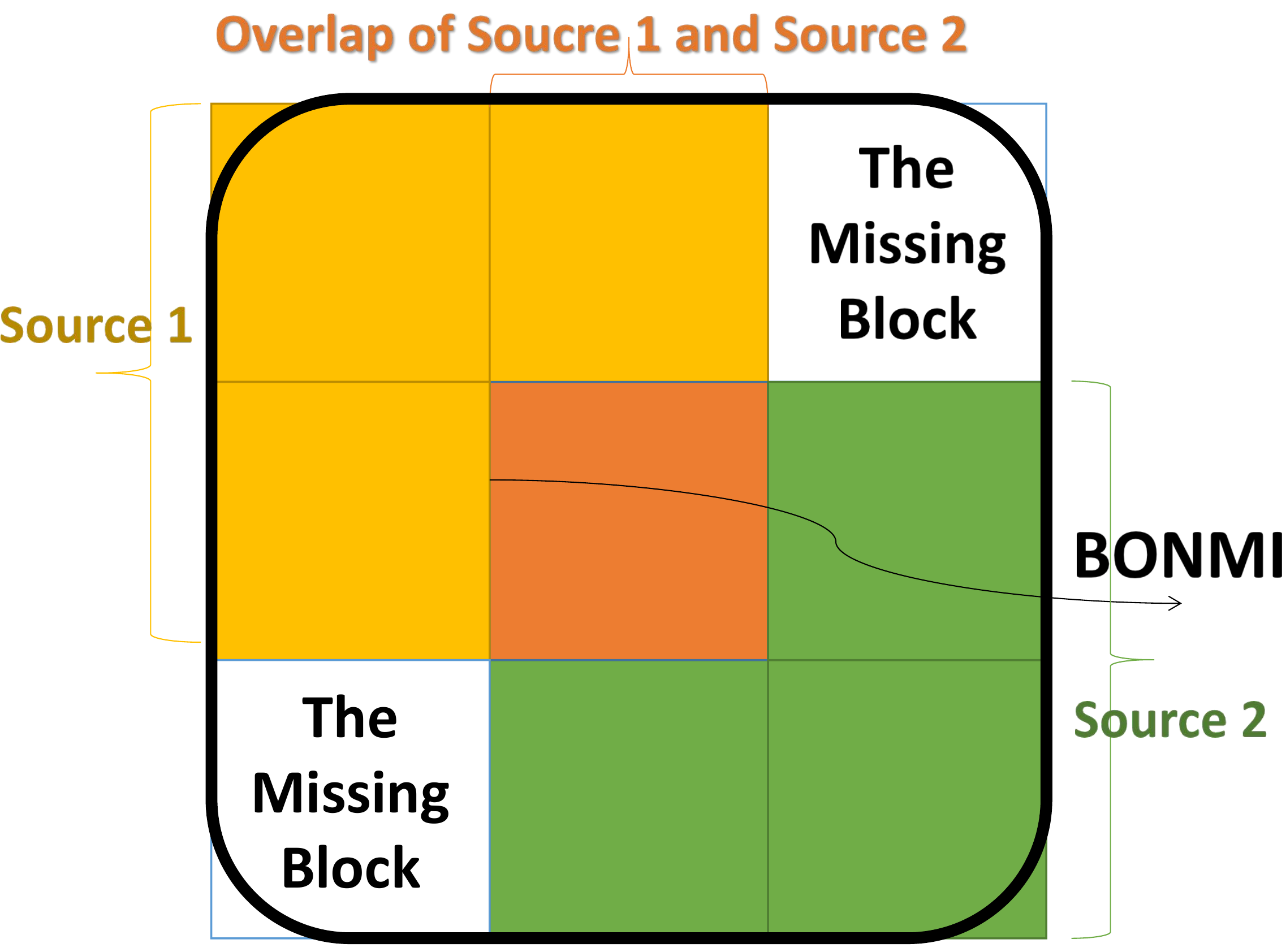} 
\caption{BONMI}
\label{fig:21}
\end{subfigure}
\begin{subfigure}{0.45\textwidth}
\includegraphics[width=0.9\linewidth, height=0.75\linewidth]{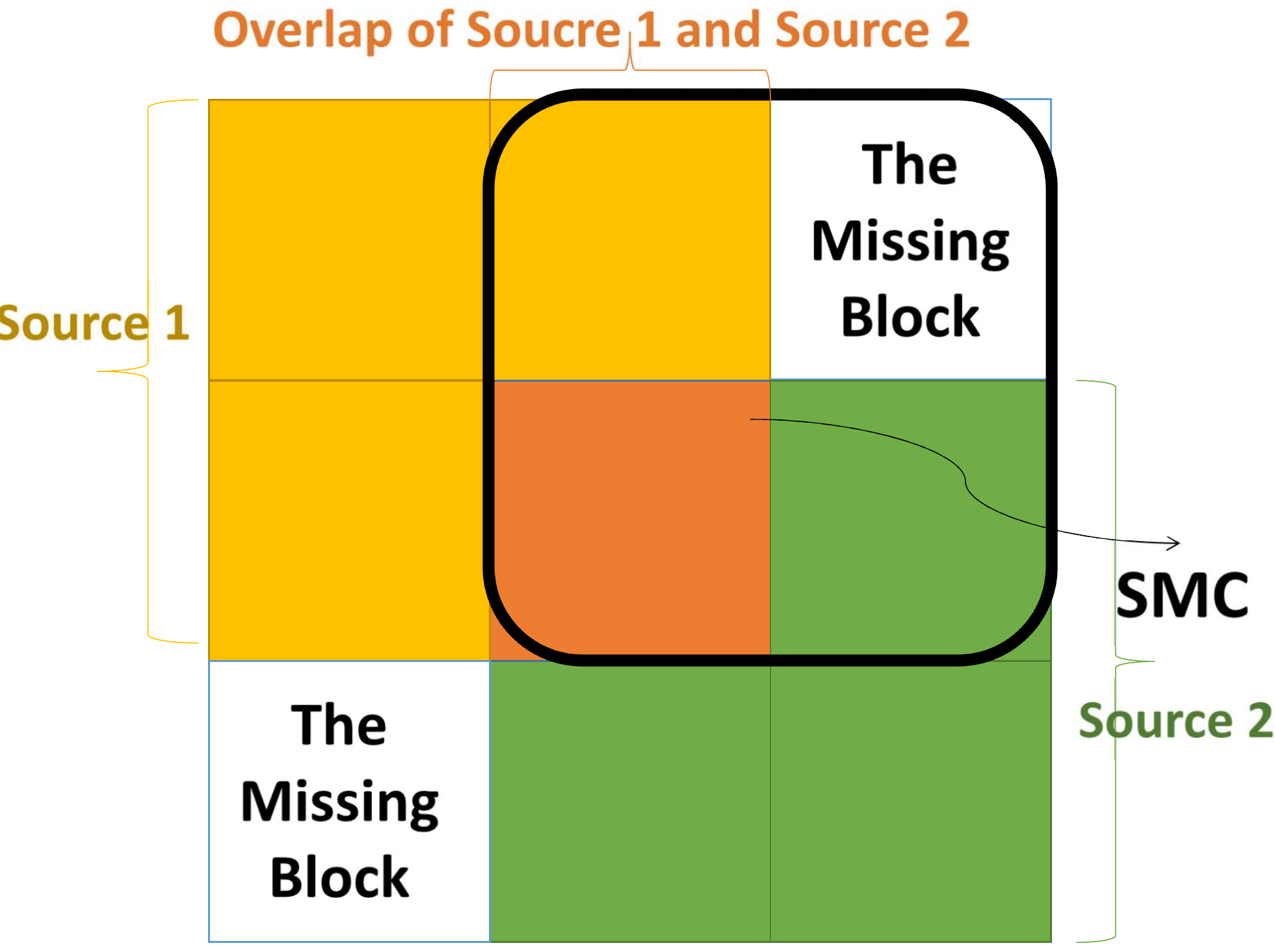}
\caption{SMC}
\label{fig:22}
\end{subfigure}
\caption{BONMI VS SMC. BONMI uses the $3 \times 3$ blocks while SMC uses the $2 \times 2$ blocks.}
\label{fig:BONMI}
\end{figure}
Second, SMC only considers the noiseless case, while BONMI allows the existence of noise. Although both methods achieve perfect recovery under the low-rank and noiseless case, BONMI substantially outperforms SMC under the noisy setting, as demonstrated in both simulations and real data analysis. Intuitively, it is because we exploit the observations thoroughly and avoid doing pseudo-inverse on a noise corrupted matrix. 
Third, we provide the theoretical guarantee for BONMI under the multi-sources and noisy setting, where the statistical rates of the estimator depend on the sampling probability and noise level explicitly. However, the upper bounds for the estimation errors of SMC depend on the missing probability implicit and do not apply to the setting that we consider.  
BONMI is also different from the MBI algorithm proposed by \citep{xue2020integrating} as they solve different problems. \cite{xue2020integrating} focused on the model selection when the covariates were  block-wise missing due to incomplete observations. They assumed a linear model between the response and the covariates and showed the consistency of the estimation of the linear coefficients. Their interest is different from ours where the estimation of the missing blocks itself is of interest and no response variable exits. Moreover, BONMI is dissimilar to MBI by utilizing the low-rankness of the observed matrices, which is a prevailing property of real-world data, such as the PMI matrices. 

Our theoretical results match the state-of-art result of matrix completion under uniform missing \citep{ma2018implicit, chen2015fast, negahban2012restricted, koltchinskii2011nuclear}. To be specific, let $p$ be the entrywise sampling probability under their setting, $p_0$ be the sampling probability of each source under our setting, $N$ be the dimension of the underlying low-rank matrix. When other parameters such as the rank and the condition number are constant, our spectral norm error bound of the underlying low-rank factorization is $O_{P} \big( (2-p_0) \sqrt{N} \big)$ in the case of two sources. Meanwhile, the optimal spectral norm error derived by \cite{ma2018implicit} is $O_{P} ( \sqrt{N/p})$. Under our model, the relation between $p$ and $p_0$ is that $p \approx (2 - p_0^2)/(2-p_0)^2$, so our error bound coincides with their bound. It reveals that even under a dependence sampling mechanism, it is possible to derive a similar error bound under the uniformly independent missing setting. When we have multiple sources, we show how many sources we need to recover enough information from the low-rank matrix while preserving the order of the error bound as the two sources.

 In summary, our paper contributes in three ways. First, we design an efficient algorithm to treat multiple block-wise missing in the matrix completion problem. Second, we propose a way to aggregate the multi-source data optimally. Third, we prove the statistical rate of our estimator, which is comparable to the rate under the independently missing assumption \citep{ma2018implicit, chen2015fast, negahban2012restricted, koltchinskii2011nuclear}.

The rest of the paper is organized as follows. In Section \ref{sec:meth}, we introduce in detail the proposed BONMI method. The theoretical properties of the estimators are analyzed in Section \ref{sec:thm}. Simulation results are shown in Section \ref{sec:simulation} to investigate the numerical performance of the proposed method. A real data application is given in Section \ref{sec:data}. Section \ref{sec:dis} extends the model to asymmetric matrices and concludes the paper. 
For space reasons, the proofs of the main results are given in the supplement. In addition, some key technical tools used in the proofs of the main theorems are also developed and proved in the supplement.

\section{Methodology}
\label{sec:meth}

\subsection{Notations}
We first introduce some notations. We use boldfaced symbols to represent vectors and matrices. For any vector $\v$,  $\| \v \|$ denotes its Euclidean norm. For any matrix $\A \in \RR^{d \times q}$, we let   $\sigma_j(\A)$ and $\lambda_j(\A)$ (if $d=q$) denote its respective $j$th largest singular value and eigenvalue. The smallest singular value $\sigma_{\min(m,n)}(\A)$ will be denoted
by $\sigma_{\min}(\A)$. We let $\| \A \|$, $\| \A \|_{\rm F}$, $\| \A \|_{2,\infty}$ and $\| \A \|_{\infty}$ respectively denote the spectral norm (i.e., the largest singular value), the Frobenius norm, the $\ell_2/\ell_{\infty}$ norm (i.e., the largest $\ell_2$ norm of the rows), and the entry-wise $\ell_{\infty}$ norm (the largest magnitude of all entries) of $\A$.  We let $\A_{j,\cdot}$ and $\A_{\cdot,j}$ denote the $j$th row and $j$th column of $\A$, and let $\A(i,j)$ denote the $(i,j)$ entry of $\A$. Besides, we use the symbol $\equiv$ to denote `defined to be.' For any integer $d \ge 1$, we let $[d] \equiv \{1, ..., d\}$. For indices sets $\Omega_1 \subseteq [d]$ and $\Omega_2 \subseteq [q]$, we use $\A_{\Omega_1,\Omega_2}$ to represent its sub-matrix with row indices $\Omega_1$ and column indices $\Omega_2$. 

We let $\scrO^{n \times r}$ represent the set of all $n \times r$ orthonormal matrices. For a sub-Gaussian random variable $\Y$, its sub-Gaussian norm  is defined as $\|\Y\|_{\psi_2} = \inf \{t>0: \EE e^{-\Y^2/t^2} \leq 2\}$. We use the standard notation $f(n) = O (g(n))$ or $f(n) \lesssim g(n)$ to represent $|f(n)| \leq c|g(n)|$ for some constant $c >0$.  

\def\supast{^{\ast}}

\subsection{Model}
\label{intro:mod}

To simplify the presentation, we consider the symmetric matrices first, which is also inspired by the real data example in Section \ref{sec:data}. The general model for asymmetric matrices and the noisy pattern is discussed in Section \ref{sec:gen}. Denote $\W^\ast = \big[\W^\ast(i,j)\big]_{j \in [N]}^{i\in [N]} \in \RR^{N \times N}$ as the underlying symmetric positive semi-definite population matrix associated with $N$ entities with ${\rm rank}(\W^\ast) = r$. 
We assume that the observed matrices are sampled block-wise. To be specific, for the $s$th source, we sample an index set $\calV_s \subseteq [N]$ independently such that for each $i \in [N]$, we assign $i$ to $\calV_s$ with probability $p_s$:  
$$\PP(i \in \calV_s) = p_s \in (0,1), \text{ for } i \in [N], s \in [m] \, .$$
With the index set $\calV_s$, an matrix $\W^s$ will be observed, which is corresponding to a noisy realization of a submatrix of $\W^\ast$.  Specifically, we have
\begin{equation}
    \W^s = \W^\ast_s+ \E^s=  \big[\W^\ast(i,j)\big]_{j \in \calV_s}^{i\in\calV_s} + \E^s, \text{ for } s \in [m] \,,
\label{eq:mod}    
\end{equation}
where 
the entries of $\E^s$ are independent sub-Gaussian noise with variance $\sigma_s^2$. 
Let $\calV^{\ast} = \cup_{s=1}^m \calV_s$, 
then our task is to recover 
$$\W_0^{\ast}= \W^\ast_{ \calV^{\ast}, \calV^{\ast}} =  \big[\W^\ast(i,j)\big]_{j \in \calV^{\ast}}^{i\in \calV^{\ast}} \in \RR^{n \times n}, \quad \mbox{where $n=|\calV^{\ast} |$.}
$$
Without loss of generality, we assume $\calV^{\ast} = [n]$, otherwise we can rearrange the rows and columns of $\W^{\ast}$. In the PMI word embedding example in Section \ref{sec:data}, $\calV_s$ represents the corpus of the $s$th data source, $\calV\supast$ represents the union of the collected corpora, which has size $n$, and $N$ can be the total number of words. 

\begin{remark}
The underlying assumption of model \eqref{eq:mod} is the repeated measurements, e.g., the overlapping parts may be observed multiple times under the current sampling pattern, albeit the existence of sampling error. However, the repeated measurements are not essential for our algorithms, which will be discussed in detail in Section \ref{sec:gen}. 
\end{remark}

It is easy to see that we can not recover $\W_0^{\ast}$ through $\{ \W^s \}_{s \in [m]}$ for any matrix $\W^\ast$, even if the noisy goes to zero. For example, the non-zero entries of $\W^\ast$ concentrate on a few rows or columns. As a result, we need a standard incoherence condition on our population matrix $\W^\ast$ \citep{candes2009exact} which basically assumes information is distributed uniformly among entries. First, let the eigendecomposition of $\W^\ast$ be
\begin{equation}
    \W^\ast = \U^\ast \bSigma^\ast (\U^\ast)^{\top},
\label{eq:edc}    
\end{equation}
where $\U^\ast \in \RR^{N \times r}$ consists of orthonormal columns, and $\bSigma^\ast$ is an $r \times r$ diagonal matrix with eigenvalues in a descending order, i.e., $\lambda_{\max} \equiv \lambda_1 \geq \cdots \geq \lambda_r \equiv \lambda_{\min} >0$. 

\begin{assump}[Incoherence condition]
The coherence coefficient of $\U^\ast$ satisfies $\mu_0 = O(1)$, where 
\begin{equation*}
\mu_0 \equiv \mu(\U^\ast) = \frac{N}{r} \max_{i \in [N]} \sum_{j=1}^r \U^\ast (i,j)^2 .
\end{equation*}
\label{assump:incoherence}
\end{assump}
We also have the requirement for sample complexity to assure recovery. 
\begin{assump}
The sampling probability $p_0 \equiv \min_{s \in [m]} p_s$ satisfies
$$p_0 \geq C \sqrt{\mu_0 r \log N /N}$$ for some sufficiently large constant $C$. Besides, $ \max_{s \in [m]} p_s / p_0 = O(1)$. 
\label{assump: prob}
\end{assump}

\def\subsksk{_{\scriptscriptstyle s \cap k, s\cap k}}
\def\subsksnk{_{\scriptscriptstyle s \cap k, s\backslash k}}
\def\subskkns{_{\scriptscriptstyle s \cap k, k \backslash s}}
\def\subsnksk{_{\scriptscriptstyle s\backslash k, s \cap k}}
\def\subknssk{_{\scriptscriptstyle k\backslash s, s \cap k}}
\def\subsnksnk{_{\scriptscriptstyle s\backslash k, s\backslash k}}
\def\subsnkkns{_{\scriptscriptstyle s\backslash k, k\backslash s}}
\def\subknssnk{_{\scriptscriptstyle k\backslash s, s\backslash k}}
\def\subknskns{_{\scriptscriptstyle k\backslash s, k\backslash s}}

\subsection{The Noiseless Case}

To illustrate the BONMI algorithm, we first consider the noiseless case when $m=2$. To simplify the notations, we denote $s \backslash k \equiv \calV_s \backslash \calV_k$ and $s \cap k \equiv \calV_s \cap \calV_k$ when they are used as the subscripts of a matrix, and recall that $\W^{\ast}_s \equiv \W^\ast_{\calV_s,\calV_s}$. Assume the two sampled sub-matrices are $\W^\ast_s$ and $\W^\ast_k$. Since the singular values are invariant under row/column permutations, without loss of generality, we can rearrange our data matrices such that

\begin{equation}
\W^\ast_s
=\begin{bmatrix}
\W^\ast\subsksnk & \W^\ast\subsnksk\\
\W^\ast\subsnksnk & \W^\ast\subsksk \\
\end{bmatrix} ; \quad \W^\ast_k
= \begin{bmatrix}
\W^\ast\subsksk &  \W^\ast\subskkns\\
\W^\ast\subknssk & 
 \W^\ast\subknskns\\
\end{bmatrix} 
\label{eq: Ws Wk}
\end{equation}
and
\begin{equation}
     \W^{\ast}_0 = \begin{bmatrix}
\W^\ast\subsnksnk & \W^\ast\subsnksk & \W^\ast\subsnkkns\\
\W^\ast\subsksnk & \W^\ast\subsksk & \W^\ast\subskkns \\ 
\W^\ast\subknssnk & \W^\ast\subknssk & \W^\ast\subknskns 
\end{bmatrix}.
\label{eq: Wstar}
 \end{equation}
An illustration of \eqref{eq: Ws Wk} and \eqref{eq: Wstar} can also be found in Figure \ref{fig:BONMI} (a), where $\W^\ast\subsksk$ is the orange block, $\W^\ast_s$ is represented by the yellow and orange blocks, $\W^\ast_k$ is represented by the green and orange blocks, and $\W^\ast\subsnkkns$ and $\W^\ast\subknssnk$ are the missing blocks. 

Our goal is to recover $\W^{\ast}_0$ based on the observed $\W^\ast_s$ and $\W^\ast_k$. This can be achieved by estimating the missing blocks $\W^\ast\subsnkkns$ and $\W^\ast\subknssnk = \W^{\ast \top}\subsnkkns$ by the symmetry of $\W_0^\ast$. As the missing entries are block-wise, theoretical guarantee based on the assumption of independent missing will fail in the current case. Instead, we propose a method based on the orthogonal transformation, which exploits the following proposition. 
\begin{proposition}
\label{prop: exactly recover}
Suppose $\W^{\ast}$ has eigendecomposition \eqref{eq:edc} and satisfies Assumptions \ref{assump:incoherence} and \ref{assump: prob}. Since $\max \big\{ {\rm rank}(\W_s^\ast),{\rm rank}(\W_s^\ast) \big\} \leq {\rm rank}(\W^\ast) = r$, we suppose the 
eigendecompositions of $\W^\ast_s$ and $\W^\ast_k$ are
$$\W^\ast_s = \V_s^{\ast} \bSigma_s^{\ast} (\V_s^{\ast})^\top \;{\rm and} \; \W^\ast_k = \V_k^{\ast} \bSigma_k^{\ast} (\V_k^{\ast})^\top,$$
where $\V_s^{\ast}$ and $\V_k^{\ast}$ are the eigenvectors of $\W^\ast_s$ and $\W^\ast_k$, respectively. We further decompose $\V_s^{\ast}$ and $\V_k^{\ast}$ as $\V_s^{\ast} = ( (\V_{s1}^{\ast})^\top,(\V_{s2}^{\ast})^\top)^\top$, $\V_k^{\ast} =  ( (\V_{k1}^{\ast})^\top,(\V_{k2}^{\ast})^\top)^\top$ with $\V_{s2}^{\ast}, \V_{k1}^{\ast} \in \RR^{|\calV_s \cap \calV_k| \times r}$. 
Then with probability at least $1 - O(1/N^3)$, $\W^\ast\subsnkkns$ in \eqref{eq: Wstar} can be exactly given by
\begin{equation}
    \W^\ast\subsnkkns = \V_{s1}^{\ast} ( \bSigma_s^{\ast})^{1/2} \G( ( \bSigma_s^{\ast})^{1/2} (\V_{s2}^{\ast})^{\top} \V_{k1}^{\ast} (\bSigma_k^{\ast})^{1/2})  ( \bSigma_k^{\ast})^{1/2} (\V_{k2}^{\ast})^{\top} \, ,
\label{eq: Wsk}
\end{equation}
where $\G(\cdot)$ is a matrix value function defined as:
\begin{equation}
    \G(\C) = \H \Z^{\top} \text{ where } \H \bOmega \Z^{\top} \text{is the SVD of } \C 
\end{equation}
for all matrix $\C \in \RR^{r \times r}$. 
\end{proposition}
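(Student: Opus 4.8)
The plan is to reduce the statement to the essentially deterministic fact that two symmetric factorizations of the same rank-$r$ matrix differ by an orthogonal rotation, together with a single probabilistic input: that the overlapping block has full rank. Write the symmetric factor $\X = \U^\ast(\bSigma^\ast)^{1/2} \in \RR^{N\times r}$, so that $\W^\ast = \X\X^\top$ and every principal submatrix factors as $\W^\ast_{\calA,\calA} = \X_\calA \X_\calA^\top$, where $\X_\calA$ collects the rows of $\X$ indexed by $\calA$. Writing $\X_{s\backslash k}$, $\X_{s\cap k}$, $\X_{k\backslash s}$ for the three row blocks, the target is exactly $\W^\ast\subsnkkns = \X_{s\backslash k}\X_{k\backslash s}^\top$, and $\W^\ast_s$, $\W^\ast_k$ are the products of the corresponding stacked factors.

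First I would isolate the probabilistic step. The only random event needed is that the overlap factor $\X_{s\cap k}$ has full column rank $r$ with probability $1 - O(1/N^3)$, equivalently that $(\U^\ast_{s\cap k})^\top\U^\ast_{s\cap k}$ is nonsingular. Since each index lies in $\calV_s\cap\calV_k$ independently with probability $p_sp_k$, the incoherence bound of Assumption \ref{assump:incoherence} controls the rows of $\U^\ast$ uniformly, and a matrix Bernstein (matrix Chernoff) argument shows $(\U^\ast_{s\cap k})^\top\U^\ast_{s\cap k}$ concentrates around $p_sp_k\,\mathbf{I}_r$, hence is positive definite, precisely when the sample-complexity bound $p_0 \gtrsim \sqrt{\mu_0 r\log N/N}$ of Assumption \ref{assump: prob} holds. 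Because the overlap block is a principal submatrix of both $\W^\ast_s$ and $\W^\ast_k$, its rank $r$ forces $\W^\ast_s$ and $\W^\ast_k$ to have rank exactly $r$, which justifies the rank-$r$ eigendecompositions asserted in the statement.

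On this event I would invoke the factorization-uniqueness lemma: if $\A\A^\top=\B\B^\top$ with $\A,\B\in\RR^{n\times r}$ of full column rank, then $\A=\B\Q$ for some $\Q\in\scrO^{r\times r}$ (take $\Q=(\B^\top\B)^{-1}\B^\top\A$ and check $\Q\Q^\top=\mathbf{I}_r$). Applying this to $\W^\ast_s=\X_s\X_s^\top=\V_s^\ast\bSigma_s^\ast(\V_s^\ast)^\top$, and likewise for $k$, yields orthogonal $\Q_s,\Q_k$ with $\X_s=\V_s^\ast(\bSigma_s^\ast)^{1/2}\Q_s$ and $\X_k=\V_k^\ast(\bSigma_k^\ast)^{1/2}\Q_k$. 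Reading off the row blocks gives $\X_{s\backslash k}=\V_{s1}^\ast(\bSigma_s^\ast)^{1/2}\Q_s$, $\X_{k\backslash s}=\V_{k2}^\ast(\bSigma_k^\ast)^{1/2}\Q_k$, and the two representations $\X_{s\cap k}=\V_{s2}^\ast(\bSigma_s^\ast)^{1/2}\Q_s=\V_{k1}^\ast(\bSigma_k^\ast)^{1/2}\Q_k$ of the shared block.

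The crux is then to show $\G(\C)=\Q_s\Q_k^\top$ for $\C=(\bSigma_s^\ast)^{1/2}(\V_{s2}^\ast)^\top\V_{k1}^\ast(\bSigma_k^\ast)^{1/2}$. Substituting the two overlap representations gives $\C=\Q_s\M\Q_k^\top$ with $\M=\X_{s\cap k}^\top\X_{s\cap k}$ symmetric and, by the full-rank event, positive definite. Diagonalizing $\M=\mathbf{\Phi}\mathbf{\Lambda}\mathbf{\Phi}^\top$ exhibits $\C=(\Q_s\mathbf{\Phi})\mathbf{\Lambda}(\Q_k\mathbf{\Phi})^\top$ as a valid SVD (the outer factors are orthonormal and $\mathbf{\Lambda}\succ 0$), so the polar factor is $\G(\C)=(\Q_s\mathbf{\Phi})(\Q_k\mathbf{\Phi})^\top=\Q_s\Q_k^\top$, where positive definiteness of $\M$ is exactly what makes this factor unique. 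Plugging $\Q_s\Q_k^\top=\G(\C)$ into $\W^\ast\subsnkkns=\X_{s\backslash k}\X_{k\backslash s}^\top=\V_{s1}^\ast(\bSigma_s^\ast)^{1/2}\Q_s\Q_k^\top(\bSigma_k^\ast)^{1/2}(\V_{k2}^\ast)^\top$ yields \eqref{eq: Wsk}. I expect the probabilistic full-rank step to be the main obstacle, since everything else is exact linear algebra; in particular one must ensure the concentration bound delivers the $1-O(1/N^3)$ probability uniformly under only the incoherence and sampling assumptions.
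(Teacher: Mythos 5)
Your argument is correct, and it reaches the formula by a somewhat different route than the paper. Both proofs share the same probabilistic backbone — showing that the overlap block $\W^\ast_{s\cap k,\,s\cap k}$ has rank $r$ with probability $1-O(1/N^3)$, which forces $\mathrm{rank}(\W^\ast_s)=\mathrm{rank}(\W^\ast_k)=r$; the paper gets this by conditioning on $n_{sk}\gtrsim p_sp_kN$ and invoking a lower bound on $\sigma_{\min}(\U^\ast_{s\cap k})$ for randomly drawn rows of an incoherent orthonormal matrix, while you invoke matrix Chernoff concentration of $(\U^\ast_{s\cap k})^\top\U^\ast_{s\cap k}$ around $p_sp_k\mathbf{I}_r$; these are essentially interchangeable. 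Where you diverge is the deterministic part. The paper never introduces the global factor $\X=\U^\ast(\bSigma^\ast)^{1/2}$: it writes down an explicit candidate rotation $\widehat\R=(\bSigma_s^{\ast})^{1/2}(\V_{s2}^{\ast})^\top\V_{k1}^{\ast}\big((\V_{k1}^{\ast})^\top\V_{k1}^{\ast}\big)^{-1}(\bSigma_k^{\ast})^{-1/2}$ satisfying $\V_{k1}^{\ast}(\bSigma_k^{\ast})^{1/2}=\V_{s2}^{\ast}(\bSigma_s^{\ast})^{1/2}\widehat\R$, checks $\widehat\R^\top\widehat\R=\mathbf{I}_r$, identifies $\widehat\R$ as the Procrustes minimizer, and cites a lemma of Ma et al.\ to equate it with $\G(\cdot)$; it then finishes with a chain of identities expressing $(\U^\ast_{s\backslash k})^\top$ through pseudo-inverse-type manipulations. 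You instead use the cleaner structural fact that any two full-column-rank symmetric factorizations differ by a rotation, obtain $\X_s=\V_s^\ast(\bSigma_s^\ast)^{1/2}\Q_s$ and $\X_k=\V_k^\ast(\bSigma_k^\ast)^{1/2}\Q_k$, and identify $\G(\C)=\Q_s\Q_k^\top$ by exhibiting the explicit SVD $\C=(\Q_s\mathbf{\Phi})\mathbf{\Lambda}(\Q_k\mathbf{\Phi})^\top$ of $\C=\Q_s\M\Q_k^\top$ with $\M=\X_{s\cap k}^\top\X_{s\cap k}\succ 0$. Your version buys a self-contained, coordinate-free derivation that makes the role of the alignment rotations transparent and avoids both the external Procrustes lemma and the explicit inverse $\big((\V_{k1}^{\ast})^\top\V_{k1}^{\ast}\big)^{-1}$; the paper's version buys an explicit closed form for the rotation that it reuses when analyzing the perturbed (noisy) setting. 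One small point worth stating in a final write-up: when $\M$ has repeated eigenvalues the SVD of $\C$ is not unique, but the polar factor $\H\Z^\top$ is still uniquely $\C(\C^\top\C)^{-1/2}$ because $\C$ is nonsingular on the full-rank event, so $\G(\C)=\Q_s\Q_k^\top$ is unambiguous.
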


Proposition \ref{prop: exactly recover} shows that, when there is no noise, 
$\W^\ast\subsnkkns$ can be recovered precisely based on $\W^\ast_s$ and $\W^\ast_k$ with high probability. The proposition can be easily extended to the case when $m>2$. 
In addition, our theoretical analysis shows that the method is robust to small perturbation. 


\subsection{BONMI Algorithm with Two Noisy Matrices}
\label{method}

In the noisy case, we use the above idea but add an additional step of weighted average . Since it is possible to observe the entries of $\W^{\ast}$ more than once due to multiple sources, weighted average is a natural idea to reduce the variance of estimation in the existence of noise. In reality, the heterogeneity always exists which means the noise strength of different sources may be different. As a result, we decide to use the weights inversely proportional to the noise variance. We start with the case $m=2$ again. { Currently, we decompose two overlapping matrices $\W^s\equiv \W^\ast_s + \E^s$ and $\W^k\equiv \W^\ast_k + \E^k $ as follows
\begin{equation*}
\W^s  
= \begin{bmatrix}
\W\subsnksnk^s & \W\subsnksk^s \\
\W\subsksnk^s & \W\subsksk^s \\
\end{bmatrix} , \quad \W^k  
= \begin{bmatrix}
\W\subsksk^k & \W\subskkns^k \\
\W\subknssk^k & \W\subknskns^k \\
\end{bmatrix} , \quad
\mbox{for $1 \le s < k \le m$.}
\end{equation*}
Then we can combine $\W_s$ and $\W_k$ to obtain
\begin{equation}
    \widetilde \W = \begin{bmatrix}
\W\subsnksnk^s & \W\subsnksk^s &  \bzero\\
\W\subsksnk^s &\W\subsksk^a & \W\subskkns^k\\
 \bzero & \W\subknssk^k  &  \W\subknskns^k  \\
\end{bmatrix}, 
\label{def: tilde W}
\end{equation} 
where $\W\subsksk^a \equiv \alpha_s  \W\subsksk^s + \alpha_k \W\subsksk^k$ is the weighted average of the overlapping part with $\alpha_i>0,i=s,k$ and $\alpha_s + \alpha_k =1$. The weights should ideally depend on the strength of the noise matrices, $\E^s$ and $\E^k$, to optimize estimation. We detail the estimation of the weights in Section \ref{sec:alg}. To estimate $\W^\ast\subsnkkns$, let 
\begin{equation}
  \widetilde \W_s =  \begin{bmatrix}
\W\subsnksnk^s & \W\subsnksk^s\\
\W\subsksnk^s &\W\subsksk^a\\
\end{bmatrix} \quad {\rm and} \quad \widetilde \W_k =  \begin{bmatrix}
\W\subsksk^a & \W\subskkns^k\\
\W\subknssk^k  &  \W\subknskns^k\\
\end{bmatrix},  
\label{eq:W12}
\end{equation}
and the rank-$r$ eigendecompositions of $\widetilde \W_k$ and $\widetilde \W_s$ be $ \widetilde \V_s \widetilde \bSigma_s \widetilde \V_s^\top$ and $\widetilde \V_k \widetilde \bSigma_k \widetilde \V_k^\top$, respectively. Specifically, $\widetilde \V_s$ and $\widetilde \V_k$ can be decomposed block-wise such that $\widetilde \V_s = (\widetilde \V_{s1}^{\top},\widetilde \V_{s2}^{\top})^{\top}$ and $\widetilde \V_k = (\widetilde \V_{k1}^{\top},\widetilde \V_{k2}^{\top})^{\top}$ where $\widetilde \V_{s2}, \widetilde \V_{k1} \in \RR^{|\calV_s \cap \calV_k| \times r}$. So the estimate of $\W^\ast\subsnkkns$ is
\begin{equation}
    \widetilde \W_{s k} =  \widetilde \V_{s1} \widetilde \bSigma_s^{1/2} \G( \widetilde \bSigma_s^{1/2} \widetilde \V_{s2}^{\top} \widetilde \V_{k1}  \widetilde \bSigma_k^{1/2})   \widetilde \bSigma_k^{1/2} \widetilde \V_{k2}^\top,
\label{eq: Wsk noise}    
\end{equation}
according to the Proposition \ref{prop: exactly recover}. After getting $\widetilde \W_{s k}$, we impute it back to $\widetilde \W$ to obtain
\begin{equation}
    \widehat \W = \begin{bmatrix}
\W\subsnksnk^s & \W\subsnksk^s &   \widetilde \W_{s k}\\
\W\subsksnk^s &\W\subsksk^a & \W\subskkns^k\\
\widetilde \W_{s k}^\top & \W\subknssk^k  &  \W\subknskns^k  \\
\end{bmatrix}.
\label{def: hat W}
\end{equation}
}
Then we can obtain the rank-$r$ eigendecomposition of $\widehat \W$, denoted as $\widehat \W_r = \widehat \U \widehat \bSigma \widehat \U^\top$, as an estimate of $\W_0^{\ast}$.

\subsection{BONMI Algorithm}
\label{sec:alg}

We next introduce the BONMI algorithm for recovering $\W^\ast_0$, based on $m \geq 2$ noise-corrupted principal sub-matrices $\{ \W^s  \}_{s \in [m]}$ of $\W^\ast$. Our algorithm consists of three main steps: (I) aggregation of the $m$ matrices, (II) estimation of missing parts, and (III) low-rank approximation, as summarized in Algorithm \ref{alg:OMC}.

\begin{algorithm}[!ht]
\SetAlgoLined

{\bf Input:} $m$ symmetric matrices $\{\W^s\}_{s \in [m]}$ and the corresponding index sets $\{\calV_s\}_{s \in [m]}$; the rank $r$; $n = |\cup_{s=1}^m \calV_s|$\; 

{\bf Step I (a) Estimation of weights:} \For{$1\leq s \leq m$}{
  Let $\widehat \U_{s} \widehat \bSigma_s (\widehat \U_{s})^{\top}$ be the rank-$r$ eigendecomposition of $\W^s$. Estimate $\sigma_s$ by 
  \begin{equation}
      \widehat \sigma_s =  |\calV_s|^{-1} \| \W^s - \widehat \U_{s} \widehat \bSigma_s (\widehat \U_{s})^{\top} \|_{\rm F};
\label{def: sigmas}
  \end{equation}
} 

{\bf Step I (b) Aggregation:} Create $\widetilde \W \in \RR^{n \times n}$ by  \eqref{eq:Wij_over}.

{\bf Step II (a) Spectral initialization:} \For{$1\leq s \leq m$}{
  Let $\widetilde \V_{s} \widetilde \bSigma_s \widetilde \V_{s}^{\top}$ be the rank-$r$ eigendecomposition of $\widetilde \W_{\calV_s \calV_s}$. 
} 

{\bf Step II (b) Estimation of missing parts:} \For{$1\leq s < k \leq m$}{
  Obtain $\widetilde \W_{s k}$ using $\widetilde \V_{s}, \widetilde \bSigma_s, \widetilde \V_{k}, \widetilde \bSigma_k$ by \eqref{eq: Wsk noise}. If a missing entry $(i,j)$ is estimated by multiple pairs of sources $(s,k)$, choose the one estimated by the pair with the smallest $\widehat \sigma_s + \widehat \sigma_k$. Denote the imputed matrix as $\widehat \W$. }

 {\bf Step III Low rank approximation:} Obtain the rank-$r$ eigendecomposition of $\widehat \W$: $\widehat \W_r = \widehat \U \widehat \bSigma \widehat \U^\top$. 
 
 {\bf Output: } $\widehat \U$, $\widehat \bSigma$.
 \caption{{\bf B}lock-wise missing {\bf E}mbedding {\bf L}earning In{\bf T}egration (BONMI).}
\label{alg:OMC}
\end{algorithm}

\def\calS{\mathcal{S}}

{\it Step I: Aggregation.} We first aggregate $\{\W^s \}_{s \in [m]}$ to obtain $\widetilde \W$ similar to the $m=2$ case, which requires an estimation for the weights $\{\alpha_s\}_{s\in [m]}$. Similar to standard meta-analysis, the optimal weight for the $s$th source can be chosen as $\sigma_s^{-2}$. We estimate $\sigma_s$ as by $\widehat \sigma_s =  |\calV_s|^{-1} \| \W^s - \widehat \U_{s} \widehat \bSigma_s \widehat \U_{s}^{\top} \|_{\rm F}$, where $\widehat \U_{s} \widehat \bSigma_s \widehat \U_{s}^{\top}$ is the rank-$r$ eigendecomposition of $\W^s$. We then create the matrix $\widetilde \W \in \RR^{n \times n}$ as follows
 \begin{equation}
    \widetilde \W(i,j) = \sum_{s=1}^m \alpha^s_{i j} \W^s(v_i^s, v_j^s)  \mathbbm{1}(i,j \in \calV_s) ,
\label{eq:Wij_over}    
\end{equation}
for all pairs of $(i,j)$ such that $\calS_{ij} \equiv \sum_{s=1}^m \mathbbm{1}(i, j \in \calV_s) > 0$, where $v_i^s$ denotes the row(column) index in $\W^s$ corresponding to the $i$th row(column) of $\W_0^\ast$, and $$\alpha^s_{ij} = \frac{1}{\hat \sigma_s^2}\big( \sum_{k=1}^m  \mathbbm{1}(i,j \in \calV_k) \hat \sigma_k^{-2} \big)^{-1}.$$
The entries in the missing blocks with $\calS_{ij} = 0$ are initialized as zero. 

\begin{remark}
It is natural to use the inverse of noise variances as the weights to aggregate multiple observations, for instance, weighted least squares \citep{ruppert1994multivariate}. Here we follow the same routine, and the choice is direct since, intuitively, in this way, we can minimize the variance of the noise of the overlapping matrices. A formal analysis is provided in Section \ref{sec: completion error}. 
\end{remark}

{\it Step II: Imputation.}  We next impute the missing entries with  $\calS_{ij} = 0$. For $1 \le s \le k \le m$, we impute the entries of $\widetilde\W$ corresponding to $(\calV_s \backslash \calV_k) \times (\calV_k \backslash \calV_s)$ using $\widetilde \W_s \equiv \widetilde \W_{\calV_s, \calV_s}$ and $\widetilde \W_k$ the same way as \eqref{eq: Wsk noise}. { If a missing entry $(i,j)$ can be estimated by multiple pairs of sources $(s,k)$, we choose the one estimated by the pair with the smallest $\widehat \sigma_s + \widehat \sigma_k$.} After Steps I and II, all missing entries of $\widetilde \W$ are imputed, and we denote the imputed matrix as $\widehat \W$. 

{\it Step III: Low-rank approximation.} Finally, we factorize $\widehat \W$ by rank-$r$ eigendecomposition to obtain the final estimator: $\widehat \W_r \equiv \widehat \U \widehat \bSigma \widehat \U^\top$.

\begin{remark}[Computational complexity] The main computational cost of our algorithm is the eigendecomposition, which is $O(|\calV_s|^2r)$ for the source $s$. At the estimation step, matrix multiplication and the SVD of a $r \times r$ matrix are required, and the computational cost is bounded by $O(|\calV_s| |\calV_k| r)$. As a result, the total computational cost is 
$$
\textstyle O \big( \sum_{s=1}^m |\calV_s|^2r + \sum_{1 \leq s < k \leq m} |\calV_s| |\calV_k| r \big) = O \big( (\sum_{s=1}^m |\calV_s|)^2 r \big) = O\big(  m^2 n^2 r \big)\, .$$
In comparison, the gradient descent algorithms have the computational complexity 
$$O\big(n^2 r + T n^2 r \big) = O\big( (T+1) n^2r \big)\, ,$$
where $T$ is the iteration complexity dependent on the pre-set precision $\epsilon$. For instance, $T = n/r \log (1/\epsilon)$ \citep{sun2016guaranteed},  $T = r^2 \log (1/\epsilon)$ \citep{chen2015fast}, and  $T =  \log (1/\epsilon)$ \citep{ma2018implicit}. As a result, our algorithm has the computational complexity comparable to these algorithms. 
\end{remark}

\def\bVhat{\widehat{\V}}
\def\bWhat{\widehat{\W}}
\def\Vbb{\mathbb{V}}
\def\Vbbhat{\widehat{\Vbb}}

\begin{remark}
\label{rem: PMI}
When $\W^*$ is the PMI matrix, we may obtain embedding vectors for the $n$ entities of the $m$ sources from the imputed  $\bWhat_r = \widehat \X(\widehat \X)^{\top}$ as $\widehat \X = [\widehat \X_1, \ldots, \widehat \X_n]^{\top}$. Then $\bWhat_r$ can be used for the downstream tasks, specifically, the machine translation. To be specific, for each pair of sources $(s,k)$, we can match the entity $i \in \calV_s \backslash \calV_k$ to some entity $j \in \calV_k$ such that $$j = \arg \max_{l \in \calV_k} {\rm cos} (\widehat \X_i, \widehat \X_l) \text{ where } {\rm cos} (\widehat \X_i, \widehat \X_l) = (\widehat \X_i)^\top \widehat \X_l /\big(\|\widehat \X_i\| \|\widehat \X_l\|\big).$$
If ${\rm cos} (\widehat \X_i, \widehat \X_j)$ is larger than a threshold $c$, we can translate the entity $i$ from the $s$th source (language) to the entity $j$ in the $k$th source (language). We can determine $c$ by either setting a desired sensitivity using a test data or through cross-validation with translated pairs or a specificity which can be approximated by the distribution of cosine similarity of related but not synonymous pairs. 
\end{remark}

\section{Theoretical Analysis}
\label{sec:thm}

In this section, we investigate the theoretical properties of the algorithm. We first present some general assumptions required by our theorems. To this end, we define the condition number $\tau \equiv \lambda_1(\W^\ast)/\lambda_r(\W^\ast) = \lambda_{\max}/\lambda_{\min}$.  Besides, we need conditions to bound the noise strength and the condition number. 

\begin{assump}
The entries of $\E^s$ are independent sub-Gaussian noise with mean zero and sub-Gaussian norm  $\|\E^s(i,j)\|_{\psi_2}$, for $s \in [m]$. Let $\sigma \equiv \max_{s \in [m], i,j \in [|\calV_s|]} \|\E^s(i,j)\|_{\psi_2}$. Then $\sigma$ satisfies
\begin{equation*}
    \sigma \sqrt{N/p_0}  \ll \lambda_{\min}\, .
\end{equation*}
\label{assump: signal to noise ratio}
\end{assump}

\begin{assump}
$\tau \equiv \lambda_1(\W^\ast)/\lambda_r(\W^\ast)=\lambda_{\max}/\lambda_{\min} = O(1)$. Throughout this paper, we assume the condition number is bounded by a fixed constant, independent of the problem size (i.e., $N$ and $r$).
\label{assump3}
\end{assump}

\begin{remark}
Assumptions \ref{assump:incoherence}-\ref{assump3} are standard assumptions in many existing literature \citep{ma2018implicit, chen2015fast, negahban2012restricted, koltchinskii2011nuclear}, whereas different rates are required. Specifically, in Assumption \ref{assump: prob}, we only require the sampling probability to be of the order $O(\sqrt{\log N/N})$, which can tend to zero when the population size tends to infinity. In our setting, the sample size of each source is about $N^2 p_0^2$. Then we have  $N^2 p_0^2 \geq C^2 \mu_0 r N \log N$.  Relatively, \cite{ma2018implicit} require that the sample size satisfies $N^2 p \geq C \mu_0^3 r^3 N \log^3N$ for some sufficiently large constant $C > 0$ where $p$ is the entrywise sampling probability. In Assumption \ref{assump: signal to noise ratio}, the sampling probability $p_0$ and the eigenvalue $\lambda_{\min}$ can vary with $N$. 
Compared to \cite{ma2018implicit}, they require the noise satisfies 
$$
\sigma \sqrt{\frac{N\kappa^{3} \mu_0 r \log ^{3} N }{p}} \ll \lambda_{\min } \, .
$$
Our signal to noise ratio assumption has a same order as theirs up to some constants and log factors since $\mu_0,r$ and $\kappa$ are assumed to be constants. 
\end{remark}

The parameter of interest is $\W_0^{\ast}$ with eigendecomposition 
\begin{equation*}
    \W_0^{\ast} = \U_0^{\ast} \bSigma_0^{\ast} (\U_0^{\ast} )^{\top} = \X^{\ast} (\X^{\ast})^{\top},
\end{equation*}
where $ \X^{\ast} = \U_0^{\ast} ( \bSigma_0^{\ast})^{1/2} \in \RR^{n \times r}$. Because $\W_0^{\ast}$ is a sub-matrix of $\W^\ast$, we know ${\rm rank}(\W_0^{\ast}) \leq {\rm rank}(\W^\ast) = r$. With the assumptions above, we can prove that ${\rm rank}(\W_0^{\ast}) = r$ with high probability.

Let $\widehat \X = \widehat \U \widehat \bSigma^{1/2}$ be the output of Algorithm \ref{alg:OMC} and $K \equiv r \mu_0 \tau$. The upper bound for the estimation errors of $\X^{\ast}$ (and hence $\W_0^{\ast}$) under the special case of $m=2$ is presented in Theorem \ref{theorem: W13}. The proof of the theorem is deferred to Appendix \ref{proof:W13}.

\begin{theorem}
Under Assumptions \ref{assump:incoherence}, \ref{assump: prob}, \ref{assump: signal to noise ratio}, and \ref{assump3}, when $m = 2$, with probability at least $1 - O(N^{-3})$, there exists $\O_X \in \scrO^{r \times r}$ such that 
\begin{itemize}
    \item if $p_0 = o(1/\log N)$ or $p_0$ is bounded away from $0$,  we have \begin{equation}
    \|\widehat \X \O_X - \X^{\ast}\| \lesssim \frac{ \{(1-p_0) K^2   +1 \} K}{\sqrt{\lambda_{\min}}}  \sqrt{N} \sigma;
    \label{rate:2.1}
\end{equation}
    \item otherwise, 
    \begin{equation}
    \|\widehat \X \O_X - \X^{\ast}\| \lesssim \frac{ \{(1-p_0) K^2 (p_0 \log N) +1 \} K}{\sqrt{\lambda_{\min}}}  \sqrt{N} \sigma.
    \label{rate:2.2}
\end{equation}
\end{itemize}
\label{theorem: W13}
\end{theorem}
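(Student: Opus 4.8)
\noindent\emph{Proof proposal.}
The plan is to write the imputed matrix as $\widehat\W=\W_0^\ast+\Delta$ and to reduce the theorem to a spectral-norm bound on the perturbation $\Delta$ together with a translation from matrix error to factor error. Since $\widehat\W_r$ is the best rank-$r$ approximation of $\widehat\W$ and ${\rm rank}(\W_0^\ast)\le r$, one has $\|\widehat\W_r-\W_0^\ast\|\le 2\|\Delta\|$; combining this with a standard low-rank symmetric-factor perturbation bound of the form $\min_{\O\in\scrO^{r\times r}}\|\widehat\X\O-\X^\ast\|\lesssim \|\widehat\W_r-\W_0^\ast\|/\sqrt{\lambda_{\min}(\W_0^\ast)}$ (valid once $\|\Delta\|\ll\lambda_{\min}(\W_0^\ast)$, which Assumption \ref{assump: signal to noise ratio} secures), the whole statement reduces to (i) a spectral bound on $\Delta$ and (ii) a lower bound $\lambda_{\min}(\W_0^\ast)\gtrsim p_0\lambda_{\min}$ on the smallest nonzero eigenvalue of the realized sub-matrix. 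I would establish (ii), as well as the companion facts $\lambda_{\min}(\W^\ast_s)\gtrsim p_0\lambda_{\min}$ and the inheritance of incoherence $\|\V^\ast_s\|_{2,\infty}\lesssim\sqrt{\mu_0 r/|\calV_s|}$, by a matrix-concentration argument on the sampling operator driven by Assumptions \ref{assump:incoherence}--\ref{assump: prob}; these sub-matrix spectral lemmas (the high-probability backbone behind Proposition \ref{prop: exactly recover}) underlie every later estimate.

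I would then split $\Delta$ into its observed and imputed parts. On the entries of $\calV^\ast\times\calV^\ast$ covered by at least one source, $\Delta$ equals the (weighted-averaged) sub-Gaussian noise, so a covering/matrix-Bernstein argument gives $\|\Delta^{\rm obs}\|\lesssim\sigma\sqrt{n}\lesssim\sigma\sqrt{Np_0}$ with probability $1-O(N^{-3})$. The harder term is the imputation error $\|\widetilde\W_{sk}-\W^\ast\subsnkkns\|$. Writing the truth via Proposition \ref{prop: exactly recover} and the estimate via \eqref{eq: Wsk noise}, I would propagate perturbations factor by factor: Weyl's inequality and matrix-Bernstein yield $\|\widetilde\W_s-\W^\ast_s\|\lesssim\sigma\sqrt{|\calV_s|}$, hence $\|\widetilde\bSigma_s^{1/2}\O-(\bSigma^\ast_s)^{1/2}\|\lesssim\|\widetilde\W_s-\W^\ast_s\|/\sqrt{\lambda_{\min}(\W^\ast_s)}$; a Davis--Kahan bound gives $\|\widetilde\V_s\O_s-\V^\ast_s\|\lesssim\|\widetilde\W_s-\W^\ast_s\|/\lambda_{\min}(\W^\ast_s)$ (and likewise for $k$), and passing to row sub-blocks $\widetilde\V_{s1},\widetilde\V_{s2},\widetilde\V_{k1},\widetilde\V_{k2}$ only decreases these norms. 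Finally, since $\G(\cdot)$ is the orthogonal polar factor, a polar-factor perturbation inequality gives $\|\G(\widetilde\C)\O'-\G(\C^\ast)\|\lesssim\|\widetilde\C-\C^\ast\|/\sigma_{\min}(\C^\ast)$ with $\C^\ast=(\bSigma^\ast_s)^{1/2}(\V^\ast_{s2})^\top\V^\ast_{k1}(\bSigma^\ast_k)^{1/2}$. Multiplying the chained bounds and collecting the block norms through the inherited incoherence produces the amplification $K=r\mu_0\tau$ and its powers.

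The main obstacle is the lower bound on $\sigma_{\min}(\C^\ast)$: the Procrustes alignment is stable only when the cross-overlap Gram matrix $(\V^\ast_{s2})^\top\V^\ast_{k1}$ is well conditioned, and its smallest singular value degrades as the overlap $|\calV_s\cap\calV_k|\approx Np_0^2$ shrinks. Quantifying this, via the sub-matrix incoherence and a concentration bound on the overlap Gram matrix, is precisely what generates the factor $(1-p_0)K^2$ (the complement region $s\backslash k$ has size $\propto(1-p_0)$) and, in the intermediate range of $p_0$, the extra $p_0\log N$ coming from the less favourable concentration of the sampling operator when $p_0$ is neither $o(1/\log N)$ nor bounded away from $0$.

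Assembling the pieces, $\|\Delta\|\le\|\Delta^{\rm obs}\|+2\|\widetilde\W_{sk}-\W^\ast\subsnkkns\|\lesssim \sigma\sqrt{Np_0}\,\{(1-p_0)K^2+1\}$ (with the additional $p_0\log N$ factor in the intermediate regime), and dividing by $\sqrt{\lambda_{\min}(\W_0^\ast)}\gtrsim\sqrt{p_0\lambda_{\min}}$ cancels the $\sqrt{p_0}$, leaving $\|\widehat\X\O_X-\X^\ast\|\lesssim \frac{\{(1-p_0)K^2+1\}K}{\sqrt{\lambda_{\min}}}\sqrt{N}\sigma$, i.e.\ \eqref{rate:2.1}, and its intermediate-regime counterpart \eqref{rate:2.2}; the extra factor $K$ outside the bracket is accrued in the factor-perturbation step through the sub-matrix eigenvalue ratios and the condition number $\tau$. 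A union bound over the finitely many concentration events keeps the overall failure probability at $O(N^{-3})$.
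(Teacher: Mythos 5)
Your proposal is correct and follows essentially the same route as the paper's proof: high-probability spectral/incoherence lemmas for the sampled sub-matrices, a split of the error into observed noise ($\lesssim\sigma\sqrt{Np_0}$) plus imputation error, a factor-by-factor perturbation chain through the eigendecompositions and the polar factor $\G(\cdot)$ anchored on the lower bound $\sigma_{\min}\big((\bSigma_s^\ast)^{1/2}(\V_{s2}^\ast)^\top\V_{k1}^\ast(\bSigma_k^\ast)^{1/2}\big)\gtrsim p_0^2\lambda_{\min}$, and a final division by $\sqrt{\lambda_r(\W_0^\ast)}\gtrsim\sqrt{p_0\lambda_{\min}}$. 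The only point worth making explicit is that the $(1-p_0)$ and $p_0\log N$ factors are obtained in the paper not from the conditioning of the overlap Gram matrix alone but from a random-submatrix operator-norm bound (Tropp) applied to the row sub-blocks $\widetilde\A_1,\A_2,\widetilde\B_1,\B_2$, which is the quantitative version of your remark that restricting to the regions of size $\propto(1-p_0)n$ and $\propto p_0 n$ shrinks the norms.
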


\begin{remark}
Here we compare our result with the state of art result in matrix completion literature \citep{ma2018implicit} under the random missing condition. However, we should notice that their theorems don't hold under the current missing pattern since their entrywise independent sampling assumption is violated. 
Their operator norm error converges to 
\begin{equation}
    \|\widehat \X \O_X - \X^{\ast}\| \lesssim  \frac{\sigma}{\lambda_{\min} (\W_0^{\ast})} \sqrt{\frac{n}{p}} \|\X^{\ast}\|.
\label{rate:macong}
\end{equation}
Recall that $p$ is the entrywise sampling probability under their setting. Besides, we can show that $p \approx 1 - 2 (p_0-p_0^2)^2/(2p_0 - p_0^2)^2 = (2 - p_0^2)/(2-p_0)^2$, $n \approx N (2 p_0 - p_0^2) \approx N p_0$, $\lambda_{\min} (\W_0^{\ast}) \approx p_0 \lambda_{\min}$ and $ \|\X^{\ast}\| \approx \sqrt{ p_0 r \mu_0 \lambda_{\max}}$ (see the proof of Theorem \ref{theorem: W13}).  As a result, their error bound \eqref{rate:macong} reduces to 
\begin{equation}
    \|\widehat \X \O_X - \X^{\ast}\| \lesssim  \frac{ (2-p_0) \sqrt{K} }{\sqrt{ \lambda_{\min}}} \sqrt{N} \sigma .
\label{rate:ma2}    
\end{equation}
When $p_0 \to 1$, our rate is \eqref{rate:2.1}, which has a difference with \eqref{rate:ma2} in the order of $\sqrt{K}$; when $p_0 \to 0$, our rate is \eqref{rate:2.1} or \eqref{rate:2.2}, which  has the difference with \eqref{rate:ma2} in the order of $K^{5/2} \max \{1, p_0 \log N\}$. It means that our rate is same as theirs up to some constants or log factor, which means that the error bound can be similar even under different sampling scenarios. The additional factor $K$ may be caused by the dependence of the sampling pattern. 
\end{remark}


Based on Theorem \ref{theorem: W13}, we generalize it to $m>2$ sources and derive the following theorem. 

\begin{theorem}
Given $0 < \epsilon <1$, let $m = \lceil \log \epsilon/\log(1-p_0) \rceil$. Under Assumptions \ref{assump:incoherence}, \ref{assump: prob}, \ref{assump: signal to noise ratio}, and \ref{assump3}, with probability at least $1 - O\big( \frac{\log^2\epsilon}{\log^2(1-p_0)N^3} \big)$, we have $n \geq (1-\epsilon)N$ and there exists $\O_X \in \scrO^{r \times r}$ such that 
\begin{itemize}
    \item if $p_0 = o(1/\log N)$ or $p_0$ is bounded away from $0$,  we have 
\begin{equation}
     \|\widehat \X \O_X - \X^{\ast}\|\lesssim \Big \{ 1 + \frac{(1-p_0)  K^2 \log^2 \epsilon}{\log^2(1-p_0)} \sqrt{\frac{p_0}{1- (1-p_0)^m}} \Big\} K \sqrt{\frac{N_0}{\lambda_{\min}} } \sigma ;
\label{rate:3.1}
\end{equation}
    \item otherwise, 
    \begin{equation}
     \|\widehat \X \O_X - \X^{\ast}\|\lesssim \Big \{ 1 + \frac{ (1-p_0)  K^2 (p_0 \log N) \log^2 \epsilon}{\log^2(1-p_0)} \sqrt{\frac{p_0}{1- (1-p_0)^m}} \Big \} K \sqrt{\frac{N_0}{\lambda_{\min}}} \sigma.
\label{rate:3.2}
\end{equation}
\end{itemize}
\label{theorem:multi-sources}
\end{theorem}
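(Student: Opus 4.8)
The plan is to treat the $m$-source estimator as an aggregation of pairwise imputations and to reuse the two-source analysis of Theorem \ref{theorem: W13} together with the exact-recovery identity of Proposition \ref{prop: exactly recover}. I would organize the argument into three stages: (i) a coverage/overlap analysis that controls $n$ and the per-entry multiplicity $\calS_{ij}$; (ii) an aggregation analysis showing that the inverse-variance weighting in Step I produces an effectively lower-noise observed block; and (iii) an error-propagation analysis that turns the blockwise errors of $\widehat\W$ into the spectral bound on $\|\widehat\X\O_X-\X^\ast\|$.

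First I would handle coverage. Since $i\notin\calV^\ast$ iff $i\notin\calV_s$ for every $s$, we have $\PP(i\notin\calV^\ast)=\prod_{s}(1-p_s)\le(1-p_0)^m\le\epsilon$ by the choice $m=\lceil\log\epsilon/\log(1-p_0)\rceil$, so that $\EE[N-n]\le\epsilon N$; a Chernoff/Bernstein bound on $\sum_i\mathbbm{1}(i\notin\calV^\ast)$ then yields $n\ge(1-\epsilon)N$ with failure probability negligible compared to the imputation term below. In the same stage I would control, uniformly over pairs, the overlap sizes $|\calV_s\cap\calV_k|$ and the number of sources observing each entry: for an overlapping entry $(i,j)$ the multiplicity $\calS_{ij}$ concentrates around its $\mathrm{Binomial}$-type mean of order $m p_0^2$. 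This is what produces the normalization $\sqrt{p_0/(1-(1-p_0)^m)}$ and, via the incoherence Assumption \ref{assump:incoherence} exactly as in the two-source proof, guarantees that the aggregated signal block keeps $\lambda_{\min}(\W_0^\ast)\asymp(1-(1-p_0)^m)\lambda_{\min}$ of the right order.

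Second, for the aggregation I would show that $\widehat\sigma_s$ consistently estimates $\sigma_s$, so the weights $\alpha^s_{ij}$ are approximately the optimal inverse-variance weights, and then bound the spectral norm of the aggregated noise on the observed part of $\widetilde\W$: since each overlapping entry is an inverse-variance-weighted average over the sources that observe it, its effective variance is reduced, and a sub-Gaussian matrix concentration bound controls its spectral norm at the reduced level. Third, for each missing entry I would invoke the per-block imputation guarantee underlying Theorem \ref{theorem: W13}: the entry is filled by the best pair (smallest $\widehat\sigma_s+\widehat\sigma_k$) through the Procrustes-aligned formula \eqref{eq: Wsk noise}, whose error is governed by the perturbation of the eigenspaces $\widetilde\V_s,\widetilde\V_k$ and of the map $\G$. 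Taking a union bound over the $\binom{m}{2}=O(m^2)$ pairs---the source of both the $\log^2\epsilon/\log^2(1-p_0)$ factor and the $O(m^2/N^3)$ failure probability---I would assemble a spectral bound on $\widehat\W-\W_0^\ast$, whose leading contribution scales like $\sqrt{N_0}\,\sigma$ with $N_0$ the effective union dimension of order $n$, and finally apply the same eigenspace perturbation argument (Davis--Kahan/Wedin combined with the $\lambda_{\min}(\W_0^\ast)$ estimate above) used in Theorem \ref{theorem: W13} to pass to $\|\widehat\X\O_X-\X^\ast\|$ for an optimally chosen $\O_X\in\scrO^{r\times r}$.

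The main obstacle will be stage (iii): controlling the interaction between aggregation and imputation when a given missing entry is reachable through many source pairs and the ``best pair'' selection rule couples the randomness of the weights $\widehat\sigma_s$ with that of the sampling sets. Tracking the $m$-dependence through this selection---so that the pairwise errors accumulate only as $O(m^2)$ rather than being amplified---while simultaneously verifying that the aggregated overlap blocks retain well-separated spectra (so that $\G$ and the rank-$r$ eigendecompositions are uniformly stable across all pairs) is the delicate part; by contrast, the coverage and aggregation stages reduce to comparatively routine concentration arguments.
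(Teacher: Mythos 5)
Your proposal matches the paper's proof in all essentials: a Binomial/Bernstein argument for $n\ge(1-\epsilon)N$ and for $\lambda_r(\W_0^\ast)\gtrsim\{1-(1-p_0)^m\}\lambda_{\min}$, the aggregated-noise bound $\|\widetilde\E\|\lesssim\sqrt{Np_0}\,\sigma$, a sum of the $O(m^2)$ pairwise imputation errors from the two-source analysis (which is where the $\log^2\epsilon/\log^2(1-p_0)$ factor and the $O(m^2/N^3)$ failure probability come from), and the same final eigenspace perturbation step. The ``delicate'' coupling you flag in stage (iii) is dispatched trivially in the paper: one simply bounds $\|\widehat\W-\W_0^\ast\|$ by the sum over \emph{all} pairs of $\|\T^{sk}\circ(\widetilde\W_{sk}-\W^\ast_{s\backslash k,k\backslash s})\|\le\|\widetilde\W_{sk}-\W^\ast_{s\backslash k,k\backslash s}\|$ (the masks $\T^{sk}$ being block-wise), so no analysis of the selection rule is needed.
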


\def\cos{{\rm cos}}

\begin{remark}
The above theorem gives us guidance on how many sources we need to recover enough parts of $\W^{\ast}$. The order of $m$ can be $|1/\log(1-p_0)| \approx 1/p_0$ when $p_0$ is small. Besides, compared to \eqref{rate:2.1} and \eqref{rate:2.2}, the rates of \eqref{rate:3.1} and \eqref{rate:3.2} have only difference in the log terms, 
which means that even we choose $m$ of the maximum order above, the rate of our error bounds will not change too much. 
\end{remark}

\begin{remark}
Once we obtain the spectral error bound of $\|\widehat \X \O_X - \X^{\ast}\|$, we can utilize it to construct the bound of the translation accuracy. For example, following Remark \ref{rem: PMI}, we can bound $\| \widehat \X_i - \widehat \X_j \|$  when $\X_i^{\ast} = \X_j^{\ast}$ or $\PP\big({\rm cos}(\widehat \X_i, \widehat \X_j) \ge c\big)$ when $\cos(\X_i^{\ast}, \X_j^{\ast}) \le c_0$ for some $c_0$. According to the translation procedure, the translation accuracy can depend on the bound $\sup_{l \in \calV_k} | \cos(\widehat\X_i , \widehat \X_l) - \cos(\X_i^*, \X_l^*)|$ where $i \in \calV_s  \backslash \calV_k$. To bound the quantity, we need additional assumptions on the structures of the underlying matrix $\W^{\ast}$. For instance, if the entities $i$ and $j$ are synonym or translated pairs in different languages, then $\cos(\X_j^*, \X_i^*) > c_1$ for some constant $c_1$, otherwise, $\cos(\X_j^*, \X_i^*) \le c_0$ for some $c_0 < c_1$. 
\end{remark}

\section{Simulation}
\label{sec:simulation}
In this section, we show results from extensive simulation studies that examine the numerical
performance of Algorithm \ref{alg:OMC} on randomly generated matrices for various values of $p_0$, $m$ and $\sigma$. 

\subsection{Comparable Methods}
We compare with SMC \citep{cai2016structured} and two state-of-the-art matrix completion algorithms for the uniform sampling setting, alternating least squares (ALS) \citep{hastie2015matrix} and vanilla gradient descent (VGD) \citep{ma2018implicit}. Since SMC can only be applied to complete a single missing block, as illustrated by Figure \ref{fig:BONMI} (b), we use it to complete the missing blocks of each pair of sources. After all missing blocks are imputed, we use the rank-$r$ SVD to obtain the low-rank estimator for SMC. In addition, VGD and ALS can only operate on a single source matrix. To use the two algorithms, we apply them on $\widetilde \W$ created in Step I (b) of Algorithm \ref{alg:OMC}. For all methods, we use the true rank $r$.


Besides, an potential application of BONMI is machine translation. To be specific, in reality, the overlapping parts may not be known fully. For instance, $\{ \W^s \}_{s \in [m]}$ are multilingual co-occurrence matrices or PMI matrices \citep{levy2014neural}, then each vertex is a word and the overlapping parts are created by bilingual dictionaries, which are limited in some low-resource languages and always cover only a small proportion of the corpora. In this case, BONMI can utilize these matrices and their known overlap to train multilingual word embeddings (i.e., $\widehat \X$). For the words not known in the overlapping set, if their embeddings (i.e., rows of $\widehat \X$), are close enough, it means that they have a similar meaning and should be translated to each other. We evaluate the translation precision in the simulation setting (iii). As a baseline, we also compare BONMI to the popular orthogonal transformation method \citep{smith2017offline} which use the single-source low-rank factors $\widehat \X_s = \widehat \U_s \widehat \bSigma_s^{1/2}$ for $s \in [m]$. We denote the method as `Orth'. Another standard approach is to use one data source as pre-training and the new data sources to continue training. This effectively corresponds to imputing the missing blocks of the PMI matrix as zero. We call the method `Pre-trained'.

\subsection{Data Generation Mechanisms and Evaluation Metrics}
Throughout, we fix $N = 25, 000$ and $r=200$ which are compared to our real data. We then generate the random matrix $\W^\ast = \U^\ast \bSigma^\ast (\U^\ast)^\top$, where the eigenvalues of the diagonal matrix $\bSigma^\ast$ are generated independently from the uniform distribution ${\rm U}(\sqrt{N},4\sqrt{N})$. The singular space $\U^\ast$ is drawn randomly from the Haar measure. Specifically, we generate a matrix $\H \in \RR^{N \times r}$ with i.i.d. standard Gaussian entries, then apply the QR decomposition to $\H$ and assign $\U^\ast$ with the Q part of the result. To generate data for the $s$th source, we generate a sequence of independent Bernoulli random variables with success rate $p_0$: $\delta^s = (\delta^s_1,\dots,\delta^s_{N})$ to form the index set $\calV_s = \{i:\delta^s_i = 1,i \in [N]\}$, for $s \in [m]$. We then generate the noise matrix $\E^s \in \RR^{ |\calV_s|  \times |\calV_s|}$
with its upper triangular block including the diagonal elements from normal distribution $N(0,\sigma_s^2)$ and lower triangular block decided by symmetry, where we let the noise level $\sigma_s$ vary across the $m$ sources. 

We consider a total of three settings with the first two focusing on the task of matrix completion and setting 3 focusing on the downstream task of machine translation. For the matrix completion task, we consider two settings: (i) $m=2$, $\sigma_s = 0.1 s$, and let $p_0$ vary from $0.1$ to $0.3$;  (ii) $p_0=0.1$, $\sigma_s = 0.1$, and let $m$ vary from $2$ to $6$. For the machine translation task, we consider the setting (iii) where we let $m=2, 3$, $p=0.1$, $\sigma_s = s\sigma$ and let a noise level $\sigma$ vary from $0.3$ to $0.5$. 
 
To evaluate the performance of matrix completion, we use the relative ${\rm F}$-norm and spectral norm errors of the estimation of $\W_0^{\ast}$ defined as  
\begin{equation*}
   {\rm err}_{\rm F}(\widehat \W, \W_0^{\ast}) = \frac{\|\widehat \W -\W_0^{\ast}\|_{\rm F} }{\|\W_0^{\ast}\|_{\rm F}} \quad {\rm and} \quad {\rm err}_2(\widehat \W, \W_0^{\ast}) = \frac{\|\widehat \W -\W_0^{\ast}\| }{\|\W_0^{\ast}\|}.
\end{equation*}
To evaluate the overall performance of machine translation in setting (iii), we additionally generate test data for evaluation. Specifically, we additionally sample $n_{\rm test} = 2000$ vertices from $\calV \backslash \calV^{\ast}$ where $\calV^{\ast} = \cup_{s=1}^m \calV_s$, denoted as $\calV^{\rm test}$, and combine $\calV^{\rm test}$ and $\calV_s$ to get $\calV_s' = \calV^{\rm test} \cup \calV_s$ as the final vertex set of the $s$th source. We then use $\calV_s'$ to generate $\W^s$. Notice now $\E^s\in \RR^{ |\calV_s'| \times |\calV_s'|}$. However, we treat $\calV^{\rm test}$ as unique vertices across the $m$ sources, which means that we will not combine entries of $\calV^{\rm test}$ in Algorithm $\ref{alg:OMC}$. The role of $\calV^{\rm test}$ is exactly the testing set in machine translation. We average the $m-1$ translation precision from the $s$th source to the $1$th source, $s=2,\dots,m$. The translation precision is defined as follows: for a vertex $i \in \calV^{\rm test}$, we can get its embedding in the $s$th source corresponding to one row in $\widehat \X$, denoted as $\widehat \X_i$\,. Then we find its closest vector $\widehat \X_j$ for $j \in \calV_1'$ with largest cosine similarity as illustraed in Remark \ref{rem: PMI}. If the $j$th vertex in the $1$st source and the $i$th vertex in the $s$th source are the same vertex in $\W^{\ast}$, we treat it as a correct translation. The precision of the $s$th source is the ratio of right translations among the test test in the $s$th source.

\subsection{Results}
We summarize simulation results averaged over 50 replications for settings (i)-(ii) in Figure \ref{fig:setting2} and setting (iii) in Figure \ref{fig:setting3}. BONMI outperforms all competing methods across the three settings. 
\begin{figure}[t]
  \centering
 \begin{tabular}{c}
 \includegraphics[width=0.9\textwidth,height=0.4\linewidth]{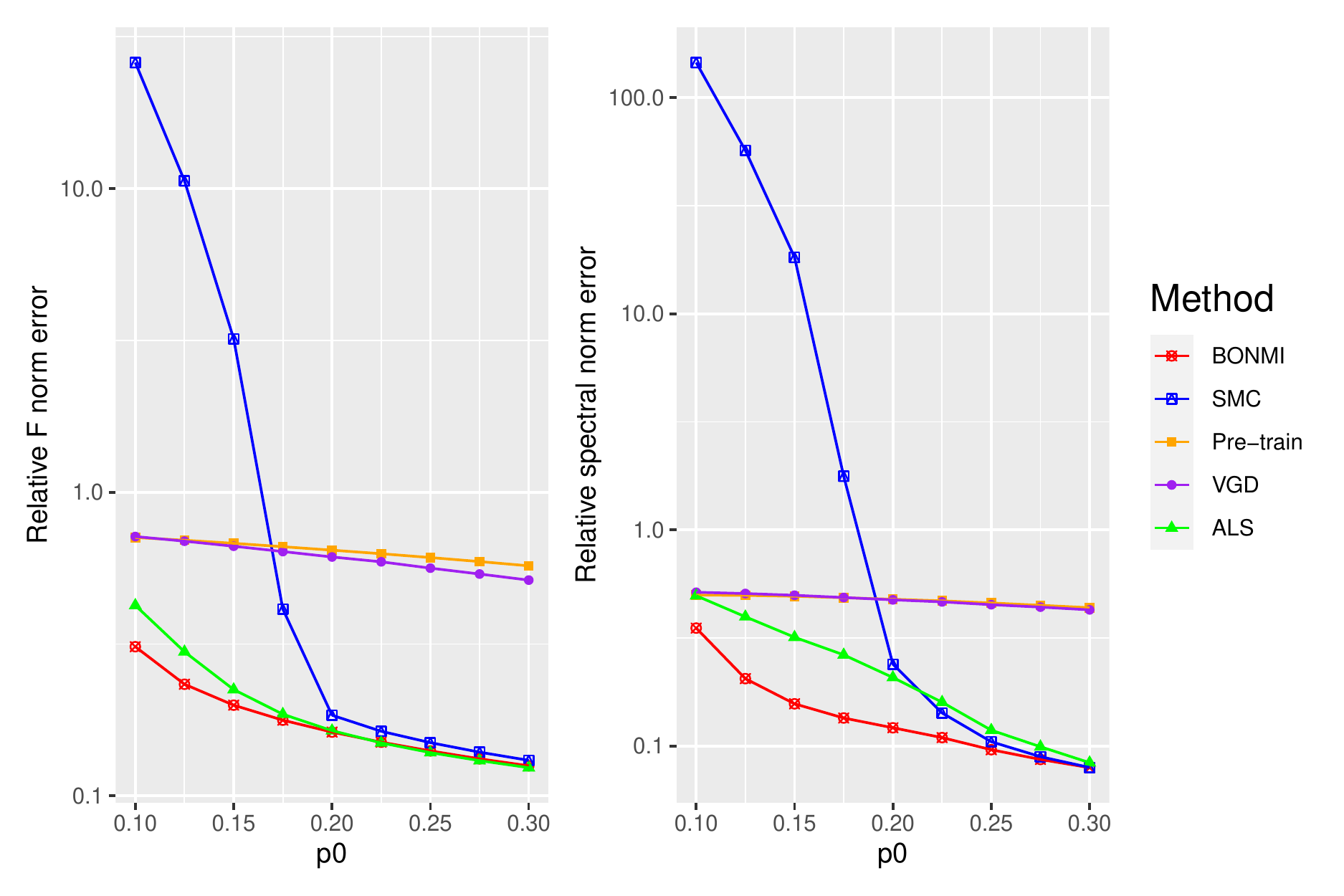}\\
 (a) setting (i): fix $m=2$ and range $p_0$ from $0.1$ to $0.3$. \label{fig:setting1}\\
 \includegraphics[width=0.9\textwidth,height=0.4\linewidth]{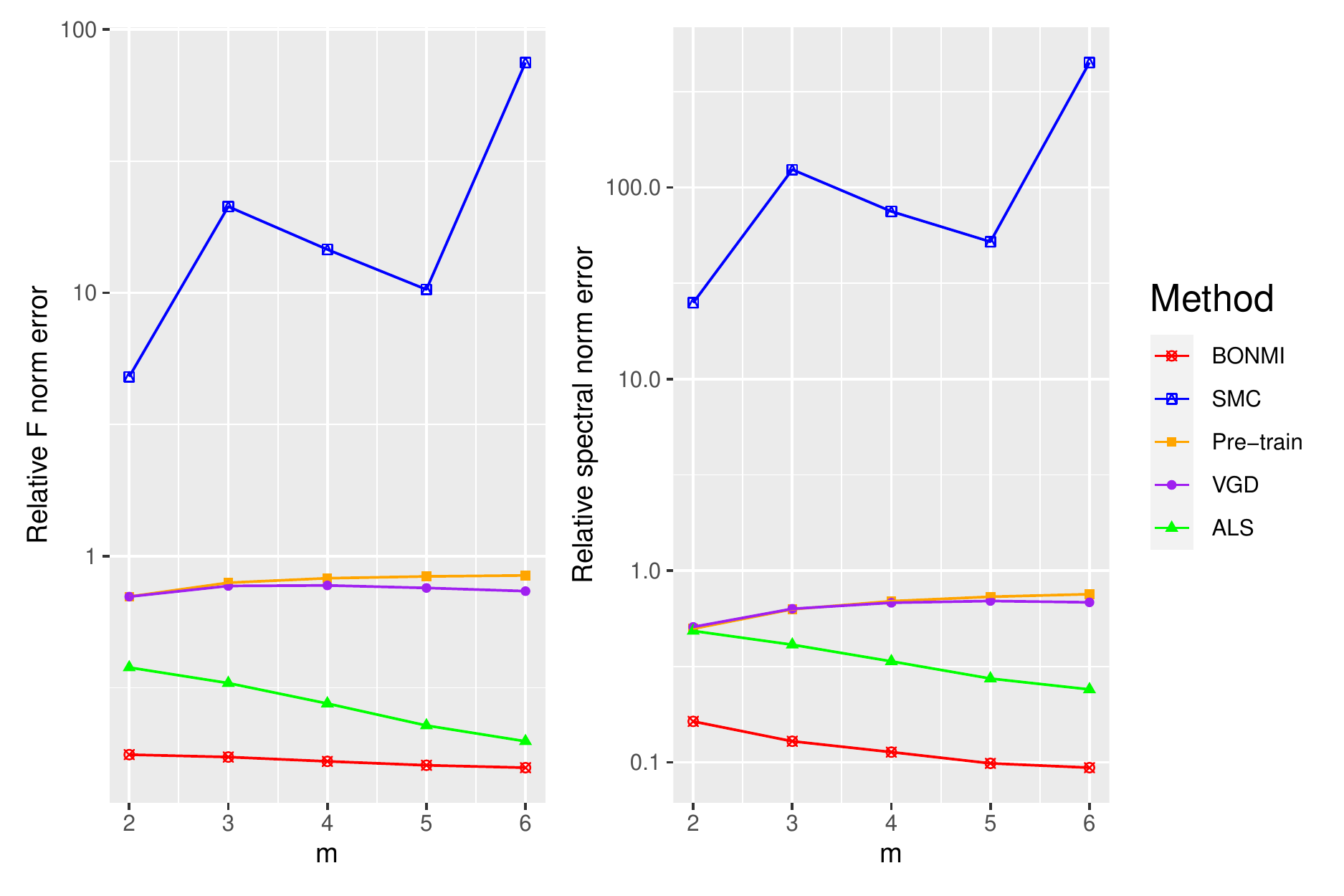} \\
 (b) setting (ii): fix $p_0=0.1$ and range $m$ from $2$ to $6$.
 \end{tabular}
 \caption{Simulation results of settings (i) and (ii). The relative estimation errors of $\W_0^\ast$ are presented.}
\label{fig:setting2}
\end{figure}
In settings (i) and (ii), the results of the ${\rm F}$-norm and spectral norm errors are consistent. In setting (i), we can see that the relative error of all methods decreases when the observation rate $p_0$ increases as expected. The advantage of BONMI in the accuracy of matrix completion is more pronounced when the observation rate $p_0$ is low. When $p_0$ is very small, SMC tends to fail. In setting (ii), the error of BONMI decreases as $m$ increases, which is due to the information gain from multiple sources. However, both the naive pre-training method and VGD do not always perform better as $m$ increases. Overall, BONMI dominates all competing methods across different choices of $m$.  In setting (iii), the translation precision of BONMI, ALS, and pre-training are $100\%$ when the noise strength $\sigma$ is small, but SMC and VGD both perform poorly for this task. As $\sigma$ increases, the performances of BONMI, ALS, and Pre-train all decrease but BONMI decreases less than others.

\begin{figure}[t]
  \centering
 \includegraphics[width=0.9\textwidth,height=0.6\linewidth]{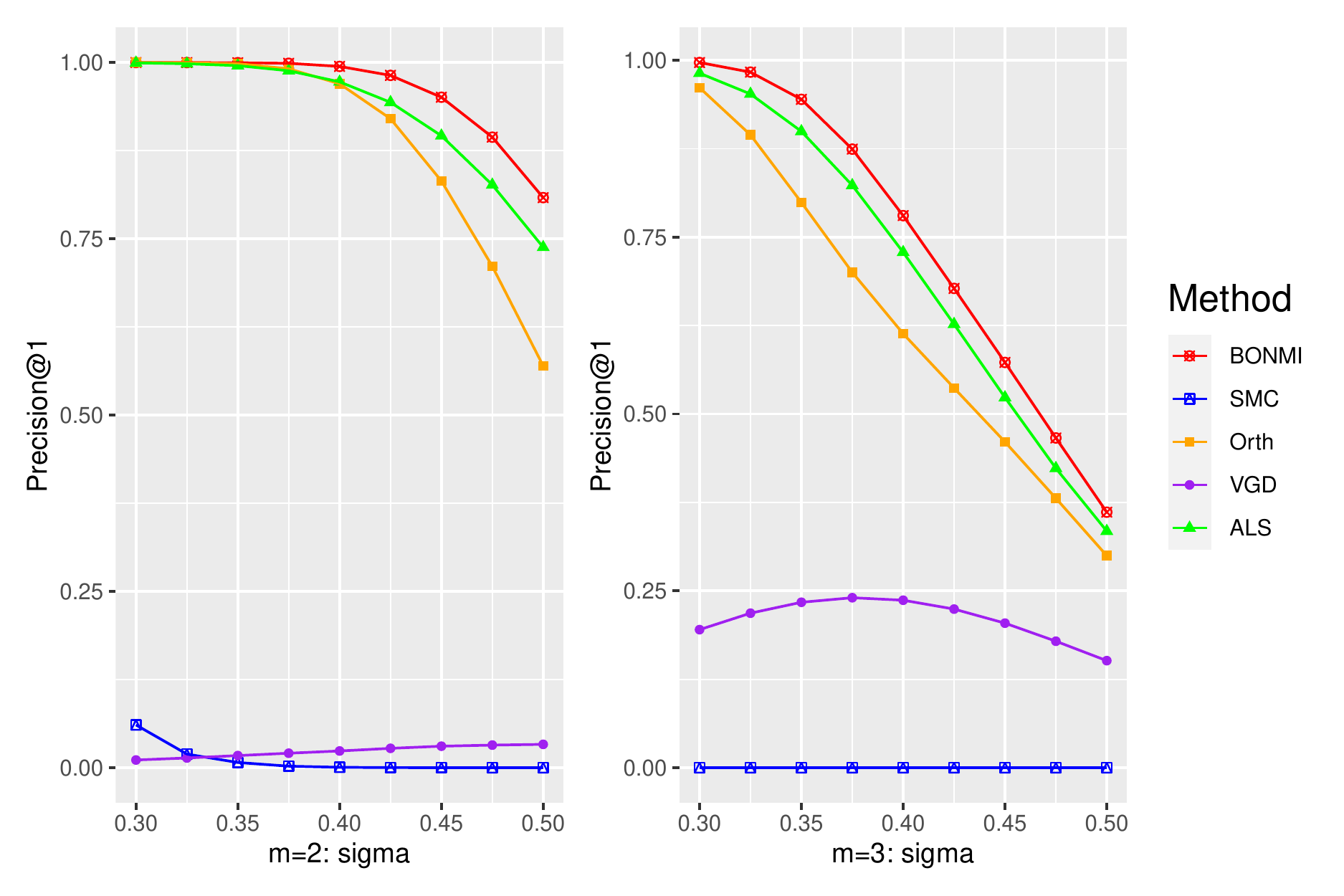} 
 \caption{setting (iii): fix $p_0=0.1$ and range $\sigma$ from $0.3$ to $0.5$. }
\label{fig:setting3}
\end{figure}

\section{Real Data Analysis}
\label{sec:data}

\def \PMI {\rm PMI}
\def \SPPMI {\rm SPPMI}
\def \CUI {\rm CUI}

\begin{figure}[htpb!]
  \centering
 \includegraphics[width=0.4\textwidth,height=0.4\linewidth]{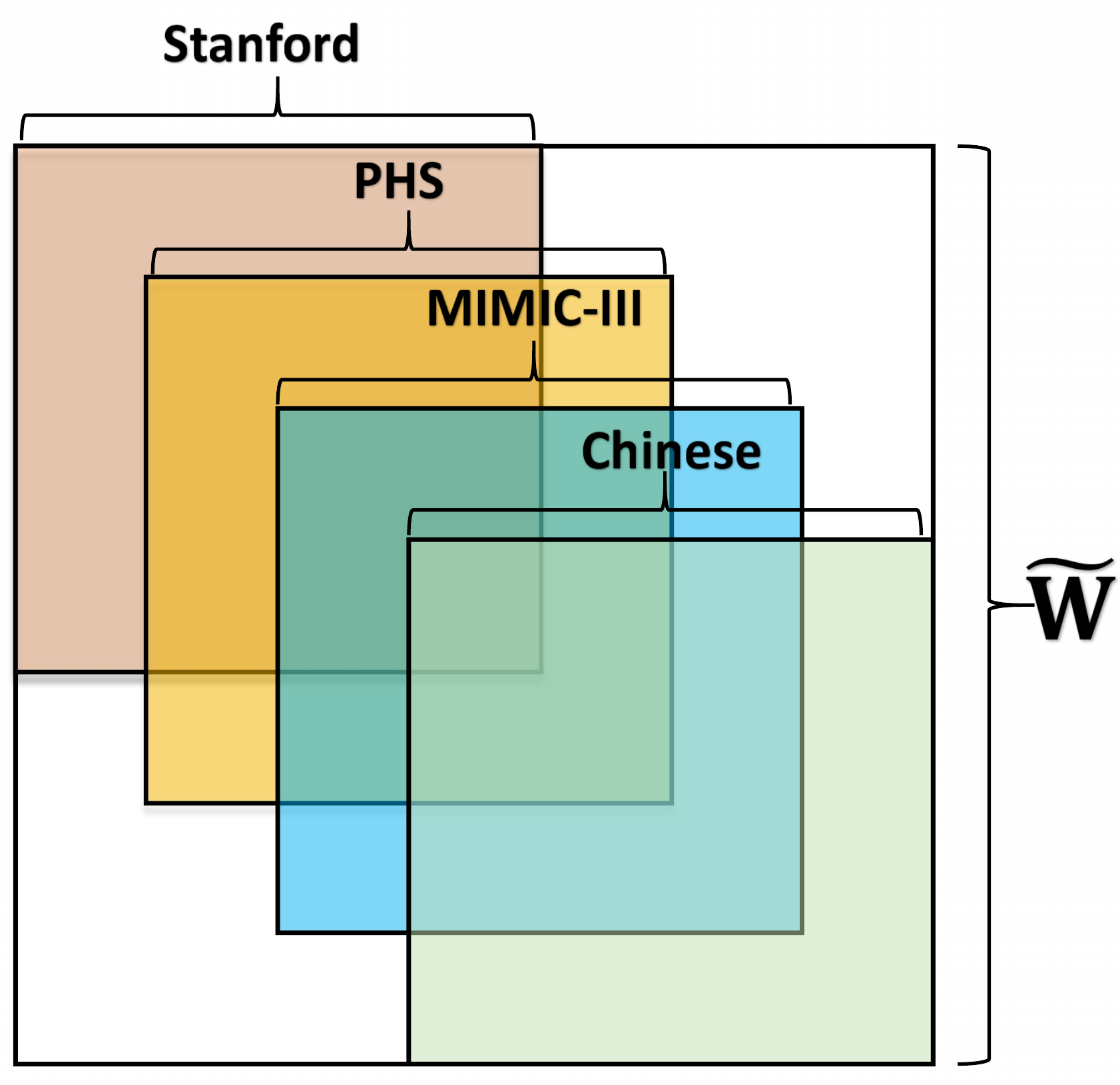}
 \caption{The aggregation of the four PMI matrices. The four sources all have an overlapping with each other source.}
\label{fig:realdata}
\end{figure}

In this section, we apply BONMI 
to obtain clinical concept embeddings using multiple PMI matrices in two different languages, English and Chinese. PMI is an information-theoretic association measure widely used in natural language processing \citep{church-hanks-1990-word, junwei}. Specifically, for two concepts $x$ and $y$, ${\rm PMI}(x,y)$ is defined as 
$$
{\rm PMI}(x,y) = \log   \frac{\PP(x,y)}{\PP(x) \PP(y)} \, ,
$$
where $\PP(x)$ and $\PP(y)$ are the occurrence probabilities of $x$ and $y$, respectively, and $\PP(x,y)$ is the co-occurrence probability of $x$ and $y$. All of them can be calculated from the co-occurrence matrix \citep{levy2014neural,beam2018clinical}. The clinical concepts in English have been mapped to {\em Concept Unique Identifiers} ($\CUI$s) in the Unified Medical Language System (UMLS) \citep{humphreys1993umls}. Factorization of PMI matrices has been shown highly effective in training word embeddings \citep{levy2014neural}. Our goal here is to enable integrating multiple PMI matrices to co-train clinical concept embeddings for both CUIs and Chinese clinical terms.

The input data ensemble consists of three CUI PMI matrices and one Chinese PMI matrix. The three CUI PMI matrices are independently derived from three data sources (i) $20$ million notes at Stanford \citep{finlayson2014building};  (ii) $10$ million notes of $62$K patients at Partners Healthcare System (PHS) \citep{beam2018clinical}; and (iii) health records from MIMIC-III, a freely accessible critical care database \citep{johnson2016mimic}. We choose sub-matrices from these sources by thresholding the frequency of these CUI and keeping those with semantic types related to medical concepts. Finally, we obtain the Stanford PMI with $8922$ CUI, the PHS PMI with $10964$ CUI, and the MIMIC CUI with $8524$ CUI. The mean overlapping CUI of any two sources is $4480$ and the total number of the unique CUI of the three sources is $17963$. 

Multiple sources of Chinese medical text data, such as medical textbooks and Wikipedia, are also collected. We then build a PMI matrix of Chinese medical terms with dimension $8628$. 
A Chinese-English medical dictionary is used to translate these Chinese medical terms to English, which are further mapped to CUI. Finally, we obtain $4201$ Chinese-CUI pairs, and we use $2000$ pairs as the training set (the known overlapping set) and the other $2201$ pairs as the test set to evaluate the translation precision. An illustration of the aggregation of the three CUI PMI matrices and one Chinese PMI matrix is presented in Figure \ref{fig:realdata}. 

\begin{table}[!ht]
\centering
\caption{Results of the integration of four PMI matrices: (a) rank correlation between the pairwise cosine similarity of estimated embedding vectors from the completed matrix  and the similarity or relatedness from human annotation; (b) accuracy in translation based on the estimated embedding vectors.}
\label{tab1}
\centerline{(a) Rank correlation with human annotations}
\begin{tabular}{c|ll|lllll}\hline
{Source} & Relationship & Set & BONMI & Pre-train & ALS & SMC & VGD \\ \hline
\multirow{4}{*}{Chinese-Chinese} 
& Relatedness & I & 0.741 & 0.756 & 0.747 & 0.066 & 0.761\\
& Relatedness & II& 0.661 & 0.659 & 0.659 & 0.327 & 0.663\\
& Similarity & I & 0.707 & 0.724 & 0.704 & 0.105 & 0.731\\
& Similarity & II & 0.716 & 0.728 & 0.718 & 0.271 & 0.726\\ \hline
\multirow{4}{*}{CUI-CUI} 
& Relatedness & I & 0.678 & 0.639 & 0.689 & 0.369 & 0.642\\
& Relatedness & II& 0.604 & 0.598 & 0.609 & 0.141 & 0.592\\
& Similarity & I& 0.614 & 0.600 & 0.608 & 0.243 & 0.582\\
& Similarity & II & 0.634 & 0.635 & 0.643 & 0.171 & 0.622\\ \hline
\multirow{4}{*}{Chinese-CUI} 
& Relatedness & I & 0.671 & 0.406 & 0.630 & 0.306 & 0.413\\
& Relatedness & II & 0.666 & 0.459 & 0.632 & 0.334 & 0.385\\
& Similarity & I & 0.611 & 0.324 & 0.569 & 0.352 & 0.339\\
& Similarity & II & 0.707 & 0.479 & 0.683 & 0.365 & 0.426\\ \hline
\end{tabular}\vspace{.2in}
\centerline{(b) Translation accuracy}
\begin{tabular}{c|l|lllll}\hline
 &  & BONMI & Pre-train & ALS & SMC & VGD \\ \hline
\multirow{3}{*}{Translation Precision}   & @5 & 0.398 & 0.051 & 0.324 & 0.020 & 0.049\\
& @10 & 0.478 & 0.095 & 0.411 & 0.029 & 0.092\\
& @20 & 0.554 & 0.167 & 0.485 & 0.041 & 0.147\\
\hline
\end{tabular}
\end{table}

From each method, we obtain the embedding vectors for all entities by performing an SVD on the imputed PMI matrix as discussed in Section 2. 
To evaluate the quality of the obtained embedding, we compare the cosine similarity of trained embeddings against the gold standard human annotations of the concept similarity and relatedness. We considered two sets of gold standard annotations: (I) relatedness and similarity of $566$ pairs of UMLS concepts in English previously annotated by eight researchers in \cite{30}; and (II) $200$ pairs of Chinese medical terms randomly selected  and annotated by four clinical experts. The $566$ UMLS concepts in English were also translated to Chinese. The Chinese medical terms were also translated into English and mapped to the UMLS CUI. Each concept pair thus can be viewed as CUI-CUI pairs, Chinese-Chinese pairs, and Chinese-CUI pairs. The gold standard human annotation assigns each concept pair a relatedness and similarity score, defined as the average score from all reviewers. For each concept pair, we compare the cosine similarity of their associated embeddings against the human annotations of their similiarity and relatedness. We evaluate the quality of the embeddings based on (i) the rank correlation between the cosine similarity and human annotation; and (ii) the accuracy in translating Chinese medical terms to CUIs in English. The Precision@$k$ is defined similarly as the translation accuracy defined in Section \ref{sec:simulation}. The difference is that when the truth CUI is among the CUIs with the top $k$ largest cosine similarity to the Chinese term, then it is treated as a correct translation. Precision@$k$ is the ratio of the correct translations given a $k$. Here we choose $k$ as $5$, $10$ and $20$.

To choose the rank of the matrix, we analyze the eigen decay of the matrices. The eigen decay has been widely used to determine the rank of low-rank matrices, for example, in principal component analysis \citep{jolliffe2005principal}, word embedding \citep{hong2021clinical} and network analysis \citep{arroyo2021inference}. We calculate the eigen decay of the overlapping sub-matrices of each pair of sources, and choose the rank $r$ that makes the cumulative eigenvalue percentage of at least one of the matrices more than $95\%$, which is $300$. We then use $r = 300$ for all methods. 

\subsection{Results}

We present the results of BONMI and other completing methods in Table \ref{tab1}. We can observe that all methods other than SMC perform similarly when assessing relatedness and similarity of Chinese-Chinese and CUI-CUI since these pairs belong to the same corpus. However, BONMI outperforms other methods when evaluating relatedness and similarity between Chinese-CUI pairs which belong to the missing blocks.  BONMI also attains higher accuracy in the translation task. This suggests that BONMI has the advantage over existing methods in providing embedding vectors that enable accurate assessment of relatedness between entity pairs that do not belong to the same corpus.

\section{Discussion} 
\label{sec:dis}

\subsection{Generalization to Asymmetric Matrices}
\label{sec:gen}

We consider the completion of the symmetric matrices with repeated observations inspired by the application of the integration of multi-sources PMI matrices. However, we also notice that there exist many applications involving the completion of missing blocks for asymmetric matrices, for example, genomic data integration \citep{cai2016structured}, multimodality data analysis \citep{xue2020integrating} and other applications mentioned in the introduction. Hence, in this section, we introduce an algorithm designed for asymmetric matrices without repeated observations. Now assume that $\W^\ast \in \RR^{N_1 \times N_2}$ of rank $r$ is asymmetric with $\lambda_{\max} = \sigma_1(\W^\ast) > \lambda_{\min} = \sigma_r(\W^\ast) > 0$. Furthermore, we have a noisy-corrupted matrix $\W$ such that
\begin{equation*}
    \W = \W^\ast + \E\, ,
\end{equation*}
where the entries of $\E$ are independent mean-zero sub-Gaussian noise with sub-Gaussian norm bounded by $\sigma$. For the $s$th source, we sample two index sets $\calV_{s1} \subseteq [N_1]$ and $\calV_{s2} \subseteq [N_2]$ independently such that for each $i \in [N_1]$ and  $j \in [N_2]$ , we assign $i$ to $\calV_{s1}$ with probability $p_{s1}$ and $j$ to $\calV_{s2}$ with probability $p_{s2}$ independently:  
$$\PP(i \in \calV_{s1} ) = p_{s1} \in (0,1), \text{ for } i \in [N_1], \quad \PP(j \in \calV_{s2}) = p_{s2} \in (0,1), \text{ for } j \in [N_2], s \in [m] \, .$$
With the index sets $\calV_{s1}$ and $\calV_{s2}$, a matrix $\W^s$ is observed
\begin{equation}
    \W^s = \W_{\calV_{s1}, \calV_{s2} } 
    \text{ for } s \in [m] \,.
\label{eq:mod2}   
\end{equation}
Let $\calV_1^{\ast} = \cup_{s=1}^m \calV_{s1}$ and $\calV_2^{\ast} = \cup_{s=1}^m \calV_{s2}$, then our task is to recover 
$$\W_0^{\ast} \equiv \W^\ast_{ \calV_1^{\ast}, \calV_2^{\ast}} =  \big[\W^\ast(i,j)\big]_{j \in \calV_2^{\ast}}^{i\in \calV_1^{\ast}} \in \RR^{n_1 \times n_2 }, \quad \mbox{where $n_k = |\calV_k^{\ast} |, k = 1,2$ .}
$$
Without loss of generality, we assume $\calV_1^{\ast} = [n_1]$ and  $\calV_2^{\ast} = [n_2]$. The estimation procedure is summarized in Algorithm \ref{alg:OMC2}. 

\begin{algorithm}[ht!]
\SetAlgoLined
{\bf Input:} $m$ matrices $\{\W^s\}_{s \in [m]}$ and the corresponding index sets $\big \{ \calV_{s1}, \calV_{s2} \big\}_{s \in [m]}$; the rank $r$; $n_1 = |\cup_{s=1}^m \calV_{s1}|$ and $n_2 = |\cup_{s=1}^m \calV_{s2}|$\; 
{\bf Step I Aggregation:} Create $\widetilde \W \in \RR^{n_1 \times n_2}$ as follows:
\begin{equation}
    \widetilde \W(i,j) =  \W^s(v_i^{s1}, v_j^{s2}) \text{ if } i \in \calV_{s1} \text{ and } j \in \calV_{s2} \text{ for some } s \in [m]
\end{equation}
for all pairs of $(i,j)$ such that $\calS_{ij} \equiv \sum_{s=1}^m \mathbbm{1}(i \in \calV_{s1}, j \in \calV_{s2}) > 0$, where $v_i^{s1}$($v_j^{s2}$) denotes the row(column) index in $\W^s$ corresponding to the $i$th row($j$th column) of $\W_0^\ast$, and the entries in the missing blocks with $\calS_{ij} = 0$ are initialized as zero. 

{\bf Step II (a) Spectral initialization:} \For{$1\leq s \leq m$}{
  Let $\widetilde \U_{s} \widetilde \bSigma_s \widetilde \V_{s}^{\top}$ be the rank-$r$ SVD of $\W^s$. 
} 

{\bf Step II (b) Estimation of missing parts:} \For{$1\leq s < k \leq m$}{
  Obtain $\widetilde \W_{s k}$ and $\widetilde \W_{k s}$ using $\widetilde \U_{s}, \widetilde \bSigma_s, \widetilde \V_{s}, \widetilde \U_{k}, \widetilde \bSigma_k, \widetilde \V_{k}$:
  \begin{equation*}
    \widetilde \W_{s k} \equiv  \widetilde \U_{s1} \widetilde \bSigma_s^{1/2} \G( \widetilde \bSigma_s^{1/2} \widetilde \U_{s2}^{\top} \widetilde \U_{k1}  \widetilde \bSigma_k^{1/2})   \widetilde \bSigma_k^{1/2} \widetilde \V_{k2}^\top \, ,
\end{equation*}
 \begin{equation*}
    \widetilde \W_{k s} \equiv  \widetilde \U_{k2} \widetilde \bSigma_k^{1/2} \G( \widetilde \bSigma_k^{1/2} \widetilde \V_{k1}^{\top} \widetilde \V_{s2}  \widetilde \bSigma_s^{1/2})   \widetilde \bSigma_s^{1/2} \widetilde \V_{s1}^\top \, .
\end{equation*}
Here $\widetilde \U_{s} = (\widetilde \U_{s1}^{\top},\widetilde \U_{s2}^{\top})^{\top}$, $\widetilde \U_{s} = (\widetilde \V_{s1}^{\top},\widetilde \V_{s2}^{\top})^{\top}$, $\widetilde \U_{k} = (\widetilde \U_{k1}^{\top},\widetilde \U_{k2}^{\top})^{\top}$, and $\widetilde \V_{k} = (\widetilde \V_{k1}^{\top},\widetilde \U_{k2}^{\top})^{\top}$ are decomposed similarly to \eqref{eq: Wsk noise}. $\widetilde \W_{s k}$ and $\widetilde \W_{s k}$ are used like in \eqref{def: hat W}. If a missing entry $(i,j)$ is estimated by multiple pairs of sources $(s,k)$, choose the one estimated by the first pair. Denote the imputed matrix as $\widehat \W$. } 
 {\bf Step III Low rank approximation:} Obtain the rank-$r$ SVD of $\widehat \W$: $\widehat \W_r = \widehat \U \widehat \bSigma \widehat \V^\top$. 
 {\bf Output: } $\widehat \U$, $\widehat \bSigma$,  $\widehat \V$.
 \caption{BONMI for asymmetric matrices.}
\label{alg:OMC2}
\end{algorithm}
The model \eqref{eq:mod2} is more general than \eqref{eq:mod} in two ways: (i) \eqref{eq:mod2} considers the asymmetric matrix and (ii) \eqref{eq:mod2} does not assume repeated observations. To be specific, the overlapping parts of $\W^s$ are the same for different sources, which means that they are only observed once and Step I Aggregation in Algorithm \ref{alg:OMC} is not applicable now. The two relaxations make the model \eqref{eq:mod2} more flexible and realistic for the applications mentioned above. Assume that $\W_0^{\ast}$ has SVD
\begin{equation*}
    \W_0^{\ast} = \U_0^{\ast} \bSigma_0^{\ast} (\V_0^{\ast} )^{\top} = \X^{\ast} (\Y^{\ast})^{\top},
\end{equation*}
where $\U_0^\ast \in \RR^{n_1 \times r}$ are the left-singular vectors, $\V_0^\ast \in \RR^{n_2 \times r}$ are the right-singular vectors, and $\bSigma_0^\ast$ is an $r \times r$ diagonal matrix with singular values in a descending order. In addition, $\X^{\ast} = \U_0^{\ast} ( \bSigma_0^{\ast})^{1/2} \in \RR^{n_1 \times r}$ and $\Y^{\ast} = \V_0^{\ast} ( \bSigma_0^{\ast})^{1/2} \in \RR^{n_2 \times r}$. Let $\widehat \X = \widehat \U \widehat \bSigma^{1/2}$ and $\widehat \Y = \widehat \V \widehat \bSigma^{1/2}$  be the output of Algorithm \ref{alg:OMC2}. Let $N = \max\{\,N_1,N_2\,\}$ and $p_0 = \min_{s \in [m]}\{\, p_{s1},p_{s2} \, \}$. Similar results to Theorem \ref{theorem: W13} and Theorem \ref{theorem:multi-sources} can be provided. For example, when $m=2$, if $p_0 = o (1/\log N \big)$ or $p_0$ is bounded away from zero, we can prove that under similar assumptions, 
$$\|\widehat \X \O_X - \X^{\ast}\|  \lesssim \frac{ \{(1-p_0) K^2   +1 \} K}{\sqrt{\lambda_{\min}}}  \sqrt{N} \sigma \, ,$$

$$\|\widehat \Y \O_Y - \Y^{\ast}\|  \lesssim \frac{ \{(1-p_0) K^2   +1 \} K}{\sqrt{\lambda_{\min}}}  \sqrt{N} \sigma\, , $$
and otherwise, 
$$
       \|\widehat \X \O_X - \X^{\ast}\| \lesssim \frac{ \{(1-p_0) K^2 (p_0 \log N) +1 \} K}{\sqrt{\lambda_{\min}}}  \sqrt{N} \sigma  \, , 
$$ 
$$
       \|\widehat \Y \O_Y - \Y^{\ast}\| \lesssim \frac{ \{(1-p_0) K^2 (p_0 \log N) +1 \} K}{\sqrt{\lambda_{\min}}}  \sqrt{N} \sigma \, ,
$$ 
for some rotation matrices $\O_X,\O_Y \in \scrO^{r \times r}$. The proofs are the simple extensions of the proof of  Theorem \ref{theorem: W13} and Theorem \ref{theorem:multi-sources}.  


\subsection{Conclusion}
This paper proposes BONMI, which aims at the matrix completion under the block-wise missing pattern. Our method is computationally efficient and attains near-optimal error bound. The performance of our algorithm is verified by simulation and real data analysis. For theoretical guarantee, we require the sampling probability of each source has the order of $\sqrt{\log N/N}$, which is a small order of $N$ and can be satisfied easily in many applications. Besides, we extend BONMI to asymmetric matrices without repeated observations for other potential applications such as genomic data integration. 




\normalem
\bibliographystyle{apalike}
\bibliography{ref}

\newpage
\begin{center}
\textit{\large Technical supplementary material to}
\end{center}
\begin{center}
\title{\LARGE Multi-source Learning via Completion of Block-wise Overlapping Noisy Matrices}
\vskip10pt
\author{Doudou Zhou, Tianxi Cai and Junwei Lu\\}
\end{center}

\setcounter{section}{0}
\renewcommand{\thesection}{S.\arabic{section}}
\setcounter{equation}{0}
\counterwithout{equation}{section}
\renewcommand{\theequation}{S.\arabic{equation}}
\setcounter{theorem}{0}
\counterwithout{theorem}{section}
\renewcommand{\thetheorem}{S.\arabic{theorem}}
\renewcommand{\thelemma}{S.\arabic{theorem}}

This document contains the supplementary material to the paper ``Multi-source Learning via Completion of Block-wise Overlapping Noisy Matrices". We mainly provide technical details of proving Proposition \ref{prop: exactly recover}, Theorems \ref{theorem: W13} and \ref{theorem:multi-sources} here.


\section{Proof of Proposition \ref{prop: exactly recover}}

\begin{proof}
First, we prove ${\rm rank}(\W^\ast\subsksk)=r$. Let $n_{sk} \equiv |\calV_s \cap \calV_k|$, then by Lemma \ref{prop: ns}, we have 
\begin{equation}
 n_{s k} \geq \frac{3 p_s p_k N}{2} \geq C \mu_0 r \log N
\end{equation}
holds with probability $1- O(1/N^3)$ for some sufficiently large constant $C$, where the second inequality comes from Assumption \ref{assump: prob}. Then, by Lemma \ref{lemma:lower bound U}, we have
$$
\lambda_{r}(\W^\ast\subsksk) 
\geq \sigma_{\min}(\U^{\ast}_{s\cap k}) \lambda_{r}(\bSigma^{\ast})  \sigma_{\min}(\U^{\ast}_{s\cap k} ) \geq \frac{n_{sk} \lambda_{\min}}{2 N}  > 0
$$
with probability $1-1/N^3$. As a result, ${\rm rank}(\W^\ast\subsksk)=r$ with probability as least $1 - O(1/N^3)$. Under the event, since $\W^\ast\subsksk$ is a principal sub-matrix of $\W_s^{\ast}$, we have ${\rm rank}(\W_s^{\ast}) \geq {\rm rank}(\W^\ast\subsksk)$. Besides, $\W_s^{\ast}$ is a principal sub-matrix of $\W^{\ast}$, we have ${\rm rank}(\W_s^{\ast}) \leq {\rm rank}(\W^{\ast}) = r$. Combining the two inequalities, we have ${\rm rank}(\W_s^{\ast}) = r$. The same conclusion holds for $\W_k^{\ast}$. We then prove that  $\W^\ast\subsnkkns$ has the representation of \eqref{eq: Wsk}. Recall the eigen-decomposition of $\W^{\ast} = \U^{\ast} \bSigma^{\ast} (\U^{\ast} )^{\top}$ and by definition we will have 
\begin{equation}
    \W^\ast\subsksk  =  \U^{\ast}_{s\cap k} \bSigma^{\ast} (\U^{\ast}_{s\cap k} )^{\top} = \V_{s2}^{\ast} \bSigma_s^{\ast} (\V_{s2}^{\ast})^\top = \V_{k1}^{\ast} \bSigma_k^{\ast} (\V_{k1}^{\ast})^\top,
\label{eq:A1}    
\end{equation}
which also implies that ${\rm rank}(\V_{s2}^{\ast}) = {\rm rank}( \V_{k1}^{\ast}) = r$.  Multiplying  $\V_{k1}^{\ast}$ on the both sides of the last equation, we obtain 
$$\V_{s2}^{\ast} (\bSigma_s^{\ast})^{1/2} (\bSigma_s^{\ast})^{1/2}  (\V_{s2}^{\ast})^\top \V_{k1}^{\ast} = \V_{k1}^{\ast} (\bSigma_k^{\ast})^{1/2} (\bSigma_k^{\ast})^{1/2}  (\V_{k1}^{\ast})^\top \V_{k1}^{\ast}$$
and the following equation
$$\V_{k1}^{\ast} (\bSigma_k^{\ast})^{1/2} = \V_{s2}^{\ast} (\bSigma_s^{\ast})^{1/2} \widehat \R,$$ where $\widehat \R = (\bSigma_s^{\ast})^{1/2} (\V_{s2}^{\ast})^\top \V_{k1}^{\ast} \big( (\V_{k1}^{\ast})^\top \V_{k1}^{\ast}\big)^{-1}  (\bSigma_k^{\ast})^{-1/2}.$ It is easy to verify that $\widehat \R^{\top} \widehat \R = \mathbf{I}_r$ and then it is obvious that $$ \widehat \R = \arg \min_{\R \in \scrO^{r \times r}} \|\V_{s2}^{\ast} (\bSigma_s^{\ast})^{1/2} \R - \V_{k1}^{\ast} (\bSigma_k^{\ast})^{1/2} \|_{\rm F}.$$
Then by  Lemma 22 of \cite{ma2018implicit}, we prove that $\widehat \R = \G\big((\V_{s2}^{\ast} (\bSigma_s^{\ast})^{1/2})^\top \V_{k1}^{\ast} (\bSigma_k^{\ast})^{1/2} \big)$. 

Again by \eqref{eq:A1}, we have
\begin{equation}
     (\U^{\ast}_{s\cap k} )^{\top} = \bSigma^{^{\ast}-1} \big( (\U^{\ast}_{s\cap k} )^{\top} \U^{\ast}_{s\cap k} \big)^{-1} (\U^{\ast}_{s\cap k} )^{\top} \V_{k1}^{\ast} \bSigma_k^{\ast} (\V_{k1}^{\ast})^\top.
    \label{eq:A2}
\end{equation}
 In addition, we have
 \begin{equation}
 \begin{aligned}
     \U_{s\cap k}^{\ast} \bSigma^{\ast} (\U_{s\backslash k})^\top & =  \W^\ast\subskkns = \V_{k1}^{\ast} \bSigma_k^{\ast} (\V_{k2}^{\ast})^\top
 \end{aligned}
\label{eq:A3}
 \end{equation}

Combining \eqref{eq:A2} and \eqref{eq:A3}, we have
\begin{equation}
\begin{aligned}
         & (\U^{\ast}_{s\cap k} )^{\top} \V_{k1}^{\ast} \{(\V_{k1}^{\ast})^\top \V_{k1}^{\ast}\}^{-1} (\V_{k2}^{\ast})^\top \\
         & = \bSigma^{^{\ast}-1} \big( (\U^{\ast}_{s\cap k} )^{\top} \U^{\ast}_{s\cap k} \big)^{-1} (\U^{\ast}_{s\cap k} )^{\top} \V_{k1}^{\ast} \bSigma_k^{\ast} (\V_{k1}^{\ast})^\top \V_{k1}^{\ast} \{(\V_{k1}^{\ast})^\top \V_{k1}^{\ast}\}^{-1} (\V_{k2}^{\ast})^\top \\
         & = \bSigma^{^{\ast}-1} \big( (\U^{\ast}_{s\cap k} )^{\top} \U^{\ast}_{s\cap k} \big)^{-1} (\U^{\ast}_{s\cap k} )^{\top} \V_{k1}^{\ast} \bSigma_k^{\ast} (\V_{k2}^{\ast})^\top
         \\ 
         & =  \bSigma^{^{\ast}-1} \big( (\U^{\ast}_{s\cap k} )^{\top} \U^{\ast}_{s\cap k} \big)^{-1} (\U^{\ast}_{s\cap k} )^{\top}  \U_{s\cap k}^{\ast} \bSigma^{\ast} (\U_{s\backslash k})^\top  = (\U_{s\backslash k})^\top,
\label{eq:A4}
\end{aligned}
\end{equation}
where the first equation comes from \eqref{eq:A2} and the second equation comes from \eqref{eq:A3}. 

Finally, 

\begin{equation*}
\begin{aligned}
& \V_{s1}^{\ast} ( \bSigma_s^{\ast})^{1/2} \G( ( \bSigma_s^{\ast})^{1/2} (\V_{s2}^{\ast})^{\top} \V_{k1}^{\ast} (\bSigma_k^{\ast})^{1/2})  ( \bSigma_k^{\ast})^{1/2} (\V_{k2}^{\ast})^{\top} \\
& = \V_{s1}^{\ast} ( \bSigma_s^{\ast})^{1/2}  (\bSigma_s^{\ast})^{1/2} (\V_{s2}^{\ast})^\top \V_{k1}^{\ast} \{(\V_{k1}^{\ast})^\top \V_{k1}^{\ast}\}^{-1}  (\bSigma_k^{\ast})^{-1/2}  ( \bSigma_k^{\ast})^{1/2} (\V_{k2}^{\ast})^{\top} \\
& = \U^{\ast}_{s\backslash k} \bSigma^{\ast} (\U^{\ast}_{s \cap k })^\top \V_{k1}^{\ast} \{(\V_{k1}^{\ast})^\top \V_{k1}^{\ast}\}^{-1}   (\V_{k2}^{\ast})^{\top} \\
& = \U^{\ast}_{s\backslash k} \bSigma^{\ast} (\U^{\ast}_{k \backslash s})^\top = \W^{\ast}_{s\backslash k,k \backslash s},
\end{aligned}
\end{equation*}
where the first equation comes from $\widehat \R = \G\big((\V_{s2}^{\ast} (\bSigma_s^{\ast})^{1/2})^\top \V_{k1}^{\ast} (\bSigma_k^{\ast})^{1/2} \big)$, the second equation comes from $\V_{s1}^{\ast}  \bSigma_s^{\ast} (\V_{s2}^{\ast})^\top = \W^{\ast}_{s\backslash k,s \cap k } = \U^{\ast}_{s\backslash k} \bSigma^{\ast} (\U^{\ast}_{s \cap k })^\top$, and the third equation comes from \eqref{eq:A4}.
Then we finish the proof. 
\end{proof}

\section{Proof of Theorem \ref{theorem: W13}}
\label{proof:W13}

When $m=2$, we adopt the notations of Section \ref{method} by assuming the two observed sub-matrices are $\W^s$ and $\W^k$. To prove the theorem, recall that $\widetilde \W_{s k}$ defined by \eqref{eq: Wsk noise} is the estimate of $\W^\ast\subsnkkns$, the main effort lies on the perturbation bound of $\| \widetilde \W_{s k} - \W^\ast\subsnkkns \|$. After we obtain it, the perturbation bound of $\| \widetilde \W - \W^{\ast} \|$ can also be figured out. As a result, the error of the rank $r$ factorization of $\widetilde \W$ can also be bounded, which leads to Theorem \ref{theorem: W13}. Before we derive $\| \widetilde \W_{s k} - \W^\ast\subsnkkns \|$, we need the basic spectral properties of $\W_0^{\ast}$ defined in \eqref{eq: Wstar},  $\W_s^{\ast}$, $\W_k^{\ast}$ defined in \eqref{eq: Ws Wk}, which are presented in the Section \ref{sec:property}. 

\subsection{Characterization of The Underlying Matrix}
\label{sec:property}
Let $n_s \equiv |\calV_s|$, $n_k \equiv |\calV_k|$ and $n_{sk} \equiv |\calV_s \cap \calV_k|$. First, by Lemma \ref{prop: ns}, we have 
\begin{equation}
    \frac{p_l N}{2} \leq n_l \leq \frac{3 p_l N}{2}, l=s,k \ {\rm and} \ \frac{p_s p_k N}{2} \leq n_{s k} \leq \frac{3 p_s p_k N}{2}
\label{cond1}    
\end{equation}
hold simultaneously with probability $1- O(1/N^3)$. Throughout, our analysis is conditional on \eqref{cond1}. By Proposition \ref{sigma_Ws}, we have  
 $$\lambda_r (\W^{\ast}_l) \geq \frac{n_l \lambda_{\min}}{2 N}\geq \frac{p_l \lambda_{\min}}{4}, l = s,k$$ 
 hold simultaneously with  probability $1-2/N^3$ since by the Assumption \ref{assump: prob}, we have $n_l \geq N p_0/2 \geq 16 \mu_0 r (\log r  + \log N^3), l=s, k$. Then by Lemma \ref{lemma: Incoherence condition of principal submatrix}, we will have $$\mu_l := \mu(\V_l^{\ast}) = \frac{n_l}{r} \max_{i=1,\dots,n_l} \sum_{j=1}^r \V_l^{\ast}(i,j)^2 \leq 2 \tau \mu_0, l = s, k.$$
 In addition, by Proposition \ref{prop:sigma1}, we have 
$$\lambda_1(\W_l^{\ast}) \leq \frac{n_l r \mu_0}{N} \lambda_{\max} \leq \frac{3 p_l r \mu_0}{2} \lambda_{\max}, l =s, k.$$ 
As a result, we have the condition number of $\W_l^{\ast}$:
\begin{equation}
    \tau_l := \lambda_1(\W_l^{\ast})/\lambda_r(\W_l^{\ast}) \leq 6 r \mu_0 \tau, l = s,k.
\end{equation}

\subsection{Imputation Error}
\label{sec: Imputation Error}
After we characterize the spectral properties of $\W_0^{\ast}$ defined in \eqref{eq: Wstar},  $\W_l^{\ast}, l =s,k$, we begin to control  $\| \widetilde \W_{s k} - \W^\ast\subsnkkns  \|$. Using the notations of Proposition \ref{prop: exactly recover} and Section \ref{method}, we define 
\begin{equation}
\begin{aligned}
& \A = \V_s^{\ast} (\bSigma_s^{\ast})^{1/2}; \ \B = \V_k^{\ast} (\bSigma_k^{\ast})^{1/2}; \ \widetilde \A = \widetilde \V_s (\widetilde \bSigma_s)^{1/2}; \ \widetilde \B = \widetilde \V_k (\widetilde \bSigma_k)^{1/2}; \\
& \A_1 = \V_{11}^{\ast} (\bSigma_1^{\ast})^{1/2}; \ \A_2 = \V_{12}^{\ast} (\bSigma_1^{\ast})^{1/2}; \ \B_1 = \V_{21}^{\ast} (\bSigma_2^{\ast})^{1/2}; \ \B_2 = \V_{22}^{\ast} (\bSigma_1^{\ast})^{1/2}; \\
& \widetilde \A_1 = \widetilde \V_{11} (\widetilde \bSigma_1)^{1/2}; \ \widetilde \A_2 = \widetilde \V_{12} (\widetilde \bSigma_1)^{1/2}; \ \widetilde \B_1 = \widetilde \V_{21} (\widetilde \bSigma_2)^{1/2}; \ \widetilde \B_2 = \widetilde \V_{22} (\widetilde \bSigma_2)^{1/2}
\end{aligned}
\end{equation}
and $\Q_A = \G(\widetilde \A^\top \A)$, $\Q_B = \G(\widetilde \B^\top \B)$, $\widetilde \O = \G(\widetilde \A_2^\top \widetilde \B_1)$. It is easy to see that 
\begin{equation}
\widetilde \W_{s k} =  \widetilde \A_1 \widetilde \O^\top \widetilde \B_2^\top = \widetilde \A_1 \Q_A (\Q_A^\top \widetilde \O^\top \Q_B) \Q_B^\top \widetilde \B_2^\top = \widetilde \A_1 \Q_A \G (\Q_B^\top \widetilde \B_1^\top \widetilde \A_2 \Q_A) \Q_B^\top \widetilde \B_2^\top.  
\label{def:A,B}
\end{equation}

Then by Proposition \ref{prop: exactly recover},  we have 
\begin{equation}
\begin{aligned}
 &  \|\widetilde \W_{s k} - \W^\ast\subsnkkns \|   = 
    \|(\widetilde \A_1 \Q_A) (\Q_A^\top \widetilde \O^\top \Q_B) (\Q_B^\top \widetilde \B_2^\top) - \A_1 \O^\top \B_2^\top \| \\ 
    & = \| (\widetilde \A_1 \Q_A) (\Q_A^\top \widetilde \O^\top \Q_B) (\Q_B^\top \widetilde \B_2^\top) - \A_1 (\Q_A^\top \widetilde \O^\top \Q_B) (\Q_B^\top \widetilde \B_2^\top)\\ & + \A_1 (\Q_A^\top \widetilde \O^\top \Q_B) (\Q_B^\top \widetilde \B_2^\top) - \A_1 (\Q_A^\top \widetilde \O^\top \Q_B)  \B_2^\top \\& + \A_1 (\Q_A^\top \widetilde \O^\top \Q_B)\B_2^\top - \A_1 \O^\top \B_2^\top\| \\
    & \leq \|\widetilde \B_2\| \| \widetilde \A_1 \Q_A - \A_1\|  + \|\widetilde \A_1\| \| \widetilde \B_2 \Q_B - \B_2\| + \|\A_1\| \|\B_2\| \| \Q_A^\top \widetilde \O^\top \Q_B - \O\|. 
\end{aligned}    
\end{equation}
Applying Proposition \ref{prop: Probability Bound}, Lemma \ref{lemma: tilde Ws}, Lemma \ref{lemma:A,B},  Lemma \ref{lemma: orthogonal}, with $f(p_0, N)$ defined in \eqref{def: f(p0,N)}, we have 
\begin{equation}
\begin{aligned}
 &   \|\widetilde \W_{s k} - \W^\ast\subsnkkns \| \\
    &\lesssim (1-p_0)(\|\B\| \| \widetilde \A \Q_A - \A\| + \|\A\| \| \widetilde \B \Q_B - \B\| + \|\A\| \|\B\| \| \Q_A^\top \widetilde \O^\top \Q_B - \O\|) \\
    & \lesssim  (1-p_0) \left \{ r \mu_0 \tau +  f(p_0, N)^2 (r \mu_0 \tau)^2 \right \} (\|\widetilde \E_1\| + \|\widetilde \E_2\|) \\
    & \lesssim (1-p_0)  (r \mu_0 \tau)^2  f(p_0, N)^2 \sqrt{N p_0} \sigma
\end{aligned}    
\end{equation}
with probability $1-20/N^3 = 1 - O(1/N^3)$.

\subsection{Completion Error}
\label{sec: completion error}
After we impute the missing blocks, we can bound $\| \widehat \W - \W^{\ast} \|$ where $\widehat \W$ is defined as \eqref{def: hat W}. Notice that 
\begin{equation}
    \widehat \W = \W^{\ast} +   \widetilde \E + \widetilde \F,
\label{eq: dec hatW}    
\end{equation}
where $$\widetilde \E = \begin{bmatrix}
\E\subsnksnk^s & \E\subsnksk^s & \O \\
\E\subsksnk^s &\alpha_s \E\subsksk^s + \alpha_k \E\subsksk^k & \E\subskkns^k\\
\O & \E\subknssk^k  &  \E\subknskns^k   \\
\end{bmatrix} ,$$ 
 and $$\widetilde \F = \begin{bmatrix}
\O & \O & \widetilde \W_{s k} - \W^\ast\subsnkkns \\
\O &  \O & \O\\
\widetilde \W^{\top}_{s k} - \W^\ast\subknssnk & \O &  \O \\
\end{bmatrix}.$$ Then we only need to bound $\|\widetilde \E\|$ and $\| \widetilde \F \|$. It is easy to see that $\| \widetilde \F \| = \|\widetilde \W_{s k} - \W^\ast\subsnkkns\|$, then we only need to bound $\|\widetilde \E\|$.  First, by Corollary $3.3$ of  \cite{bandeira2016sharp}, we have 
$$\EE \| \widetilde \E \| \lesssim \sigma^{\ast} + \sigma \sqrt{\log n},$$
where $\sigma = \max \{\sigma_s,\sigma_k\}$ and $\sigma^{\ast} = \max_i \sqrt{\sum_j \EE \widetilde \E_{ij}^2 }$. It is easy to see that 
$$\sigma^{\ast}  =  \max \{\sqrt{n_s} \sigma_s,\sqrt{n_k} \sigma_k,\sqrt{ (n_s - n_{sk})\sigma_s^2 + (n_k - n_{sk})\sigma_k^2 + n_{sk}(\alpha_s^2 \sigma_s^2 + \alpha_k^2 \sigma_k^2)} \}.$$
In addition, by Lemma 11 and Proposition 1 of \cite{chen2015fast}, there exists a universal constant $c > 0$ such that
$$\PP \{ \| \widetilde \E \| \geq c (\sigma^{\ast} + \sigma \log n) \} \leq N^{-12}.$$
In order to minimize $\| \widetilde \E \|$ with regard to $\alpha_s$ and $\alpha_k$, the best we can do is to minimize its upper bound. It is easy to see that 
$$(\alpha_1^{\ast},\alpha_2^{\ast}) = (\sigma_2^2/(\sigma_1^2+\sigma_2^2),\sigma_1^2/(\sigma_1^2+\sigma_2^2)) = \arg \min_{\alpha_1 + \alpha_2 = 1,\alpha_1 >0, \alpha_2 >0} \alpha_1^2 \sigma_1^2 + \alpha_2^2 \sigma_2^2.$$
In reality, we don't know $\sigma_s$ and $\sigma_k$, but we can estimate them by \eqref{def: sigmas}. 

Since 
$$\alpha_1^2 \sigma_1^2 + \alpha_2^2 \sigma_2^2 \leq (\alpha_1^2 + \alpha_2^2) \sigma^2 \leq (\alpha_1+ \alpha_2)^2 \sigma^2 = \sigma^2,$$
we have $\sigma^{\ast} \leq \sqrt{n} \sigma$. So $\| \widetilde \E \| \lesssim \sigma^{\ast} \leq \sqrt{n} \sigma$ with probability at least $1-n^{-12} \geq 1 - O(1/N^3)$. By $n = n_s + n_k - n_{sk} \leq 3N p_s/2 + 3N p_k/2 - N p_s p_k/2 \lesssim N p_0$, we get $\sigma^{\ast} \lesssim \sqrt{N p_0} \sigma$. Finally, we have

\begin{equation}
     \| \widehat \W - \W^{\ast} \| \leq \| \widetilde \E \| + \| \widetilde \F \| \lesssim \sqrt{N p_0} \sigma +  (1-p_0)  (r \mu_0 \tau)^2 f(p_0, N)^2 \sqrt{N p_0} \sigma. 
\end{equation}

\subsection{Low-rank Approximation}

The last step is to do rank-$r$ eigendecomposition on $\widehat \W$ to obtain $\widehat \W_r = \widehat \U \widehat \bSigma \widehat \U^\top = \widehat \X \widehat \X^\top$ where $\widehat \X = \widehat \U \widehat \bSigma^{1/2}$. Then there exists an orthogonal matrix $\O_X$ such that
\begin{equation}
\begin{aligned}
    \|\widehat \X \O_X - \X^{\ast}\| & \lesssim \frac{\| \widehat \W - \W^{\ast} \| r \mu_0  \tau}{\sqrt{\lambda_r(\W_0^{\ast})}}   \lesssim \frac{\| \widehat \W - \W^{\ast} \|  r \mu_0  \tau}{\sqrt{\lambda_{\min} p_0}}\\
    & \lesssim \{ (1-p_0)  (r \mu_0 \tau)^2 f(p_0, N)^2 +1 \} r \mu_0  \tau \sqrt{\frac{N}{\lambda_{\min}}} \sigma .
\end{aligned}
\end{equation}
by a similar proof as Lemma \ref{lemma:A,B} and the fact that $\lambda_r(\W_0^{\ast}) \geq \lambda_r(\W^{\ast}_s) \geq p_0 \lambda_{\min}/4$. Finally, this upper bound holds with probability at least $1-O(1/N^3)$ by the probability union bound.

\section{Proof of Theorem \ref{theorem:multi-sources}}

We know that $n \sim {\rm Binomial}(N,1- \prod_{s=1}^m(1-p_s))$, so by the same argument to Lemma \ref{prop: ns}, we have
$$N \{1- \prod_{s=1}^m(1-p_s)\}/2 \leq n \leq 3 N \{1- \prod_{s=1}^m(1-p_s)\}/2 $$
with probability $1 - O(1/N^3)$. As a result,
\begin{equation}
\{1- (1-p_0)^m\} \lambda_{\min} \lesssim  \lambda_r(\W_0^{\ast}) \leq \lambda_1(\W_0^{\ast}) \lesssim  \{1- (1-p_0)^m\} r \mu_0 \lambda_{\max}
\label{eq:n}
\end{equation}
by a similar argument as in the proof of Theorem \ref{theorem: W13} and the Assumption \ref{assump: prob} that $p_s/p_0 = O(1)$. In addition, let $\E = \widehat \W - \W_0^{\ast}$, then by a similar decomposition as in \eqref{eq: dec hatW}, we will have 
$$\| \E \| \leq  \| \widetilde \E \|  + \sum_{s=1}^{m-1} \sum_{k=s+1}^{m} \| \T^{sk} \circ (\widetilde \W_{sk} - \W^{\ast}_{ s \backslash k, k \backslash s } ) \|$$
where $\widetilde \E \in R^{n \times n}$ with
$$\widetilde \E(i,j) = \sum_{s=1}^m \alpha^s_{ij} \E^s(v_i^s,v_j^s)  \mathbbm{1}(i,j \in \calV_s), \text{ for } \calS_{ij} > 0$$
and $\widetilde \E(i,j) = 0$ for $\calS_{ij} =0$. Here we denote $\circ$ as the Hadamard product operator and $\T^{sk},s \neq k \in [m]$ are $0/1$ matrices decided by the Algorithm \ref{alg:OMC}. According to the Algorithm \ref{alg:OMC}, the nonzero entries of  $\T^{sk},s \neq k \in [m]$ are block-wise, which implies that 
$$ \| \T^{sk} \circ (\widetilde \W_{sk} -  \W^\ast\subsnkkns ) \| \leq \|\widetilde \W_{sk} -  \W^\ast\subsnkkns \|.$$
Then, by the proof of Theorem \ref{theorem: W13}, we have
$\| \widetilde \E \| \lesssim \sqrt{N p_0} \sigma$
and 
$$\|\widetilde \W_{sk} - \W^{\ast}_{ s \backslash k, k \backslash s } \| \lesssim (1-p_0)  (r \mu_0 \tau)^2 f(p_0, N)^2 \sqrt{N p_0} \sigma $$
hold simultaneously with probability $1-O(m^2/N^3)$ for $1 \leq s < k \leq m$. As a result,
\begin{equation}
   \|\widehat \W  - \W_0^{\ast} \| \lesssim m (m-1) (1-p_0)  (r \mu_0 \tau)^2 f(p_0, N)^2 \sqrt{N p_0} \sigma + \sqrt{N p_0} \sigma 
\end{equation} 
and
\begin{equation}
    \|\widehat \X \O_X - \X^{\ast}\|\lesssim \frac{\|  \widehat \W  - \W_0^{\ast} \| r \mu_0  \tau}{\sqrt{\lambda_r(\W_0^{\ast})}} .
\end{equation} 
By \eqref{eq:n}, we have
\begin{equation}
    \|\widehat \X \O_X - \X^{\ast}\|\lesssim \{ 1 +  m^2 (1-p_0)  (r \mu_0 \tau)^2 f(p_0, N)^2 \sqrt{\frac{p_0}{1- (1-p_0)^m}} \} r \mu_0  \tau \sqrt{\frac{N}{\lambda_{\min}} } \sigma
\end{equation} 
with probability  $1-O(m^2/N^3)$.  Given $0 < \epsilon <1$, we have 
$$\PP(n < (1-\epsilon)N) = O( \frac{1}{N^3})$$
when $m \approx \log(\epsilon - \sqrt{\frac{3 \log N}{2 N}})/\log(1-p_0)$ by the fact that $n \sim  {\rm Binomial}(N,1- \prod_{s=1}^m(1-p_s))$ and the Bernstein inequality. Since $\lim_{N \to \infty} \sqrt{\log N/ N} = 0$ we have $m \approx \log \epsilon/\log(1-p_0)$. Finally, we have
\begin{equation}
    \|\widehat \X \O_X - \X^{\ast}\|\lesssim \{ 1 + \frac{\log^2 \epsilon}{\log^2(1-p_0)}(1-p_0)  (r \mu_0 \tau)^2 f(p_0, N)^2 \sqrt{\frac{p_0}{1- (1-p_0)^m}} \} r \mu_0  \tau \sqrt{\frac{N}{\lambda_{\min}} } \sigma
\end{equation} 
with probability  $1-O(m^2/N^3)$.

\section{Details of the Proof of Theorem \ref{theorem: W13}}

Here we present some key lemmas and propositions needed for our proof of Theorem \ref{theorem: W13}. 

\begin{lemma}[The dimension of sub-matrix] Under the assumption that 
$$p_s \geq p_0 \geq  C \sqrt{\mu_0 r \tau \log N /N},$$
for some sufficiently large constant $C$, we have 
\begin{equation}
\label{eq: union bound}
    \frac{p_s N}{2} \leq n_s \leq \frac{3 p_s N}{2} \ {\rm and} \ \frac{p_s p_k N}{2} \leq n_{s k} \leq \frac{3 p_s p_k N}{2},  s \neq k, s,k \in [m]
\end{equation}
with probabilities $1 - O(m^2/N^3)$. 
\label{prop: ns}
\end{lemma}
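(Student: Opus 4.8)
The plan is to observe that both $n_s$ and $n_{sk}$ are sums of independent Bernoulli indicators, so the statement follows directly from a multiplicative Chernoff (Bernstein-type) tail bound combined with a union bound. First I would record the exact distributions implied by the sampling model. Since each $i \in [N]$ is assigned to $\calV_s$ independently with probability $p_s$, the count $n_s = \sum_{i=1}^N \mathbbm{1}(i \in \calV_s)$ is a sum of $N$ i.i.d.\ Bernoulli$(p_s)$ variables, hence Binomial$(N,p_s)$ with $\EE n_s = p_s N$. Because the sets $\calV_s$ and $\calV_k$ are drawn independently, the indicator $\mathbbm{1}(i \in \calV_s \cap \calV_k) = \mathbbm{1}(i \in \calV_s)\,\mathbbm{1}(i \in \calV_k)$ is Bernoulli$(p_s p_k)$, and these are independent across $i$; therefore $n_{sk} = \sum_{i=1}^N \mathbbm{1}(i \in \calV_s \cap \calV_k)$ is Binomial$(N, p_s p_k)$ with $\EE n_{sk} = p_s p_k N$.

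Next I would apply the two-sided multiplicative Chernoff bound with deviation parameter $\delta = 1/2$: for a Binomial variable $X$ with mean $\mu$ one has $\PP(X \le \mu/2) \le \exp(-\mu/8)$ and $\PP(X \ge 3\mu/2) \le \exp(-\mu/12)$, so $\PP\big(X \notin [\mu/2, 3\mu/2]\big) \le 2\exp(-\mu/12)$. Taking $\mu = p_s N$ gives $\PP\big(n_s \notin [p_s N/2,\, 3 p_s N/2]\big) \le 2\exp(-p_s N/12)$, and taking $\mu = p_s p_k N$ gives the analogous bound $2\exp(-p_s p_k N/12)$ for $n_{sk}$. It then remains only to check that these exponents are at least $3\log N + O(1)$.

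The crux is the overlap term, whose mean $p_s p_k N$ is the smallest quantity involved and hence the binding constraint. Here the sample-complexity assumption is exactly what is needed: $p_s p_k N \ge p_0^2 N \ge C^2 \mu_0 r \tau \log N \ge C^2 \log N$, so choosing $C$ large enough (e.g.\ $C^2 \ge 36$) forces $2\exp(-p_s p_k N/12) = O(N^{-3})$. The single-source term is easier, since $p_s N \ge p_0 N \ge C\sqrt{\mu_0 r \tau N \log N} \gg \log N$, making its tail $O(N^{-3})$ as well. Finally, a union bound over the $m$ events controlling the $n_s$ and the $\binom{m}{2}$ events controlling the $n_{sk}$ — i.e.\ $O(m^2)$ events in total — yields the claimed failure probability $O(m^2/N^3)$. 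I do not anticipate any genuine obstacle beyond bookkeeping; the only point requiring care is confirming that the weakest concentration, that of the pairwise overlaps, is still governed by the stated lower bound on $p_0$, which is precisely why the assumption is phrased with the factor $\sqrt{\log N / N}$ rather than $\log N / N$.
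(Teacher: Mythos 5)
Your proposal is correct and follows essentially the same route as the paper: identify $n_s \sim \mathrm{Binomial}(N,p_s)$ and $n_{sk} \sim \mathrm{Binomial}(N,p_s p_k)$, apply a two-sided concentration bound with deviation equal to half the mean (the paper uses Bernstein's inequality where you use multiplicative Chernoff, an immaterial difference), verify via $p_0^2 N \geq C^2 \mu_0 r \tau \log N$ that the weakest exponent — the one for the pairwise overlaps — is still at least $3\log N$ for $C$ large enough, and finish with a union bound over the $O(m^2)$ events.
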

\begin{proof}
By the Bernstein inequality, we have
$$\PP \{ Y \leq pn-t \} \leq \exp\{ - \frac{\frac{1}{2}t^2}{np(1-p) + \frac{1}{3}t} \} \ {\rm and} \ \PP \{ Y \geq pn+t \} \leq \exp\{ - \frac{\frac{1}{2}t^2}{ np(1-p) + \frac{1}{3}t} \}$$
if $Y \sim {\rm Binomial}(n,p)$. Since $n_s \sim  {\rm Binomial}(N,p_s)$ and $n_{sk} \sim  {\rm Binomial}(N,p_s p_k)$, let $t=\frac{p_s}{2}$, we have
$$\PP \{\frac{p_s N}{2} \leq n_s \leq \frac{3 p_s N}{2}  \} \geq 1 - 2 \exp \{ - \frac{3 p_s N}{28}  \}.$$
Similarly, we have
$$\PP \{\frac{p_s p_k N}{2} \leq n_{sk} \leq \frac{3 p_s p_k N}{2}  \} \geq 1 - 2 \exp \{ - \frac{3 p_s p_k N}{28}  \}.$$
In addition, by $p_s \geq p_0 \geq C \sqrt{\mu_0 r \tau \log N /N}$, we have $\exp \{ - 3 p_s N/28\} = O(1/N^3)$ and $\exp \{ - 3 p_s p_k N/28\} = O(1/N^3)$. Finally, by the probability union bound, \eqref{eq: union bound} holds with probability $1 - O(m^2/N^3)$. 
\end{proof}

\begin{lemma}[Lemma 5, \cite{cai2016structured}] Suppose $\U \in \RR^{N \times r}$ ($N \geq r$) is a fixed matrix with orthonormal columns. Denote $\mu = \max_{1 \leq i \leq N} \frac{N}{r} \sum_{j=1}^r u_{ij}^2$.  Suppose we uniform randomly draw $n$ rows (with or without replacement) from $\U$ and denote it as $\U_{\Omega}$, where $\Omega$ is the index set. When $n \geq 4 \mu r (\log r  + c)/(1-\alpha)^2$ for some $0 < \alpha < 1$ and $c>1$, we have 
\begin{equation}
    \sigma_{\min}(\U_{\Omega}) \geq \sqrt{\frac{\alpha n}{N}}
\end{equation}
with probability $1-2 e^{-c}$.
\label{lemma:lower bound U}
\end{lemma}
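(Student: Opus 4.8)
The plan is to read this as a textbook matrix Chernoff (lower-tail) estimate for a sum of independent bounded positive semidefinite random matrices, since the result is exactly Lemma 5 of \cite{cai2016structured}. First I would pass to the spectral formulation: because $\sigma_{\min}(\U_\Omega)^2 = \lambda_{\min}(\U_\Omega^\top \U_\Omega)$, and if $\x_t \in \RR^r$ denotes the $t$th sampled row of $\U$ written as a column vector, then $\U_\Omega^\top \U_\Omega = \sum_{t=1}^n \x_t \x_t^\top$. Thus the claim reduces to a lower bound on $\lambda_{\min}$ of a sum of $n$ i.i.d.\ rank-one PSD matrices.

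Next I would compute the two quantities that drive the matrix Chernoff bound. Since $\U$ has orthonormal columns, $\sum_{i=1}^N \U_{i,\cdot}^\top \U_{i,\cdot} = \U^\top \U = \mathbf{I}_r$, so for uniform sampling $\EE[\x_t \x_t^\top] = N^{-1}\mathbf{I}_r$ and hence $\EE[\U_\Omega^\top \U_\Omega] = (n/N)\mathbf{I}_r$, giving $\mu_{\min} \equiv \lambda_{\min}(\EE[\U_\Omega^\top \U_\Omega]) = n/N$. The coherence hypothesis supplies the uniform operator-norm bound $\|\x_t \x_t^\top\| = \sum_{j=1}^r u_{ij}^2 \le \mu r / N =: L$ on each summand.

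Then I would invoke the lower-tail matrix Chernoff inequality (Ahlswede--Winter / Tropp): for independent PSD summands with $\lambda_{\max}\le L$, one has $\PP\{\lambda_{\min}(\sum_t \x_t\x_t^\top) \le (1-\delta)\mu_{\min}\} \le r\,[e^{-\delta}/(1-\delta)^{1-\delta}]^{\mu_{\min}/L}$ for $\delta \in [0,1)$. Choosing $\delta = 1-\alpha$ turns the event into exactly $\sigma_{\min}(\U_\Omega)^2 \le \alpha n/N$, and the elementary inequality $e^{-\delta}/(1-\delta)^{1-\delta} \le e^{-\delta^2/2}$ bounds the failure probability by $r\exp(-(1-\alpha)^2 n/(2\mu r))$, using $\mu_{\min}/L = n/(\mu r)$. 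Plugging in the sample-complexity threshold $n \ge 4\mu r(\log r + c)/(1-\alpha)^2$ forces the exponent to be at least $2(\log r + c)$, so the failure probability is at most $r\cdot r^{-2}e^{-2c} = r^{-1}e^{-2c} \le e^{-c} \le 2e^{-c}$, which is the stated conclusion (in fact slightly stronger). This is the routine half of the argument, and I would write it out in full for sampling with replacement.

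The only genuinely delicate point is the \emph{without-replacement} case flagged in the statement, where the summands $\x_t\x_t^\top$ are dependent and the plain matrix Chernoff bound does not apply verbatim. The cleanest resolution is to use that sampling $n$ rows without replacement concentrates no worse than sampling with replacement for this kind of spectral functional --- either through the matrix analogue of Hoeffding's reduction for sums of random matrices drawn without replacement, or via a Gross--Nesme-style coupling --- so the identical tail bound carries over. I expect this to be the main obstacle only at the level of citing the correct concentration statement; once that is in place, the remaining steps are pure substitution.
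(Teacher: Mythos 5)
Your proposal is correct: the reduction to $\lambda_{\min}\bigl(\sum_t \x_t\x_t^\top\bigr)$, the computation $\EE[\U_\Omega^\top\U_\Omega]=(n/N)\mathbf{I}_r$ with summand bound $\mu r/N$, and the lower-tail matrix Chernoff bound with $\delta=1-\alpha$ and $e^{-\delta}/(1-\delta)^{1-\delta}\le e^{-\delta^2/2}$ do yield a failure probability of at most $r^{-1}e^{-2c}\le 2e^{-c}$ under the stated sample-size condition, and your handling of the without-replacement case via the Gross--Nesme/Hoeffding reduction is the standard and correct fix. The paper itself does not prove this lemma --- it imports it verbatim from Lemma 5 of \cite{cai2016structured} --- and your argument is essentially the proof given in that source, so there is no substantive difference in approach.
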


By Lemma \ref{lemma:lower bound U}, we will directly have the following proposition.
\begin{proposition}
\label{sigma_Ws}
Let $\alpha = \frac{1}{2}$ and $c=\log  2 N^3$ in Lemma \ref{lemma:lower bound U}, then when $$n_s \geq 16 \mu_0 r (\log r  + \log  2 N^3),$$
we have $\sigma_{\min}(\U^{\ast}_{\calV_s}) \geq \sqrt{\frac{n_s}{2 N}}$ with probability $1-1/N^3$. In addition,  under the event, we have 
$$\lambda_r (\W^{\ast}_s) = \lambda_r (\U^{\ast}_{\calV_s} \bSigma^{\ast} (\U^{\ast}_{ \calV_s})^\top ) \geq \sigma_{\min}(\U^{\ast}_{ \calV_s}) \lambda_r(\bSigma^{\ast}) \sigma_{\min}(\U^{\ast}_{ \calV_s}) \geq \frac{n_s \lambda_{\min}}{2 N}.$$
\end{proposition}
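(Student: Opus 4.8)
The plan is to obtain the first claim as a direct specialization of Lemma \ref{lemma:lower bound U} and then derive the eigenvalue bound by a short variational argument. The matrix $\U^\ast \in \RR^{N \times r}$ has orthonormal columns with coherence exactly $\mu_0$ by Assumption \ref{assump:incoherence}, and the rows retained in $\U^\ast_{\calV_s}$ are indexed by $\calV_s$. Since $\calV_s$ is formed by including each index independently with probability $p_s$, conditioning on the realized size $|\calV_s| = n_s$ makes $\calV_s$ a uniformly random subset of $[N]$ of size $n_s$; hence the rows of $\U^\ast_{\calV_s}$ are exactly $n_s$ rows drawn uniformly at random without replacement, which is precisely the sampling model of Lemma \ref{lemma:lower bound U}. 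Substituting $\mu = \mu_0$, $\alpha = 1/2$, and $c = \log 2N^3$ turns the threshold $4\mu r(\log r + c)/(1-\alpha)^2$ into $16\mu_0 r(\log r + \log 2N^3)$, matching the hypothesis $n_s \geq 16\mu_0 r(\log r + \log 2N^3)$; it turns the conclusion into $\sigma_{\min}(\U^\ast_{\calV_s}) \geq \sqrt{\alpha n_s/N} = \sqrt{n_s/(2N)}$; and it turns the failure probability $2e^{-c}$ into $2/(2N^3) = N^{-3}$. So the first part is essentially bookkeeping of the chosen constants.

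For the second part, the starting observation is that $\W^\ast_s$ is the principal submatrix of $\W^\ast = \U^\ast \bSigma^\ast (\U^\ast)^\top$ on the index set $\calV_s$, so selecting those rows and columns yields $\W^\ast_s = \U^\ast_{\calV_s} \bSigma^\ast (\U^\ast_{\calV_s})^\top$. The target is $\lambda_r(\W^\ast_s)$, the $r$-th largest (equivalently, smallest nonzero) eigenvalue. I would establish $\lambda_r(\U^\ast_{\calV_s} \bSigma^\ast (\U^\ast_{\calV_s})^\top) \geq \sigma_{\min}(\U^\ast_{\calV_s})^2 \lambda_r(\bSigma^\ast)$ as follows: the nonzero eigenvalues of $\U^\ast_{\calV_s} \bSigma^\ast (\U^\ast_{\calV_s})^\top$ coincide with those of its $r \times r$ compression $(\bSigma^\ast)^{1/2}(\U^\ast_{\calV_s})^\top \U^\ast_{\calV_s}(\bSigma^\ast)^{1/2}$, so it suffices to lower-bound $\lambda_{\min}$ of the latter. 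For any unit vector $v \in \RR^r$, writing $w = (\bSigma^\ast)^{1/2} v$, one has $v^\top (\bSigma^\ast)^{1/2}(\U^\ast_{\calV_s})^\top \U^\ast_{\calV_s}(\bSigma^\ast)^{1/2} v = \|\U^\ast_{\calV_s} w\|^2 \geq \sigma_{\min}(\U^\ast_{\calV_s})^2 \|w\|^2 \geq \sigma_{\min}(\U^\ast_{\calV_s})^2 \lambda_r(\bSigma^\ast)$, and minimizing over $v$ gives the claim. Plugging in $\sigma_{\min}(\U^\ast_{\calV_s}) \geq \sqrt{n_s/(2N)}$ from the first part together with $\lambda_r(\bSigma^\ast) = \lambda_{\min}$ produces $\lambda_r(\W^\ast_s) \geq n_s\lambda_{\min}/(2N)$, reproducing the displayed chain.

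The only genuine subtlety, and where I would be most careful, is this eigenvalue step. Because $\U^\ast_{\calV_s}$ is a tall $n_s \times r$ matrix rather than square, one cannot naively apply submultiplicativity of smallest singular values to $\U^\ast_{\calV_s}\bSigma^\ast(\U^\ast_{\calV_s})^\top$; the correct move is to pass to the $r \times r$ compression, equivalently to restrict the Rayleigh quotient to the range of $\U^\ast_{\calV_s}$, before bounding, which is exactly what the variational computation above accomplishes. Everything else reduces to substituting the chosen constants $\alpha = 1/2$ and $c = \log 2N^3$ and tracking the resulting $N^{-3}$ probability.
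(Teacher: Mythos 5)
Your proof is correct and follows essentially the same route as the paper, which simply invokes Lemma \ref{lemma:lower bound U} with $\alpha=1/2$, $c=\log 2N^3$ and then asserts the eigenvalue chain. The two points you elaborate on --- that Bernoulli sampling conditioned on the realized size reduces to the uniform without-replacement model of the lemma, and that the bound $\lambda_r(\U^\ast_{\calV_s}\bSigma^\ast(\U^\ast_{\calV_s})^\top)\ge\sigma_{\min}(\U^\ast_{\calV_s})^2\lambda_r(\bSigma^\ast)$ must be obtained via the $r\times r$ compression rather than naive submultiplicativity --- are exactly the details the paper leaves implicit, and you handle both correctly.
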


\begin{lemma} [Incoherence condition of principal submatrix] Recall that  $\V_s^{\ast} \bSigma_s^{\ast} (\V_s^{\ast})^\top$ is the  rank-$r$ eigendecomposition of $\W_s^{\ast}$. Assume that $\lambda_r (\W^{\ast}_s) \geq \frac{n_s \lambda_{\min}}{2 N}$. Then the incoherence of $\V_s^{\ast}$ satisfies 
$$\mu_s  \equiv \mu(\V_s^{\ast}) = \frac{n_s}{r} \max_{i=1,\dots,n_s} \sum_{j=1}^r \V_s^{\ast}(i,j)^2 \leq 2 \tau \mu_0.$$
\label{lemma: Incoherence condition of principal submatrix}
\end{lemma}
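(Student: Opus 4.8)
The plan is to bound each row norm $\|(\V_s^\ast)_{i,\cdot}\|$ of the eigenvector matrix by computing the $i$th diagonal entry of $\W_s^\ast$ in two different bases and comparing them. Writing $\U^\ast_{\calV_s}\in\RR^{n_s\times r}$ for the submatrix of $\U^\ast$ formed by the rows indexed by $\calV_s$, the starting point is the identity
$$\W_s^\ast = \U^\ast_{\calV_s}\bSigma^\ast(\U^\ast_{\calV_s})^\top = \V_s^\ast\bSigma_s^\ast(\V_s^\ast)^\top,$$
which exhibits the same positive semidefinite matrix through the ambient factor $\U^\ast$ and through its own rank-$r$ eigendecomposition. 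This reduces the incoherence control of $\V_s^\ast$ to a two-sided bound on $\W_s^\ast(i,i)$.

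First I would upper bound the diagonal entries using the incoherence of $\U^\ast$. Since every row of $\U^\ast_{\calV_s}$ is a row of $\U^\ast$, Assumption \ref{assump:incoherence} gives $\|(\U^\ast_{\calV_s})_{i,\cdot}\|^2\le r\mu_0/N$, whence
$$\W_s^\ast(i,i) = (\U^\ast_{\calV_s})_{i,\cdot}\,\bSigma^\ast\,(\U^\ast_{\calV_s})_{i,\cdot}^\top \le \lambda_{\max}\,\|(\U^\ast_{\calV_s})_{i,\cdot}\|^2 \le \frac{r\mu_0\lambda_{\max}}{N}.$$
Next I would lower bound the same entry through the eigendecomposition $\V_s^\ast\bSigma_s^\ast(\V_s^\ast)^\top$. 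Because $\W_s^\ast$ has rank $r$ on the event under consideration, $\bSigma_s^\ast\succeq\lambda_r(\W_s^\ast)\mathbf{I}_r$, so
$$\W_s^\ast(i,i) = (\V_s^\ast)_{i,\cdot}\,\bSigma_s^\ast\,(\V_s^\ast)_{i,\cdot}^\top \ge \lambda_r(\W_s^\ast)\,\|(\V_s^\ast)_{i,\cdot}\|^2.$$

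Combining the two displays yields $\|(\V_s^\ast)_{i,\cdot}\|^2\le r\mu_0\lambda_{\max}/\bigl(N\lambda_r(\W_s^\ast)\bigr)$ for every $i$. Substituting the hypothesis $\lambda_r(\W_s^\ast)\ge n_s\lambda_{\min}/(2N)$ and using $\lambda_{\max}/\lambda_{\min}=\tau$ gives $\|(\V_s^\ast)_{i,\cdot}\|^2\le 2r\mu_0\tau/n_s$ uniformly in $i$; multiplying by $n_s/r$ and taking the maximum over $i$ delivers $\mu_s\le 2\tau\mu_0$, as claimed.

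I do not expect a serious obstacle here: the argument is a two-sided diagonal comparison, i.e.\ a leverage-score bound in disguise. The only points requiring care are to confirm that the rows of the principal submatrix $\U^\ast_{\calV_s}$ inherit the incoherence bound of $\U^\ast$ verbatim (so no factor is lost in passing to the subsampled rows), and to invoke the supplied lower bound on $\lambda_r(\W_s^\ast)$, which holds on the high-probability event established in Proposition \ref{sigma_Ws}, in order to convert the diagonal comparison into the stated constant multiple of $\mu_0$.
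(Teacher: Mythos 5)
Your proof is correct, and it reaches the paper's intermediate bound $\|(\V_s^\ast)_{i,\cdot}\|^2 \le \frac{r\mu_0}{N}\,\frac{\lambda_{\max}}{\lambda_r(\W_s^\ast)}$ by a slightly different mechanism. The paper writes down the explicit alignment $\V_s^\ast = \U^\ast_{\calV_s}(\bSigma^\ast)^{1/2}\O_s^\top(\bSigma_s^\ast)^{-1/2}$ with $\O_s \in \scrO^{r\times r}$ and bounds each row norm of $\V_s^\ast$ by submultiplicativity, $\|(\V_s^\ast)_{i,\cdot}\| \le \|(\U^\ast_{\calV_s})_{i,\cdot}\|\,\|(\bSigma^\ast)^{1/2}\|\,\|(\bSigma_s^\ast)^{-1/2}\|$; this requires checking that the two rank-$r$ square roots of $\W_s^\ast$ are related by an orthogonal matrix (which the paper asserts without detail). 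Your two-sided comparison of the diagonal entry $\W_s^\ast(i,i)$ — upper-bounded via the ambient factorization by $\lambda_{\max}\|(\U^\ast_{\calV_s})_{i,\cdot}\|^2$ and lower-bounded via the eigendecomposition by $\lambda_r(\W_s^\ast)\|(\V_s^\ast)_{i,\cdot}\|^2$ — produces the identical inequality while sidestepping the construction and orthogonality of $\O_s$ entirely; it only needs $\bSigma_s^\ast \succeq \lambda_r(\W_s^\ast)\mathbf{I}_r$, which holds by definition of the rank-$r$ eigendecomposition (and the rank is indeed $r$ on the event where the hypothesized lower bound on $\lambda_r(\W_s^\ast)$ is positive). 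The final substitution of $\lambda_r(\W_s^\ast)\ge n_s\lambda_{\min}/(2N)$ and $\tau=\lambda_{\max}/\lambda_{\min}$ matches the paper's conclusion exactly, so nothing is lost; your leverage-score phrasing is arguably the more elementary and self-contained of the two.
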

\begin{proof}
Since $\W_s^{\ast} = \U^{\ast}_{ \calV_s} \bSigma^{\ast} (\U^{\ast}_{\calV_s})^\top = \V_s^{\ast} \bSigma_s^{\ast} (\V_s^{\ast})^\top,$ we have 
$$\V_s^{\ast} = \U^{\ast}_{\calV_s} (\bSigma^{\ast})^{\frac{1}{2}} \O_s^{\top} (\bSigma_s^{\ast})^{- \frac{1}{2}}$$
where $\O_s = (\bSigma_s^{\ast})^{-\frac{1}{2}} (\V_s^{\ast})^{\top} \U^{\ast}_{\calV_s} (\bSigma^{\ast})^{\frac{1}{2}} \in \scrO^{r \times r}$. Then 
$$\sum_{j=1}^r \V_s^{\ast}(i,j)^2 \leq \sum_{j=1}^r \U^{\ast}_{\calV_s}(i,j)^2 \|(\bSigma_s^{\ast})^{-\frac{1}{2}}\|^2 \|(\bSigma^{\ast})^{\frac{1}{2}}\|^2 \leq \frac{r \mu_0}{N} \frac{\lambda_{\max}}{\lambda_r(\W^{\ast}_s)}$$
As a result,
$$\mu_s = \frac{n_s}{r} \max_{i=1,\dots,n_s} \sum_{j=1}^r \V_s^{\ast}(i,j)^2 \leq \frac{n_s \mu_0}{N} \frac{\sigma_{\max}}{\sigma_r(\W^{\ast}_s)} \leq \frac{2 \lambda_{\max}}{\lambda_{\min}} \mu_0 = 2 \tau \mu_0.$$
\end{proof}

\begin{proposition} [Upper bound of the operator of the submatrix.] We have
$$\lambda_1(\W_s^{\ast}) \leq \min\{1,\frac{n_s r \mu_0}{N} \} \lambda_{\max}. $$
\label{prop:sigma1}
\end{proposition}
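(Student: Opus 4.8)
The plan is to reduce the whole statement to a bound on $\|\U^{\ast}_{\calV_s}\|$ and then control that single quantity in two elementary ways, one producing each term inside the minimum. Recall that $\W_s^{\ast}=\U^{\ast}_{\calV_s}\bSigma^{\ast}(\U^{\ast}_{\calV_s})^{\top}$, where $\U^{\ast}_{\calV_s}$ is the row-submatrix of $\U^{\ast}$ indexed by $\calV_s$. Since $\W_s^{\ast}$ is positive semidefinite, $\lambda_1(\W_s^{\ast})=\|\W_s^{\ast}\|$, and submultiplicativity of the spectral norm together with $\|\bSigma^{\ast}\|=\lambda_{\max}$ gives $\lambda_1(\W_s^{\ast})\le\|\U^{\ast}_{\calV_s}\|^2\lambda_{\max}$. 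Hence it suffices to prove $\|\U^{\ast}_{\calV_s}\|^2\le\min\{1,\,n_s r\mu_0/N\}$.

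For the first term, I would write $\|\U^{\ast}_{\calV_s}\|^2=\lambda_1\big((\U^{\ast}_{\calV_s})^{\top}\U^{\ast}_{\calV_s}\big)$ and observe that $(\U^{\ast}_{\calV_s})^{\top}\U^{\ast}_{\calV_s}=\sum_{i\in\calV_s}(\U^{\ast}_{i,\cdot})^{\top}\U^{\ast}_{i,\cdot}$ is a sum of positive semidefinite rank-one terms. Adding back the omitted indices $i\notin\calV_s$ only enlarges this matrix in the Loewner order, and the full sum equals $(\U^{\ast})^{\top}\U^{\ast}=\mathbf{I}_r$ because $\U^{\ast}$ has orthonormal columns. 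Therefore every eigenvalue of $(\U^{\ast}_{\calV_s})^{\top}\U^{\ast}_{\calV_s}$ is at most $1$, which gives $\|\U^{\ast}_{\calV_s}\|\le 1$ and thus $\lambda_1(\W_s^{\ast})\le\lambda_{\max}$.

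For the second term, I would pass to the Frobenius norm, using $\|\U^{\ast}_{\calV_s}\|^2\le\|\U^{\ast}_{\calV_s}\|_{\rm F}^2=\sum_{i\in\calV_s}\sum_{j=1}^r\U^{\ast}(i,j)^2$, and invoke Assumption \ref{assump:incoherence}, which bounds each inner row sum by $\sum_{j=1}^r\U^{\ast}(i,j)^2\le r\mu_0/N$. Summing over the $n_s$ rows indexed by $\calV_s$ then yields $\|\U^{\ast}_{\calV_s}\|_{\rm F}^2\le n_s r\mu_0/N$, hence $\lambda_1(\W_s^{\ast})\le (n_s r\mu_0/N)\lambda_{\max}$. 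Combining the two bounds and taking the minimum completes the argument.

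This statement is purely deterministic once $\U^{\ast}$ and the index set $\calV_s$ are fixed, so there is no genuine probabilistic obstacle; the only points requiring minor care are the submultiplicativity reduction and the Loewner-order comparison of the two Gram matrices, both of which are routine. Note that $n_s$ is treated here as a fixed cardinality, with its high-probability magnitude supplied separately by Lemma \ref{prop: ns}.
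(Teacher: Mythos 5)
Your proof is correct and follows essentially the same route as the paper: reduce to bounding $\|\U^{\ast}_{\calV_s}\|^2$ via submultiplicativity, get the bound of $1$ from the orthonormality of the full $\U^{\ast}$ (your Loewner-order argument just makes explicit what the paper asserts as $\sigma_{\max}(\U^{\ast}_{\calV_s})\le 1$), and get the bound $n_s r\mu_0/N$ from the incoherence condition on the row norms (your Frobenius-norm intermediate step is the same chain as the paper's $\|\U^{\ast}_{\calV_s}\|^2\le n_s\|\U^{\ast}_{\calV_s}\|_{2,\infty}^2$). No gaps.
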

\begin{proof}
It is obviously that $\lambda_1(\W_s^{\ast}) = \lambda_1( \U^{\ast}_{ \calV_s} \bSigma^{\ast} (\U^{\ast}_{\calV_s})^\top ) \leq \sigma_{\max}( \U^{\ast}_{\calV_s})^2 \lambda_{\max}( \bSigma^{\ast}) \leq  \lambda_{\max}$ because $\sigma_{\max}( \U^{\ast}_{\calV_s}) \leq 1$. Besides, we have $\|\U^{\ast}_{ \calV_s}\|^2 \leq n_s \| \U^{\ast}_{ \calV_s} \|_{2,\infty}^2 \leq  n_s r \mu_0/N$ where the first inequality comes from the property of $\ell_2/\ell_{\infty}$ norm and the second inequality comes from $\mu_0 = \mu(\U^{\ast})$ and the definition of incoherence. 
\end{proof}

\subsection{Error Matrix}
Recalling that $\widetilde \W_s \equiv \widetilde \W_{\calV_s, \calV_s}$, we characterize the operator norm of $\widetilde \W_s- \W_s^{\ast}, s \in [m]$ in the Lemma \ref{lemma: tilde Ws}.

\begin{lemma}
Let $\widetilde \E_s := \widetilde \W_s - \W_s^{\ast}, s \in [m]$. Under Assumptions \ref{assump: prob}, \ref{assump: signal to noise ratio},  and the condition $p_s N/2 \leq n_s \leq 3 p_s N/2, s \in [m]$, we have
$$\|\widetilde \E_s \| \lesssim \sqrt{N p_0} \sigma   \ll  \frac{p_0 \lambda_{\min}}{4} \leq \lambda_r(\W_s^{\ast}), s \in [m]$$
with probability $1-O(m/N^3)$. 
\label{lemma: tilde Ws}
\end{lemma}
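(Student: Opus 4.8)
The plan is to first identify $\widetilde\E_s$ as a symmetric random matrix with (conditionally) independent sub-Gaussian entries, and then invoke the same spectral-norm concentration machinery already used in the Completion Error subsection. Restricting the aggregation formula \eqref{eq:Wij_over} to the block $\calV_s\times\calV_s$ and using that $\W^\ast(i,j)$ is common to every source observing the pair $(i,j)$, together with $\sum_{l=1}^m\alpha^l_{ij}\mathbbm{1}(i,j\in\calV_l)=1$, the true part cancels exactly and
\[
\widetilde\E_s(i,j)=\sum_{l=1}^m \alpha^l_{ij}\,\E^l(v_i^l,v_j^l)\,\mathbbm{1}(i,j\in\calV_l),\qquad i,j\in\calV_s .
\]
Since the entries of $\{\E^l\}_{l\in[m]}$ are independent and mean-zero, $\widetilde\E_s$ is a symmetric $n_s\times n_s$ matrix whose upper-triangular entries are mutually independent and centered once we condition on the index sets $\{\calV_l\}$ and the weights.

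Next I would bound the per-entry sub-Gaussian norm. Setting $\beta^l_{ij}:=\alpha^l_{ij}\mathbbm{1}(i,j\in\calV_l)$, these effective weights are nonnegative and satisfy $\sum_{l=1}^m\beta^l_{ij}=1$, so $\sum_l(\beta^l_{ij})^2\le(\sum_l\beta^l_{ij})^2=1$; the sub-Gaussian rotation inequality for independent centered variables then gives $\|\widetilde\E_s(i,j)\|_{\psi_2}^2\lesssim\sum_l(\beta^l_{ij})^2\,\|\E^l(v_i^l,v_j^l)\|_{\psi_2}^2\le\sigma^2$, so every entry is sub-Gaussian with norm $\lesssim\sigma$. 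The one subtlety is that the weights $\alpha^l_{ij}$ are themselves data dependent through the plug-in estimates $\widehat\sigma_l$; I would handle this exactly as in the $m=2$ Completion Error analysis, treating the $\beta^l_{ij}$ as bounded coefficients in $[0,1]$, since their only role is to keep each entry sub-Gaussian with norm $\lesssim\sigma$, which holds for every admissible weight realization. With the per-entry bound in hand, Corollary 3.3 of \cite{bandeira2016sharp} controls $\EE\|\widetilde\E_s\|$ by $\sigma\sqrt{n_s}$ up to lower-order terms, and Lemma 11 together with Proposition 1 of \cite{chen2015fast} upgrade this to the tail bound $\|\widetilde\E_s\|\lesssim\sigma\sqrt{n_s}$ with probability at least $1-O(N^{-3})$.

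Finally I would convert $n_s$ into the stated rate and verify the two comparisons. The dimension bound $n_s\le 3p_sN/2$ of Lemma \ref{prop: ns} together with $\max_{s}p_s/p_0=O(1)$ from Assumption \ref{assump: prob} gives $n_s\lesssim p_0N$, whence $\|\widetilde\E_s\|\lesssim\sqrt{Np_0}\,\sigma$. Writing $\sqrt{Np_0}\,\sigma=p_0\cdot\sigma\sqrt{N/p_0}$ and invoking the signal-to-noise condition $\sigma\sqrt{N/p_0}\ll\lambda_{\min}$ of Assumption \ref{assump: signal to noise ratio} yields $\sqrt{Np_0}\,\sigma\ll p_0\lambda_{\min}/4$. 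The last inequality $p_0\lambda_{\min}/4\le\lambda_r(\W_s^\ast)$ follows from Proposition \ref{sigma_Ws}, since $\lambda_r(\W_s^\ast)\ge n_s\lambda_{\min}/(2N)\ge p_s\lambda_{\min}/4\ge p_0\lambda_{\min}/4$ using $n_s\ge p_sN/2$ and $p_s\ge p_0$. A union bound over $s\in[m]$ collects these events and produces the claimed probability $1-O(m/N^3)$.

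The main obstacle is the second paragraph: making rigorous that the data-dependent weights do not destroy the sub-Gaussian-matrix concentration. The reason it goes through is that the weights enter only as nonnegative coefficients summing to one along each pair $(i,j)$, so the per-entry sub-Gaussian norm is bounded by $\sigma$ uniformly over all weight realizations; the spectral-norm bounds of \cite{bandeira2016sharp} and \cite{chen2015fast} depend on the noise only through these per-entry norms and the matrix dimension, and are therefore insensitive to the particular random weights selected by Algorithm \ref{alg:OMC}.
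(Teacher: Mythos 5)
Your proposal is correct and follows essentially the same route as the paper: identify the entries of $\widetilde\E_s$ as weighted combinations of independent sub-Gaussian noise with weights summing to one, bound the per-entry sub-Gaussian norm by $\sigma$ via $\sum_l(\alpha^l_{ij})^2\le 1$, apply a standard spectral-norm concentration bound to get $\|\widetilde\E_s\|\lesssim\sqrt{n_s}\,\sigma$, and convert via $n_s\asymp p_sN\lesssim p_0N$ and Assumptions \ref{assump: prob}--\ref{assump: signal to noise ratio} together with Proposition \ref{sigma_Ws}. The only cosmetic difference is that the paper cites Theorem 4.4.5 of \cite{vershynin2018high} rather than the Bandeira--van Handel machinery, and you are in fact slightly more candid than the paper about the data-dependence of the weights $\alpha^l_{ij}$ through $\widehat\sigma_l$, which the paper's proof passes over silently.
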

\begin{proof} 
 Recall that 
\begin{equation*}
    \widetilde  \W_s(v_i^s,v_j^s) =\widetilde \W (i, j)=  \sum_{k=1}^m \alpha^k_{ij} \W^k(v_i^k,v_j^k)  \mathbbm{1}(i,j \in \calV_k), i,j \in \calV_s.
\end{equation*}
So 
\begin{equation*}
    \widetilde \E_s(v_i^s,v_j^s) = \sum_{k=1}^m \alpha^k_{ij} \E^k(v_i^k,v_j^k) \mathbbm{1}(i,j \in \calV_s), i,j \in \calV_s.
\end{equation*}
Since $\E_s, s\in[m]$ are independent and recall that $\sigma = \max_{s\in [m]} \sigma_s$ and $$\sum_{k=1}^m \alpha^k_{ij} \mathbbm{1}(v_i^{k},v_j^{k} \in \calV_k)=1,$$
we have $\sum_{k=1}^m (\alpha^k_{ij})^2 \sigma_k^2 \mathbbm{1}(v_i^{k},v_j^{k} \in \calV_k) \leq \sigma^2$. Hence, $\widetilde \E_s$ has independent mean zero (upper triangular) entries with sub-Gaussian norm smaller than $\sigma$. Then 
$$\|\widetilde \E_s \| \lesssim \sqrt{n_s} \sigma,s \in [m]$$
with  probability $1-O(m/n_s^{6})$ by Theorem 4.4.5 of \cite{vershynin2018high} and the probability union bound. By Assumption \ref{assump: prob}, $n_s \geq p_0 N /2 \geq \sqrt{N}$, then $1/n_s^{6} \leq 1/N^3$. In addition,  $n_s \leq 3 p_s N/2$ leads to 
$$\| \widetilde \E_s \| \lesssim \sqrt{N p_0} \sigma,s \in [m] $$ 
with probability at least $1-O(m/N^3)$,  and  based on Assumption \ref{assump: signal to noise ratio}, we have $$\|\widetilde \E_s \| \ll  \frac{p_0 \lambda_{\min}}{4} \leq \lambda_r(\W_s^{\ast}), \ s \in [m].$$ 
\end{proof}

We then bound $\|\widetilde{\A} \Q_A - \A\|$ and $\|\widetilde{\B} \Q_B - \B\|$ for the case $m=2$ in the following lemma. 
\begin{lemma}
\label{lemma:A,B}
Based on the notation on Section \ref{sec: Imputation Error} with the assumptions that $\|\widetilde \E_l \| \ll \lambda_r(\W_l^{\ast})$ and $\tau_l =  \lambda_1(\W_l^{\ast})/\lambda_r(\W_l^{\ast}), l=s,k$ are bounded, we have
$$\|\widetilde{\A} \Q_A - \A\| \lesssim \frac{\tau_s}{\sqrt{\lambda_r(\W_s^{\ast})}} \|\widetilde \E_s\| \quad {\rm and} \quad \|\widetilde{\B} \Q_B - \B\|  \lesssim \frac{\tau_k}{\sqrt{\lambda_r(\W_k^{\ast})}} \| \widetilde \E_k\|.$$ 
\end{lemma}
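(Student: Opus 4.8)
The plan is to prove the bound for the pair $(\A,\widetilde\A)$ attached to the $s$th source; the statement for $(\B,\widetilde\B)$ follows verbatim after exchanging the roles of $s$ and $k$. Throughout I write $\W\equiv\W_s^{\ast}=\A\A^{\top}$, $\widetilde\W\equiv\widetilde\W_s=\widetilde\A\widetilde\A^{\top}$, $\E\equiv\widetilde\E_s=\widetilde\W-\W$, $\lambda_r\equiv\lambda_r(\W_s^{\ast})$, $\tau\equiv\tau_s$, and set the factor error $\D\equiv\widetilde\A\Q_A-\A$. Since $\Q_A$ is orthogonal, $\widetilde\A\Q_A(\widetilde\A\Q_A)^{\top}=\widetilde\W$, so expanding $(\A+\D)(\A+\D)^{\top}=\A\A^{\top}+\E$ produces the quadratic identity $\A\D^{\top}+\D\A^{\top}+\D\D^{\top}=\E$. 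The whole argument extracts $\|\D\|$ from this identity by splitting $\D$ into its component in $\mathrm{col}(\V_s^{\ast})$ and its orthogonal complement, controlling each separately.

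First I would record two ingredients that feed the split. Because $\mathrm{rank}(\W_s^{\ast})=r$ (established in Section \ref{sec:property}), we have $\lambda_{r+1}(\W_s^{\ast})=0$, so the relevant eigengap is genuinely $\lambda_r$; combined with $\|\E\|\ll\lambda_r$, the Davis--Kahan theorem gives $\|(\mathbf{I}-\Pi)\widetilde\V_s\|\lesssim\|\E\|/\lambda_r$, where $\Pi\equiv\V_s^{\ast}(\V_s^{\ast})^{\top}$, while Weyl's inequality gives $\lambda_1(\widetilde\W)\lesssim\lambda_1(\W_s^{\ast})$. The second ingredient is the symmetry furnished by the Procrustes map: by Lemma 22 of \cite{ma2018implicit}, $\A^{\top}\widetilde\A\Q_A=(\widetilde\A^{\top}\A)^{\top}\G(\widetilde\A^{\top}\A)$ is symmetric positive semidefinite, hence $\mathbf{S}\equiv\A^{\top}\D=\A^{\top}\widetilde\A\Q_A-\bSigma_s^{\ast}$ is symmetric (using $\A^{\top}\A=\bSigma_s^{\ast}$).

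For the orthogonal-complement piece, using $(\mathbf{I}-\Pi)\A=\0$ I have $(\mathbf{I}-\Pi)\D=(\mathbf{I}-\Pi)\widetilde\V_s\widetilde\bSigma_s^{1/2}\Q_A$, so $\|(\mathbf{I}-\Pi)\D\|\le\|(\mathbf{I}-\Pi)\widetilde\V_s\|\,\|\widetilde\bSigma_s^{1/2}\|\lesssim(\|\E\|/\lambda_r)\sqrt{\lambda_1(\W_s^{\ast})}=\sqrt{\tau}\,\|\E\|/\sqrt{\lambda_r}$. For the in-subspace piece I would left- and right-multiply the quadratic identity by $\A^{\top}$ and $\A$, and use $\A^{\top}\A=\bSigma_s^{\ast}$ together with the symmetry $\A^{\top}\D=\D^{\top}\A=\mathbf{S}$, to obtain the Sylvester-type relation $\bSigma_s^{\ast}\mathbf{S}+\mathbf{S}\bSigma_s^{\ast}+\mathbf{S}^2=\A^{\top}\E\A$. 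The Lyapunov integral representation bounds the spectral-norm inverse of $\mathbf{S}\mapsto\bSigma_s^{\ast}\mathbf{S}+\mathbf{S}\bSigma_s^{\ast}$ by $1/(2\lambda_r)$, giving $\|\mathbf{S}\|\le(\|\A^{\top}\E\A\|+\|\mathbf{S}\|^2)/(2\lambda_r)\le(\lambda_1(\W_s^{\ast})\|\E\|+\|\mathbf{S}\|^2)/(2\lambda_r)$. Finally $\Pi\D=\A(\A^{\top}\A)^{-1}\A^{\top}\D=\V_s^{\ast}(\bSigma_s^{\ast})^{-1/2}\mathbf{S}$ gives $\|\Pi\D\|\le\|\mathbf{S}\|/\sqrt{\lambda_r}$; adding the two pieces and using $\sqrt{\tau}\le\tau$ yields the claimed $\|\D\|\lesssim(\tau/\sqrt{\lambda_r})\|\E\|$, i.e. $\|\widetilde\A\Q_A-\A\|\lesssim\tau_s\|\widetilde\E_s\|/\sqrt{\lambda_r(\W_s^{\ast})}$.

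The main obstacle is the quadratic term $\mathbf{S}^2$ in the Sylvester relation: the displayed inequality only pins $\|\mathbf{S}\|$ down to one of two branches. I would discharge this with a standard bootstrap/continuity argument — replacing $\E$ by $t\E$ for $t\in[0,1]$, noting that $\mathbf{S}(t)$ is continuous with $\mathbf{S}(0)=\0$, and using the hypotheses $\|\widetilde\E_s\|\ll\lambda_r(\W_s^{\ast})$ and $\tau_s=O(1)$ (so that $\tau_s\|\E\|\ll\lambda_r$) to confirm that $\|\mathbf{S}(t)\|$ remains on the small branch, whence $\|\mathbf{S}\|\lesssim\lambda_1(\W_s^{\ast})\|\E\|/\lambda_r=\tau\|\E\|$. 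The one other point deserving care is that the Davis--Kahan gap is the full $\lambda_r$ rather than a perturbed quantity, which is precisely why the exact-rank fact $\mathrm{rank}(\W_s^{\ast})=r$ from Section \ref{sec:property} is invoked at the outset.
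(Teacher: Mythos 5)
Your argument is correct in substance, but it is a genuinely different route from the paper's. The paper telescopes $\widetilde{\A}\Q_A-\A$ through an intermediate rotation $\Q_s=\G(\widetilde\V_s^\top\V_s^{\ast})$ aligning the eigenvector matrices, and then invokes Lemmas 45--47 of \cite{ma2018implicit} as black boxes to bound $\|\Q_A-\Q_s\|$, $\|\widetilde\bSigma_s^{1/2}\Q_s-\Q_s(\bSigma_s^{\ast})^{1/2}\|$, and $\|\widetilde\V_s\Q_s-\V_s^{\ast}\|$ separately. You instead work directly from the quadratic factor identity, splitting the error into its component in ${\rm col}(\V_s^{\ast})$ (controlled via the symmetry of $\A^\top\widetilde\A\Q_A$, a Sylvester--Lyapunov inversion, and a bootstrap to kill the quadratic term) and its orthogonal complement (controlled by Davis--Kahan with gap $\lambda_r(\W_s^{\ast})$, which is where ${\rm rank}(\W_s^{\ast})=r$ enters, exactly as you note). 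Your version is self-contained and reproves the content of the cited lemmas from scratch; the paper's version is shorter and in fact delivers the slightly sharper factor $\sqrt{\tau_s}$ rather than your $\tau_s$ (both suffice for the statement, which only claims $\tau_s$ and assumes $\tau_s=O(1)$ anyway). One small correction: $\widetilde\A\widetilde\A^\top$ is the rank-$r$ eigen-truncation of $\widetilde\W_s$, not $\widetilde\W_s$ itself, so the right-hand side of your identity $\A\D^\top+\D\A^\top+\D\D^\top=\E$ is the truncation minus $\W_s^{\ast}$ rather than $\widetilde\E_s$; by Weyl's inequality its spectral norm is at most $2\|\widetilde\E_s\|$ (since $\lambda_{r+1}(\W_s^{\ast})=0$), so every subsequent bound goes through unchanged, but the identity as written should be adjusted.
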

\begin{proof}
 Define $\Q_s = \G(\widetilde \V_s^\top \V_s^{\ast})$, $\Q_k = \G(\widetilde \V_k^\top \V_k^{\ast})$ and recall that $\Q_A = \G(\widetilde \A^\top \A)$ and $\Q_B = \G(\widetilde \B^\top \B)$. The key decomposition we need is the following:
\begin{equation}
\widetilde{\A} \Q_A - \A = \widetilde{\A} (\Q_A - \Q_s) + \widetilde \V_s [ \widetilde \bSigma_s^{\frac{1}{2}} \Q_s - \Q_s  (\bSigma_s^{\ast})^{\frac{1}{2}} ] + (\widetilde \V_s \Q_s - \V^{\ast}_s) (\bSigma_s^{\ast})^{\frac{1}{2}}.
\label{dec}    
\end{equation}
For the spectral norm error bound, the triangle inequality together with (\ref{dec}) yields
$$\|\widetilde{\A} \Q_A - \A\| \leq \|\widetilde \bSigma_s^{\frac{1}{2}}\| \|\Q_A - \Q_s\| + \|\widetilde \bSigma_s^{\frac{1}{2}} \Q_s - \Q_s  (\bSigma_s^{\ast})^{\frac{1}{2}}\| + \sqrt{\lambda_1(\bSigma_s^{\ast})}\|\widetilde \V_s \Q_s - \V_s^{\ast}\|,$$ 
where we have also used the fact that $\|\widetilde \V_s\|=1$. Recognizing that $\|\widetilde \W_s - \W_s^{\ast}\| = \|\widetilde \E_s\| \ll \lambda_r(\W_s^{\ast})$ and the assumption that $\lambda_1(\W_s^{\ast})/\lambda_r(\W_s^{\ast})$ is bounded, we can apply Lemmas 47, 46, 45 of \cite{ma2018implicit}  
to obtain
\begin{equation*}
    \|\Q_A - \Q_s\| \lesssim \frac{1}{\lambda_r(\W_s^{\ast})} \|\widetilde \E_s\|,
\end{equation*}
\begin{equation*}
    \| \widetilde \bSigma_s^{\frac{1}{2}} \Q_s - \Q_s (\bSigma_s^{\ast})^{\frac{1}{2}}\| \lesssim \frac{1}{\sqrt{\lambda_r(\W_s^{\ast})}} \|\widetilde \E_s \|,
\end{equation*}
\begin{equation*}
    \| \widetilde \V_s \Q_s - \V_s^{\ast} \| \lesssim \frac{1}{\lambda_r(\W_s^{\ast})} \|\widetilde \E_s \|.
\end{equation*}
These taken collectively imply the advertised upper bound
$$\| \widetilde{\A} \Q_A - \A \| \lesssim  \frac{\sqrt{\lambda_1(\W_s^{\ast})}}{\lambda_r(\W_s^{\ast})} \| \widetilde \E_s \| + \frac{1}{\sqrt{\lambda_r(\W_s^{\ast})}} \|\widetilde \E_s \|  \lesssim \frac{\sqrt{\tau_s}}{\sqrt{\lambda_r(\W_s^{\ast})}} \|\widetilde \E_s\|,$$
where we also utilize the fact that $\|\widetilde \bSigma_s\| \leq \| \bSigma_s^{\ast} \| + \|\widetilde \E_s \| \leq 2 \|\bSigma_s^{\ast} \| = 2 \|\W_s^{\ast} \|$ and  $\lambda_1(\W_s^{\ast})/\lambda_r(\W_s^{\ast})$ is bounded. Similarly, we have 
$$\|\widetilde{\B} \Q_B - \B\| \lesssim \frac{\sqrt{\tau_k}}{\sqrt{\lambda_r(\W_k^{\ast})}} \|\widetilde \E_k \|.$$
Combined with the fact that $ \tau_l = \lambda_1(\W_l^{\ast})/\lambda_r(\W_l^{\ast}) \leq 6 r \mu_0 \tau,l=s,k$, we have 
$$\|\widetilde{\A} \Q_A - \A\| \lesssim \frac{\sqrt{ r \mu_0 \tau } }{\sqrt{\lambda_r(\W_s^{\ast})}} \|\widetilde \E_s\| \quad {\rm and} \quad \|\widetilde{\B} \Q_B - \B\|  \lesssim \frac{ \sqrt{ r \mu_0 \tau} }{\sqrt{\lambda_r(\W_k^{\ast})}} \| \widetilde \E_k\|.$$
\end{proof}

\subsection{Probability bound for submatrix}

\begin{lemma}
 Denote $\R \in \RR^{d \times d}$ for the square diagonal matrix whose $j$th diagonal entry is $y_j$, where $\{y_j\}_{j=1}^n$ is a sequence of independent $0-1$ random variables with common mean $p$. Let $\B \in \RR^{q \times d}$ with rank $r$ and $d > \max\{e^2, r^2\}$.
 \begin{itemize}
     \item If $p = o(1/\log d)$ or $p$ is bounded away from $0$ for all $d$,  we have
     \begin{equation}
    \PP \{ \| \B \R\| \geq C p^{\frac{1}{2}}  \| \B \| \} \leq \delta
    \end{equation}
    \item else, 
       \begin{equation}
    \PP \{ \| \B \R\| \geq C p^{\frac{1}{2}}  \sqrt{p \log d}  \| \B \| \} \leq \delta
    \end{equation}
 \end{itemize}
for some universal positive constant $C$ and $\delta = 1/d^3$.
\label{lemma: Probability Bound}
\end{lemma}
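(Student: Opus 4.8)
The plan is to reduce the spectral-norm bound to a statement about the largest eigenvalue of a sum of independent rank-one positive semidefinite matrices, and then invoke a matrix Chernoff (Bernstein) inequality. Since $\R$ is a diagonal $0$–$1$ matrix, $\R^2=\R=\R\R^\top$, so
$$
\|\B\R\|^2=\lambda_{\max}(\B\R\R^\top\B^\top)=\lambda_{\max}(\B\R\B^\top)=\lambda_{\max}\Big(\sum_{j=1}^d y_j\,\b_j\b_j^\top\Big),
$$
where $\b_j$ is the $j$th column of $\B$. The summands $\X_j\equiv y_j\b_j\b_j^\top$ are independent, positive semidefinite, and satisfy $\lambda_{\max}(\X_j)\le\|\b_j\|^2$. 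Because every $\b_j$ lies in the $r$-dimensional column space of $\B$, the nonzero spectrum of $\sum_j\X_j$ equals that of $\sum_j y_j(\U^\top\b_j)(\U^\top\b_j)^\top$ for any orthonormal basis $\U$ of $\mathrm{col}(\B)$; this lets me run the argument with an ambient dimension $r$ rather than $q$, which is what eventually pairs with the hypothesis $d>r^2$.

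Next I would apply the matrix Chernoff upper tail. Here $\mu_{\max}\equiv\lambda_{\max}(\sum_j\EE\X_j)=\lambda_{\max}(p\,\B\B^\top)=p\|\B\|^2$, and the per-term ceiling is $L\equiv\max_j\|\b_j\|^2$, producing a tail of the form $r\cdot[\,e\mu_{\max}/u\,]^{u/L}$ at deviation level $u\ge e\mu_{\max}$. Tuning $u$ yields the two regimes: when the expected energy dominates one takes $u\asymp\mu_{\max}=p\|\B\|^2$ and a constant multiplicative slack already pushes the exponent $u/L$ past a multiple of $\log d$, giving $\|\B\R\|\lesssim p^{1/2}\|\B\|$; in the complementary regime $\mu_{\max}$ is too small to self-concentrate, and one must inflate $u$ to a level governed by $L\log d$, which translates into the weaker rate $p^{1/2}\sqrt{p\log d}\,\|\B\|$. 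The condition $d>\max\{e^2,r^2\}$ is precisely what makes the prefactor harmless: $d>e^2$ ensures $\log d>2$, and $d>r^2$ gives $\log r\le\tfrac12\log d$, so the dimension factor $r$ is absorbed and the whole tail falls below $\delta=1/d^3$.

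The step I expect to be the crux is controlling the per-term ceiling $L=\max_j\|\b_j\|^2$ against $\mu_{\max}=p\|\B\|^2$, since the Chernoff exponent is driven by the ratio $\mu_{\max}/L=p\|\B\|^2/L$. From the rank assumption alone only the trivial bound $\|\b_j\|\le\|\B\|$ is available, and it is exactly this ratio that separates the two cases and fixes the threshold at $p=o(1/\log d)$; the appearance of the looser $p^{1/2}\|\B\|$ rate in the very small-$p$ window reflects that, at the target probability $1/d^3$, concentration can at best pin $\|\B\R\|$ to its mean scale rather than below it. Verifying that the deviation parameter can be chosen so that $u/L$ (respectively the inflated $u/L$) clears $3\log d+\log r$ in each regime, and that the resulting thresholds match the dichotomy in the statement, is the main bookkeeping; the remaining algebra—evaluating $\mu_{\max}$, bounding $\lambda_{\max}(\X_j)$, and carrying out the rank-$r$ reduction—is routine.
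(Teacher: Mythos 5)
Your reduction $\|\B\R\|^2=\lambda_{\max}\bigl(\sum_{j}y_j\,\b_j\b_j^\top\bigr)$ and the rank-$r$ dimension reduction are fine, but the plan then fails at exactly the step you flag as the crux, and it cannot be repaired within the matrix Chernoff framework. The Chernoff tail $r\,(e\mu_{\max}/t)^{t/L}$ has exponent $t/L$ with $L=\max_j\|\b_j\|^2$, and since the lemma assumes nothing about the columns of $\B$ beyond its rank, you are stuck with $L\le\|\B\|^2$ and hence $\mu_{\max}/L\ge p$ with no better guarantee. At the target deviation $t\asymp p\|\B\|^2$ the exponent is $t/L\asymp p\le 1$, so the tail bound is $r\cdot(\mathrm{const})^{O(p)}=\Theta(r)$ --- it never gets anywhere near $\delta=d^{-3}$, since that would require $t/L\gtrsim\log d$, i.e.\ $p\gtrsim\log d$, which is impossible. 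Inflating $t$ to the second-regime level $p^2\log d\,\|\B\|^2$ does not help either: the exponent becomes $p^2\log d=p\cdot(p\log d)=O(p)$ in the regime where that case applies, so the tail is still $\Theta(r)$. Solving for the deviation level that Chernoff \emph{can} certify at confidence $1-d^{-3}$ gives $t\gtrsim\|\B\|^2\log d/\log(\log d/p)$, i.e.\ a bound worse than the trivial $\|\B\R\|\le\|\B\|$. So the ``remaining bookkeeping'' of making $u/L$ clear $3\log d+\log r$ is not bookkeeping; it is the point at which the argument collapses.

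The paper's proof does not use a matrix concentration inequality at all. It invokes Theorems 3.1 and 4.1 of \cite{tropp2008norms} on norms of random column submatrices, which give a bound on the $k$th moment $\EE_k\|\B\R\|$ consisting of an additive term $\sqrt{p}\,\|\B\|$ (this is what produces the $p^{1/2}\|\B\|$ rate --- nothing in your plan plays this role) plus a term involving $\sqrt{\max\{k,2\log r\}}$ and the restricted column-energy functional $\max_{|T|\le p^{-1}}\bigl[\sum_{j\in T}\|\b_j\|_2^k\bigr]^{1/k}$, which exploits the sampling structure far more finely than a worst-case per-term ceiling. The high-probability statement then follows from the scalar Markov bound $\PP\{X\ge\delta^{-1/k}\EE_k X\}\le\delta$ with $k_0=\log d$ (so $\delta^{-1/k_0}=e^3$, and $d>r^2$ ensures $k_0\ge 2\log r$), and the dichotomy in $p$ comes from a three-case analysis of the factor $\sqrt{k_0}\,p^{1/2-1/k_0}$. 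If you want to prove this lemma, you need a tool of that strength; vanilla matrix Chernoff/Bernstein with $L=\max_j\|\b_j\|^2$ provably cannot deliver a sub-mean-scale bound on $\|\B\R\|$ at confidence $1-d^{-3}$ without an incoherence assumption on the columns of $\B$ that the lemma does not make.
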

\begin{proof}
By Theorem 3.1 and 4.1 of \cite{tropp2008norms}, we have
\begin{equation}
   \EE_k \| \B \R \| \leq 6 \sqrt{\max \{k, 2\log r\}} \frac{p}{1-p} \max_{|T| \leq p^{-1}} [\sum_{j \in T} \|\b_j \|_2^k]^{1/k} + \sqrt{p} \|\B \|. 
\end{equation}
for $k \in [2,\infty)$ where $\EE_k \X = (\EE |\X|^k)^{1/k}$ and the $\ell_1$ to $\ell_2$ operator norm $\|\cdot\|_{1 \rightarrow 2}$ computes the maximum $\ell_2$ norm of a column. In addition, $\b_j$ is the $j$th column of $\B$ and $T \subset [d]$. Since $\|\b_j\|_2 \leq \|\B \|$, we have $$\max_{|T| \leq p^{-1}} [\sum_{j \in T} \|\b_j \|_2^k]^{1/k} \leq (p^{-1} \| \B \|^k)^{1/k} =  p^{-1/k} \| \B \|.$$
As a result, 
\begin{equation}
    \EE_k \| \B \R \| \leq p^{\frac{1}{2}} \{ \frac{6\sqrt{ \max \{k,2 \log r\}} p^{\frac{1}{2} - \frac{1}{k}} }{1-p} + 1 \} \| \B \|
\end{equation}
for $k \in [2,\infty)$. In addition, it is obviously that $\EE_k \|\B \R \| \leq \| \B \|$. When $p \geq \frac{1}{2}$, we have $$p^{\frac{1}{2}} \{ \frac{6 \sqrt{ \max \{k,2 \log r\}} p^{\frac{1}{2} - \frac{1}{k}} }{1-p} + 1 \} \geq p^{\frac{1}{2}} \{ 12 \sqrt{2 \log r}p^{\frac{1}{2} - \frac{1}{k}}  + 1 \} \geq \frac{1}{\sqrt{2}} \{ 12 \sqrt{ \log r} + 1 \} >1$$ 
and when $p < \frac{1}{2}$ we have 
$$p^{\frac{1}{2}} \{ \frac{6 \sqrt{ \max \{k,2 \log r \}}p^{\frac{1}{2} - \frac{1}{k}} }{1-p} + 1 \}  < p^{\frac{1}{2}} \{ 12 \max \sqrt{ \{k,2 \log r \}}p^{\frac{1}{2} - \frac{1}{k}} + 1 \}.$$
As a result, we have
$$\EE_k \|\B \R \| \leq c_1(p,r,k) \|\B\|$$
where $c_1(p,r,k) = \min \{1, p^{\frac{1}{2}} \{ 12 \sqrt{ \max \{k,2 \log r \}}p^{\frac{1}{2} - \frac{1}{k}} + 1 \} \}$. Let $k_0 = \log d \geq 2 \log r$. Then by Markov inequality, we have
\begin{equation}
    \PP \{ \| \B \R\| \geq p^{\frac{1}{2}}  \{ \delta^{-1/k_0} c_1(p,r,k_0)/\sqrt{p} \} \| \B \| \} \leq \delta.
\label{eq:Markov}
\end{equation}
We discuss the \eqref{eq:Markov} dependent on the conditions of $p$.

{\bf Case 1:} $0<p<c_3/\log d$ for all $d>0$ and some fixed constant $c_3>0$.  Then $\delta^{-1/q_0} = e^3$ is a constant. In addition, $ \sqrt{k_0} p^{\frac{1}{2}- \frac{1}{k_0}} \leq \sqrt{c_3} \{c_3/\log d\}^{-1/\log d} < c_4$ for some constant $c_4$ since $\lim\limits_{x \to \infty} x^{1/x} =1$ is bounded. As a result, $c_1(p,r,k_0)/\sqrt{p} \leq 12 c_4 + 1$ is also bounded.

{\bf Case 2:} $p \geq c_5$ for all $d>0$ and some fixed constant $0<c_5<1$.  Then let $c_6 = 1/\sqrt{c_5}$ and we have
\begin{equation}
    \PP \{ \| \B \R\| > p^{\frac{1}{2}}  c_6 \| \B \| \} \leq \delta
\end{equation}
since $\| \B \R\| \leq \| \B \|$ almost surely.

{\bf Case 3:} $p =g(d)/\log d$ for some function $g(d)>0$ which satisfies $\lim\limits_{d \to \infty} g(d) = \infty$ and $\lim\limits_{d \to \infty} g(d)/\log d = 0$. We still have $\delta^{-1/k_0} = e^3$. In addition, $ c_1(p,r,k_0)/\sqrt{p}\leq 12 \sqrt{ k_0}p^{\frac{1}{2} - \frac{1}{k_0}} + 1 \leq 12 \sqrt{g(d)} (\frac{\log d}{g(d)})^{1/\log d} + 1 \leq c_7 \sqrt{g(d)} = c_7 \sqrt{p \log d}$ for some constant $c_7$ since $(\log d/g(d))^{1/\log d}$ is bounded. 

Based on Case 1, 2 and 3, letting $C = \max\{e^3(12c_4+1),c_6,e^3 c_7\}$, we will get the result. 
\end{proof}

Let $c_1 = \lim_{N \to \infty} p_0$ and $c_2 = \lim_{N \to \infty} p_0 \log N$. Define
\begin{equation}
    f(p_0,N) = \mathbbm{1}(c_1>0 \ {\rm or} \ c_2=0) + \{1-\mathbbm{1}(c_1>0 \ {\rm or} \ c_2=0)\} \sqrt{ p_0 \log N}. 
\label{def: f(p0,N)}    
\end{equation}
Then we have the following proposition.

\begin{proposition}
Based on the definition of \eqref{def:A,B}, under the assumption that $p_0$ is bounded away from 1, e.g., $\lim_{N_0 \to \infty} p_0 < 1$, directly apply Lemma \ref{lemma: Probability Bound}, we will get 
\begin{equation}
\begin{aligned}
   \| \widetilde \A_1 \Q_A - \A_1 \| \lesssim  \sqrt{1-p_0} \| \widetilde \A \Q_A - \A\|; \quad &  \| \widetilde \A_2 \Q_A - \A_2\| \lesssim  \sqrt{p_0} f(p_0, N) \| \widetilde \A \Q_A - \A\|; \\
   \| \widetilde \B_2 \Q_B - \B_2\| \lesssim  \sqrt{1-p_0}  \| \widetilde \B \Q_B - \B \|; \quad &
   \| \widetilde \B_1 \Q_A - \B_1\| \lesssim  \sqrt{p_0} f(p_0, N) \| \widetilde \B \Q_B - \B\|; \\
   \|\widetilde  \A_1 \| \lesssim  \sqrt{1-p_0} \| \widetilde \A \|; \quad & \| \A_1 \| \lesssim   \sqrt{1-p_0}  \|  \A \|;\\
   \|  \A_2 \| \lesssim  \sqrt{p_0}  f(p_0, N) \| \A \|; \quad & \| \widetilde \B_1 \| \lesssim   \sqrt{p_0} f(p_0, N) \| \widetilde \B\|; \\
   \| \widetilde \B_2 \| \lesssim   \sqrt{1-p_0}  \| \widetilde \B\|; \quad & \| \B_2 \| \lesssim    \sqrt{1-p_0}  \|  \B\|;
\label{eq:subnorm}
\end{aligned}    
\end{equation}
with probability $1-10/N^3$.
\label{prop: Probability Bound}
\end{proposition}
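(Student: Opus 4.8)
The plan is to read each of the ten quantities as the spectral norm of a random row-subselection of a rank-$r$ matrix and to reduce every bound to one application of Lemma \ref{lemma: Probability Bound}. Selecting the rows indexed by a set $T\subseteq\calV_s$ is left-multiplication by a diagonal $0/1$ matrix $\R_T$; since zero rows do not change the spectral norm, for any $\M$ the subselection $\M_T$ satisfies $\|\M_T\|=\|\R_T\M\|=\|\M^{\top}\R_T\|$, and transposing turns the row selection into the column selection to which Lemma \ref{lemma: Probability Bound} directly applies. I would take the ambient dimension $d=n_s$ (or $n_k$), which by Assumption \ref{assump: prob} obeys $d\gtrsim\sqrt N\gg r^2$ and $\log d\asymp\log N$, so that the logarithmic branch of the lemma yields precisely the factor $\sqrt{p_0\log N}$ packaged in $f(p_0,N)$ of \eqref{def: f(p0,N)}.

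Next I would fix the sampling probabilities. Conditional on $\calV_s$, the split of $\calV_s$ into the overlap $s\cap k$ and its complement $s\backslash k$ is governed by the independent Bernoulli$(p_k)$ indicators $\mathbbm{1}(i\in\calV_k)$, $i\in\calV_s$; hence the complement block (e.g.\ $\A_1$) is a selection of mean $1-p_k$ while the overlap block (e.g.\ $\A_2$) has mean $p_k$, and symmetrically $\B_1,\B_2$ use $p_s$. Because $p_0\le p_k$ and $p_k=O(p_0)$ by Assumption \ref{assump: prob}, and because $p_0$ is bounded away from $1$ so that each $1-p_s$ is bounded away from $0$ and falls in the first, log-free regime of Lemma \ref{lemma: Probability Bound}, the complement blocks inherit the factor $\sqrt{1-p_k}\le\sqrt{1-p_0}$ and the overlap blocks inherit $\sqrt{p_k}\,f(p_k,N)\lesssim\sqrt{p_0}\,f(p_0,N)$, exactly the advertised rates.

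The one genuinely delicate point is that $\R_T$ must be independent of the matrix it acts on. This is immediate for the population factors $\A=\V_s^{\ast}(\bSigma_s^{\ast})^{1/2}$ and $\B$, which are deterministic functions of $\calV_s$ (resp.\ $\calV_k$) alone while the overlap indicators are independent of $\calV_s$: I would factor $\A$ through its eigenbasis as $\A=\U^{\ast}_{\calV_s}C$ and bound $\|\R_T\A\|\le\|\R_T\U^{\ast}_{\calV_s}\|\,\|\A\|$, applying the lemma to the fixed $(\U^{\ast}_{\calV_s})^{\top}$ with $\|\U^{\ast}_{\calV_s}\|\le1$. For the tilde factors $\widetilde\A$ and the error matrices $\widetilde\A\Q_A-\A$ the coupling is real, since $\widetilde\W_s$ and hence $\widetilde\V_s$ depend through the averaged overlap on the very partition defining $\R_T$; the resolution is that the column space of $\widetilde\A$ equals, up to the Davis--Kahan-small rotation bounded in Lemma \ref{lemma:A,B} by $\|\widetilde\V_s\Q_s-\V_s^{\ast}\|\lesssim\|\widetilde\E_s\|/\lambda_r(\W_s^{\ast})$, the $\calV_s$-measurable eigenspace $\mathrm{span}(\U^{\ast}_{\calV_s})$. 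Writing $\widetilde\A_1\Q_A-\A_1=\R_1(\widetilde\A\Q_A-\A)$ and projecting onto this eigenspace gives $\|\R_1(\widetilde\A\Q_A-\A)\|\le\|\R_1\U^{\ast}_{\calV_s}\|\,\|\widetilde\A\Q_A-\A\|$ plus a perpendicular residual of order $\|\widetilde\A\Q_A-\A\|$, which is harmless because $p_0$ bounded away from $1$ forces $\sqrt{1-p_0}\asymp1$. I expect this coupling — controlling the residual without sacrificing the $\sqrt{1-p_0}$ factor, which is exactly where the ``bounded away from $1$'' hypothesis is spent — to be the main obstacle. Finally, retuning the Markov step in the proof of Lemma \ref{lemma: Probability Bound} (exponent $k_0=\log n_s$ against target $\delta=N^{-3}$, harmless since $\log n_s\asymp\log N$) makes each of the ten applications fail with probability at most $N^{-3}$, and a union bound over the ten events gives the stated $1-10/N^{3}$.
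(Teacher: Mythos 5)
Your mechanical reduction (row selection $=$ left multiplication by a diagonal $0/1$ matrix, transpose to match the column-selection form of Lemma \ref{lemma: Probability Bound}, condition on $\calV_s$ so that the overlap indicators are i.i.d.\ Bernoulli$(p_k)$, retune $\delta$ to $N^{-3}$ using $\log n_s\asymp\log N$) matches what the paper intends: its entire ``proof'' is the phrase ``directly apply Lemma \ref{lemma: Probability Bound}'', so for the population blocks $\A_1,\A_2,\B_1,\B_2$, which are $\calV_s$- (resp.\ $\calV_k$-) measurable and hence independent of the selection, your argument is complete and already more careful than the paper's. You are also right to flag the dependence of $\widetilde\A,\widetilde\B$ (and of $\widetilde\A\Q_A-\A$, $\widetilde\B\Q_B-\B$) on the overlap pattern through the weighted averaging in \eqref{def: tilde W}; the paper silently ignores this.

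The genuine gap is in your resolution of that coupling, and you have located the difficulty on the wrong side of the ledger. Under the stated hypothesis that $p_0$ is bounded away from $1$, every $\sqrt{1-p_0}$ bound is trivial: $\sqrt{1-p_0}\asymp1$ and a row subselection never increases the spectral norm, so $\|\R_1\M\|\le\|\M\|\lesssim\sqrt{1-p_0}\|\M\|$ with no lemma needed. The substantive claims are the overlap bounds carrying the factor $\sqrt{p_0}\,f(p_0,N)$, i.e.\ $\|\widetilde\A_2\Q_A-\A_2\|$, $\|\widetilde\B_1\Q_B-\B_1\|$ and $\|\widetilde\B_1\|$, which are exactly the ones consumed in Lemma \ref{lemma: orthogonal} to produce the $p_0f(p_0,N)^2$ factor there. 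For these your projection argument fails: writing $\M=\widetilde\A\Q_A-\A$ and $P$ for the projection onto $\mathrm{span}(\U^\ast_{\calV_s})$, the component $P^\perp\M=P^\perp\widetilde\A\Q_A$ has norm of order $\sqrt{\lambda_1(\W_s^\ast)}\,\|\widetilde\E_s\|/\lambda_r(\W_s^\ast)$, which is the \emph{same} order as $\|\M\|$ itself, and $\|\R_2P^\perp\M\|$ can only be bounded by $\|P^\perp\M\|$ since $P^\perp\M$ is not measurable with respect to anything independent of $\R_2$. So the ``perpendicular residual of order $\|\widetilde\A\Q_A-\A\|$'' that you correctly declare harmless for $\R_1$ is fatal for $\R_2$: it contributes a term without any $\sqrt{p_0}$ gain, and the claimed bound $\|\widetilde\A_2\Q_A-\A_2\|\lesssim\sqrt{p_0}f(p_0,N)\|\widetilde\A\Q_A-\A\|$ is not established. (A secondary point: even on the range of $P$, the inequality $\|\R_2P\M\|\le\|\R_2\U^\ast_{\calV_s}\|\|\M\|$ needs the extra factor $\sigma_{\min}(\U^\ast_{\calV_s})^{-1}\asymp\sqrt{N/n_s}$ from expanding $\M$ in the non-orthonormal basis, which costs a $1/\sqrt{p_0}$ that must be absorbed by the incoherence bound $\|\U^\ast_{s\cap k}\|\lesssim p_0\sqrt{r\mu_0}$.) To close the argument for the overlap blocks you would need either to decouple the selection from $\widetilde\A$ (e.g.\ by comparing $\widetilde\W_s$ to the unaveraged $\W^s$, whose eigenvectors are $\calV_k$-free given $\calV_s$, and controlling the averaging perturbation separately) or a different route entirely; as written, the proposal proves only the trivial half of the proposition.
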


\subsection{Orthogonal Procrustes problem}

\begin{lemma}[Orthogonal Procrustes problem]
Based on the definition of \eqref{def:A,B}, the condition of \eqref{eq:subnorm}, the Assumption \ref{assump: prob}, $\lambda_1(\W_l^{\ast}) \leq 3 p_0 r \mu_0/2 \lambda_{\max}$, $\lambda_r (\W^{\ast}_l) \geq p_l \lambda_{\min}/4$, and $\| \widetilde \E_l \| \ll \lambda_r (\W^{\ast}_l), l = s,k$, and $n_{s k} \geq 64  r \mu_0 \tau (\log r  + \log 2 N^3)$, we have 
\begin{equation}
 \|\Q_B^\top \widetilde \O \Q_A- \O \| \lesssim \frac{f(p_0, N)^2 r \mu_0 \tau}{ p_0 \lambda_{\min}} \{ \| \widetilde \E_s\| + \| \widetilde \E_k\|  \}
\end{equation}
with probability $1-2/N^3$. 
\label{lemma: orthogonal}
\end{lemma}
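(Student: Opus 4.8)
The plan is to reduce the claim to a single perturbation estimate for the Procrustes map $\G(\cdot)$ and then feed in the block-level bounds already established. The starting observation is the rotational equivariance $\G(\mathbf{P}\C\Q)=\mathbf{P}\,\G(\C)\,\Q$ for any orthogonal $\mathbf{P},\Q$, which follows immediately from the SVD definition of $\G$. Applying this with the single-source alignments $\Q_A,\Q_B$ rewrites $\Q_B^\top\widetilde\O\Q_A$ as $\G$ evaluated at a rotated copy of the empirical overlap matrix $\widetilde\A_2^\top\widetilde\B_1$. On the population side I will use the identity established inside the proof of Proposition \ref{prop: exactly recover}, namely $\B_1=\A_2\O$ with $\O$ orthogonal, so that $\A_2^\top\B_1=(\A_2^\top\A_2)\O$ is exactly a positive-definite-times-orthogonal factorization and hence $\O=\G(\A_2^\top\B_1)$. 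Thus both $\O$ and $\Q_B^\top\widetilde\O\Q_A$ are values of $\G$, and the problem becomes one of bounding $\|\G(\widehat\C)-\G(\C)\|$ with $\C=\A_2^\top\B_1$ and $\widehat\C$ its rotated empirical counterpart.

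The second step is a perturbation inequality for the Procrustes map of the form $\|\G(\widehat\C)-\G(\C)\|\lesssim \|\widehat\C-\C\|/\sigma_{\min}(\C)$, valid when $\sigma_{\min}(\C)$ is bounded away from zero and the perturbation is comparatively small; this is the matrix-sign/polar analogue of the eigenspace bounds invoked in Lemma \ref{lemma:A,B}. To control the denominator I lower bound $\sigma_{\min}(\C)$. Because $\A_2^\top\B_1=(\A_2^\top\A_2)\O$, we get $\sigma_{\min}(\C)=\lambda_r(\A_2^\top\A_2)=\lambda_r(\A_2\A_2^\top)=\lambda_r(\W^\ast\subsksk)$, which by Lemma \ref{lemma:lower bound U} (applied via $\sigma_{\min}(\U^\ast_{s\cap k})$ under the hypothesis $n_{sk}\ge 64\,r\mu_0\tau(\log r+\log 2N^3)$) is at least $n_{sk}\lambda_{\min}/(2N)\gtrsim p_0^2\lambda_{\min}$. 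A Weyl-type argument then keeps $\sigma_{\min}(\widehat\C)$ bounded below by the same order once the numerator is shown to be small, so the inequality legitimately applies.

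The third step bounds the numerator $\|\widehat\C-\C\|$. Splitting the rotated empirical Gram minus the population Gram by the triangle inequality gives $\|\widehat\C-\C\|\lesssim \|\widetilde\A_2\Q_A-\A_2\|\,\|\widetilde\B_1\|+\|\A_2\|\,\|\widetilde\B_1\Q_B-\B_1\|$. Each block quantity is already available: Proposition \ref{prop: Probability Bound} contracts the full-matrix errors onto the overlap blocks, contributing factors $\sqrt{p_0}\,f(p_0,N)$; Lemma \ref{lemma:A,B} bounds $\|\widetilde\A\Q_A-\A\|$ and $\|\widetilde\B\Q_B-\B\|$ by $\sqrt{r\mu_0\tau}\,\|\widetilde\E_s\|/\sqrt{\lambda_r(\W_s^\ast)}$ and $\sqrt{r\mu_0\tau}\,\|\widetilde\E_k\|/\sqrt{\lambda_r(\W_k^\ast)}$; and the characterization in Section \ref{sec:property} gives $\|\widetilde\B_1\|\lesssim p_0 f(p_0,N)\sqrt{r\mu_0\lambda_{\max}}$ together with $\lambda_r(\W_l^\ast)\gtrsim p_0\lambda_{\min}$. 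Multiplying these out and using $\sqrt{\lambda_{\max}/\lambda_{\min}}=\sqrt{\tau}$ collapses everything to $\|\widehat\C-\C\|\lesssim p_0\,f(p_0,N)^2\,r\mu_0\tau\,(\|\widetilde\E_s\|+\|\widetilde\E_k\|)$. Dividing by $\sigma_{\min}(\C)\gtrsim p_0^2\lambda_{\min}$ cancels one power of $p_0$ and yields exactly $\frac{f(p_0,N)^2 r\mu_0\tau}{p_0\lambda_{\min}}(\|\widetilde\E_s\|+\|\widetilde\E_k\|)$; the probability $1-2/N^3$ comes from the events underlying the lower bounds on $n_{sk}$ and on $\sigma_{\min}(\U^\ast_{s\cap k})$.

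The main obstacle I anticipate is the bookkeeping around the two distinct orthogonal alignments. The factor $\widetilde\O$ aligns the two \emph{empirical} overlap blocks to one another, whereas $\Q_A,\Q_B$ align each empirical eigenbasis to its \emph{own} population target; one must verify that conjugating $\widetilde\O$ by $\Q_A,\Q_B$ produces precisely the rotated Gram $\widehat\C$ that sits within $O(\|\widehat\C-\C\|)$ of $\A_2^\top\B_1$, which is where the equivariance identity and the correct pairing of blocks with alignments must be applied with care. A secondary technical point is guaranteeing that $\widehat\C$ remains well-conditioned so the $\G$-perturbation bound is applicable; this is ensured by the signal-to-noise Assumption \ref{assump: signal to noise ratio}, which makes $\|\widehat\C-\C\|$ negligible relative to $\sigma_{\min}(\C)$.
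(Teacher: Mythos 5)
Your proposal is correct and follows essentially the same route as the paper's proof: the same triangle-inequality bound on $\|\A_2^\top \B_1 - \Q_A^\top \widetilde\A_2^\top \widetilde\B_1 \Q_B\|$ fed by Proposition \ref{prop: Probability Bound} and Lemma \ref{lemma:A,B}, the same Procrustes perturbation inequality (Lemma 23 of \cite{ma2018implicit}), and the same order-$p_0^2\lambda_{\min}$ lower bound on $\sigma_{\min}(\A_2^\top\B_1)$ via Lemma \ref{lemma:lower bound U}. The only cosmetic difference is that you reach that denominator bound through $\lambda_r(\W^\ast\subsksk)$ using $\A_2^\top\B_1=(\A_2^\top\A_2)\O$, whereas the paper bounds $\sigma_{\min}(\V_{s2}^{\ast})\sigma_{\min}(\V_{k1}^{\ast})$ directly; both yield the same rate.
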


\begin{proof}
First, 
\begin{equation*}
\begin{aligned}
&  \| \A_2^\top \B_1 - \Q_A^\top \widetilde \A_2^\top \widetilde \B_1 \Q_B \| \leq \|\A_2\| \| \widetilde \B_1 \Q_B- \B_1 \| + \|\widetilde \B_1 \| \| \widetilde \A_2 \Q_A- \A_2 \| \\
& \leq  p_0  f(p_0, N)^2 \{\|\A \| \| \widetilde \B \Q_B- \B \| + \|\widetilde \B \| \| \widetilde \A \Q_A- \A \| \} \\
& \leq 2 p_0  f(p_0, N)^2 \{\|\A \| \| \widetilde \B \Q_B- \B \| + \| \B \| \| \widetilde \A \Q_A- \A \| \} \\
& \lesssim  p_0  f(p_0, N)^2  \{ \sqrt{ \frac{r \mu_0 \tau \lambda_1(\W_s^{\ast})}{\lambda_r(\W_k^{\ast})} } \| \widetilde \E_k\| +  \sqrt{ \frac{r \mu_0 \tau \lambda_1(\W_k^{\ast})}{\lambda_r(\W_s^{\ast})} } \| \widetilde \E_s\|  \} \\
& \leq p_0  f(p_0, N)^2 r \mu_0 \tau \{ \| \widetilde \E_s\| + \| \widetilde \E_k\|  \} 
\end{aligned}    
\end{equation*}
where the second inequality comes from \eqref{eq:subnorm}, the third inequality comes from  $\| \widetilde \B \| \leq \sqrt{\| \W_k^{\ast} \| + \|\widetilde \E_k \|} \leq \sqrt{2 \| \W_k^{\ast} \| } \leq 2 \| \B \|$ and the last inequality comes from  Lemma \ref{lemma:A,B}. In addition, since
\begin{equation*}
\begin{aligned}
\sigma_{r-1} (\A_2^\top \B_1) & \geq \sigma_r (\A_2^\top \B_1) = \sigma_r ( (\V_{s2}^{\ast} )^\top (\bSigma_s^{\ast})^{1/2} (\bSigma_k^{\ast})^{1/2} \V_{k1}^{\ast}) \\
& \geq \sigma_{\min}(\V_{s2}^{\ast})  \sqrt{ \lambda_r(\bSigma_s^{\ast}) \lambda_r( \bSigma_k^{\ast})} \sigma_{\min}(\V_{k1}^{\ast}) 
\end{aligned}    
\end{equation*}
and again by $p_0 \geq C \sqrt{\mu_0 r \tau \log N /N}$, we will have $p_0 \geq \sqrt{64 r \mu_0 \tau (\log r  + \log 2N^3 )/N}$. Then by Lemma \ref{lemma:lower bound U}, $\sigma_{\min}(\V_{s2}^{\ast}) \sigma_{\min}(\V_{k1}^{\ast}) \geq  p_0/6$ holds with probability $1-2/N^3$. Then
$$\sigma_{r-1} (\A_2^\top \B_1) \geq \sigma_r (\A_2^\top \B_1) \geq p_0^2 \lambda_{\min}/24.$$
So we can apply Lemma 23 of \cite{ma2018implicit} to get 
\begin{equation}
\begin{aligned}
 & \|\Q_A^\top \widetilde \O \Q_B- \O \| \leq \frac{\| \A_2^\top \B_1 - \Q_A^\top \widetilde \A_2^\top \widetilde \B_1 \Q_B \|}{\sigma_{r-1} ( \A_2^\top \B_1) + \sigma_r (\A_2^\top \B_1)}  \\
 & \leq \frac{p_0  f(p_0, N)^2 r \mu_0 \tau}{ 2 p_0^2 \lambda_{\min}/24} \{ \| \widetilde \E_s\| + \| \widetilde \E_k\|  \} \lesssim \frac{f(p_0, N)^2 r \mu_0 \tau}{ p_0 \lambda_{\min}} \{ \| \widetilde \E_s\| + \| \widetilde \E_k\|  \}.
\end{aligned}    
\end{equation}
\end{proof}


\end{document}